\theoremstyle{plain}
\newtheorem{proposition}{Proposition}
\newtheorem{theorem}[proposition]{Theorem}
\newtheorem{lemma}[proposition]{Lemma}
\newtheorem{cor}[proposition]{Corollary}
\theoremstyle{definition}
\newtheorem{definition}[proposition]{Definition}
\newtheorem{example}[proposition]{Example}
\newcommand{\dist}{\textsf{dist}}
\newcommand{\rel}{\textbf{Rel}}
\newcommand{\circw}{\diamond}
\newcommand{\bp}{{\bf P}}
\newcommand{\bc}{{\bf C}}
\newcommand{\bdc}{{\bf DC}}
\newcommand{\bdr}{{\bf DR}}
\newcommand{\bec}{{\bf EC}}
\newcommand{\beq}{{\bf EQ}}
\newcommand{\bo}{{\bf O}}
\newcommand{\bpo}{{\bf PO}}
\newcommand{\bpp}{{\bf PP}}
\newcommand{\bppi}{{\bf PP}^{-1}}
\newcommand{\btpp}{{\bf TPP}}
\newcommand{\btppi}{{\bf TPP}^{-1}}
\newcommand{\bntpp}{{\bf NTPP}}
\newcommand{\bntppi}{{\bf NTPP}^{-1}}
\newcommand{\tpp}[2]{#1\, \btpp\, #2}
\newcommand{\p}[2]{#1\, \bp\, #2}
\newcommand{\pp}[2]{#1\, \bpp\, #2}
\newcommand{\ntpp}[2]{#1\, \bntpp\, #2}
\newcommand{\conn}[2]{#1\, \bc\, #2}
\newcommand{\po}[2]{#1\, \bpo\, #2}
\newcommand{\ec}[2]{#1\, \bec\, #2}
\newcommand{\eq}[2]{#1\, \beq\, #2}
\newcommand{\dc}[2]{#1\, \bdc\, #2}
\newcommand{\dr}[2]{#1\, \bdr\, #2}
\newcommand{\csp}{\textsf{CSP}}
\newcommand{\hornr}{\ensuremath{\widehat{\mathcal{H}}_8}}
\newcommand{\horn}{\ensuremath{\mathcal{H}}}
\newcommand{\clb}{\ensuremath{\widehat{\mathcal{B}}}}
\newcommand{\basic}{\ensuremath{\mathcal{B}}_5}
\newcommand{\qcm}{\mathcal{M}}
\newcommand{\ct}{\operatorname{CT}}
\def\old@comma{,}
    \old@comma\discretionary{}{}{}%
\begin{document}

\begin{frontmatter}

\title{On Redundant Topological Constraints}

\author[uts]{Sanjiang Li\corref{cor1}}
\ead{sanjiang.li@uts.edu.au}
\author[uts]{Zhiguo Long}
\ead{zhiguo.long@student.uts.edu.au}
\author[baidu]{Weiming Liu}
\ead{liuweiming@baidu.com}
\author[melu]{Matt Duckham}
\ead{matt@duckham.org}
\author[melu]{Alan Both}
\ead{aboth@student.unimelb.edu.au}

\cortext[cor1]{Corresponding Author}

\address[uts]{Centre for Quantum Computation \& Intelligent Systems, University of Technology Sydney}
\address[baidu]{Baidu (China) Co., Ltd., Shanghai,  China}
\address[melu]{Department of Infrastructure Engineering, University of Melbourne}

\begin{abstract}
\noindent The Region Connection Calculus (RCC) \cite{RandellCC92} is a well-known calculus for representing part-whole and topological relations. It plays an important role in qualitative spatial reasoning, geographical information science, and ontology. The computational complexity of reasoning with {RCC5 and RCC8 (two fragments of RCC) as well as other qualitative spatial/temporal calculi} has been investigated in depth in the literature. Most of these works focus on the \emph{consistency} of qualitative constraint networks. In this paper, we consider the important problem of redundant {qualitative} constraints. For a set $\Gamma$ of {qualitative} constraints, we say a constraint $(x R y)$ in $\Gamma$ is \emph{redundant} if it {is entailed} by the rest of $\Gamma$. A \emph{prime subnetwork} of $\Gamma$ is a subset of $\Gamma$ which contains no redundant constraints and has the same solution set as $\Gamma$. It is natural to ask how to compute such a prime subnetwork, and when it is unique. 

In this paper, we show that this problem is in general intractable, but becomes tractable if $\Gamma$ is over a tractable subalgebra $\mathcal{S}$ of {a qualitative calculus}. {Furthermore, if} $\mathcal{S}$ is a subalgebra {of RCC5 or RCC8} in which weak composition distributes over nonempty intersections, then $\Gamma$ has a \emph{unique} prime subnetwork, which can be obtained {in cubic time} by removing all redundant constraints simultaneously  from $\Gamma$. As a byproduct, {we show that any path-consistent network over such a distributive subalgebra is weakly globally consistent and minimal. A thorough empirical analysis of the prime subnetwork upon real geographical data sets demonstrates the approach is able to identify significantly more redundant constraints than previously proposed algorithms, especially in constraint networks with larger proportions of partial overlap relations. } 
\end{abstract}

\begin{keyword}
Qualitative spatial reasoning \sep Region connection calculus \sep Redundancy \sep Prime subnetwork \sep Distributive subalgebra
\end{keyword}
\end{frontmatter}

\begin{abstract}
The Region Connection Calculus (RCC) is a well-known calculus for representing part-whole and topological relations. It plays an important role in qualitative spatial reasoning, geographical information science, and ontology. The computational complexity of reasoning with RCC has been investigated in depth in the literature. Most of these works focus on the consistency of RCC constraint networks. In this paper, we consider the important problem of redundant RCC constraints. For a set $\Gamma$ of RCC constraints, we say a constraint $(x R y)$ in $\Gamma$ is \emph{redundant} if {it is entailed} by the rest of $\Gamma$. A \emph{prime subnetwork} of $\Gamma$ is a subset of $\Gamma$ which contains no redundant constraints but has the same solution set as $\Gamma$. It is natural to ask how to compute a prime subnetwork, and when it is unique. In this paper, we show that this problem is in general co-NP hard, but becomes tractable if $\Gamma$ is over a tractable subclass of RCC. If $\mathcal{S}$ is a tractable subclass in which weak composition distributes over nonempty intersections, then we can show that $\Gamma$ has a unique prime subnetwork, which is obtained by removing all redundant constraints from $\Gamma$. As a byproduct, we identify a sufficient condition for a path-consistent network being minimal.
\end{abstract}

\section{Introduction}

Qualitative spatial reasoning is a common subfield of artificial intelligence and geographical information science, and has applications ranging from natural language understanding \cite{Davis12}, robot navigation \cite{ShiJK10,Falomir12}, geographic information systems (GISs) \cite{EgenhoferM95}, sea navigation \cite{Wolter+08}, to high level interpretation of video data \cite{SridharCH11,CohnRS12}. 

{Typically, the qualitative approach represents spatial information by introducing a relation model on a domain of spatial entities, which could be points, line segments, rectangles, or arbitrary regions. In the literature, such a relation model is often called a \emph{qualitative calculus} \cite{LigozatR04}. In the past three decades, dozens of spatial (as well as temporal) qualitative calculi have been proposed in the literature (cf. \cite{CohnR08,RenzN07}). Among these,  Interval Algebra (IA) \cite{Allen83} and the RCC8 algebra \cite{RandellCC92} are widely known as the most influential qualitative calculi for representing qualitative temporal and, respectively, spatial information.  Other well-known qualitative calculi include Point Algebra (PA) \cite{VilainK86},  Cardinal Relation Algebra (CRA) \cite{Ligozat98}, Rectangle Algebra (RA) \cite{Guesgen89}, the RCC5 algebra \cite{RandellCC92}, etc.
}

{Using a qualitative calculus $\qcm$, we represent spatial or temporal information in terms of basic or non-basic relations in $\qcm$, and  formulate a spatial or temporal problem as a set of qualitative constraints (called a \emph{qualitative constraint network}). A qualitative constraint has the form $(x R y)$, which specifies that the two variables $x,y$ are related by the relation $R$. The \emph{consistency problem} is to decide whether a set of qualitative constraints can be satisfied simultaneously. The consistency problem has been investigated in depth for many qualitative calculi in the literature, e.g., \cite{VilainK86,Beek89,Ligozat98,NebelB95,Nebel95,RenzN97,Renz99,DavisGC99,WolterZ00,Liu+10,Kon11,LiuL11,SchockaertL12,LiLW13}.
}

In this paper, we consider the important problem of redundant {qualitative} constraints. Given a set $\Gamma$ of {qualitative} constraints, we say a constraint $(x R y)$ in $\Gamma$ is \emph{redundant} if {it is entailed} by the rest of $\Gamma$, i.e., removing $(x R y)$ from $\Gamma$ will not change the solution set of $\Gamma$. It is natural to ask {when a network contains redundant constraints} and how to get a non-redundant subset without changing the solution set. We call a subset of $\Gamma$ a \emph{prime subnetwork} of $\Gamma$ if it contains no redundant constraints and has the same solution set as $\Gamma$. 

{The redundancy problem (i.e., the problem of determining if a constraint is redundant in a network) was first considered by Egenhofer and Sharma \cite{EgenhoferS93} for topological constraints. They observed that a minimal set (i.e., a prime subnetwork) contains somewhere between $(n-1)$ and $(n^2-n)/2$ nontrivial relations, but did not provide efficient algorithms for deriving such a minimal set even for basic topological constraints. In a recent paper, Wallgr{\"u}n \cite{Wallgrun12} proposed two algorithms to approximately find the prime subnetwork. As observed in \cite{Wallgrun12}, and explored in more detail in Section 6, neither of these two algorithms is guaranteed to provide the optimal simplification.}

{The redundancy problem is also related to the minimal label problem (cf.  \cite{Montanari,ChandraP05,GereviniS11,LiuL12}). A qualitative constraint network $\Gamma$ is called \emph{minimal} if for each constraint $(x R y)$ in $\Gamma$, $R$  is the minimal (i.e., the \emph{strongest}) relation between $x,y$ that is entailed by $\Gamma$. Roughly speaking, the minimal network removes `redundant' or `unnecessary' \emph{relations} from each constraint, while the redundancy problem removes `redundant' or `unnecessary' \emph{constraints} from the constraint network.}

{We show in this paper that it is in general co-NP hard to determine if a constraint is redundant in a \emph{qualitative constraint network}. But if all constraints in $\Gamma$ are taken from a tractable subclass\footnote{Here a subclass $\mathcal{S}$ is \emph{tractable} if the consistency of any constraint network defined over $\mathcal{S}$ can be determined in polynomial time.} $\mathcal{S}$ then a prime subnetwork can be found in polynomial time. For example, if $\mathcal{S}$ is a tractable subclass of RCC5 or RCC8 that contains all basic relations, then we can find a prime subnetwork in $O(n^5)$ time. Furthermore, if $\mathcal{S}$ is a subalgebra of RCC5 or RCC8 in which weak composition distributes over nonempty intersections, then $\Gamma$ has a unique prime subnetwork, which is obtained by removing all redundant constraints from $\Gamma$. We also devise a cubic time algorithm for computing this unique prime subnetwork, which has the same time complexity as the two approximate algorithms of Wallgr{\"u}n \cite{Wallgrun12}.}

{As a byproduct, we identify an important class of subalgebras of qualitative calculi, which, called \emph{distributive subalgebras}, are subalgebras of qualitative calculi in which weak composition distributes over nonempty intersections. We show that any path-consistent network over such a distributive subalgebra is weakly globally consistent and minimal, where \emph{weakly global consistency} is a notion similar to but weaker than the well-known notion of global consistency (cf. Definition~\ref{dfn:minimal-network}). For RCC8, we identify two maximal distributive subalgebras which are not contained in any other distributive subalgebras, one contains 41 relations and the other contains 64. The 41 relations contained in the first subalgebra are exactly the convex RCC8 relations identified in \cite{ChandraP05}.}

{In this paper, we are mainly interested in topological constraints, as these are the most important kind of qualitative spatial information. A large part of our results can easily be transplanted to other qualitative calculi like PA, IA, CRA and RA. In particular, let $\qcm$ be one of PA, IA, CRA and RA and $\mathcal{S}$ a distributive subalgebra of $\qcm$ over which path-consistency implies consistency. Then we can show that any path-consistent network over $\qcm$ is globally consistent and minimal. For ease of presentation, we state and prove these results only for RCC5 and RCC8, but indicate in Table~\ref{tab:results} which result is applicable to which calculus.}

\subsection{Motivation}\label{sec:motivation}
{As} in the case of propositional logic formulas \cite{Liberatore05}, redundancy of {qualitative} constraints ``often leads to unnecessary computation, wasted storage, and may obscure the structure of the problem" \cite{BelovJLM12}.\footnote{It is worth noting that redundancy can also enhance propagation during computation (cf.~\cite{ChoiLS07}).} Finding a prime subnetwork can be useful in at least the following aspects: a) computing and storing the relationships between spatial objects and hence saving space for storage and communication; b) facilitating comparison (or measure the distance) between different constraint networks;  c)  unveiling the essential network structure of a network (e.g., being a tree or a graph with a bounded tree-width); and d) adjusting geometrical objects to meet topological constraints \cite{Wallgrun12}. 


{
To further motivate our discussion, we focus on one specific application to illustrate the application area a.~and briefly explain how redundancy checking or finding a prime subnetwork helps to solve the application areas b--d.}

Figure~\ref{fig:southampton} gives a small example of a set of spatial regions formed by the geographic ``footprints'' associated with placenames in the Southampton area of the UK. The footprints are derived from crowd-sourced data, formed from the convex hull of the sets of coordinate locations at which individuals used the placenames on social media (cf.~\cite{twaroch09.WEAVING,hollenstein10.JOSIS}). {Communicating and reasoning with the qualitative aspects of such data may require the storage and manipulation of large numbers of complex geometries with millions of {vertices} or large constraint networks with millions of relations. }

\begin{figure}[htb]
\centering
\hspace*{-8mm}
\includegraphics[width=1.15\columnwidth]{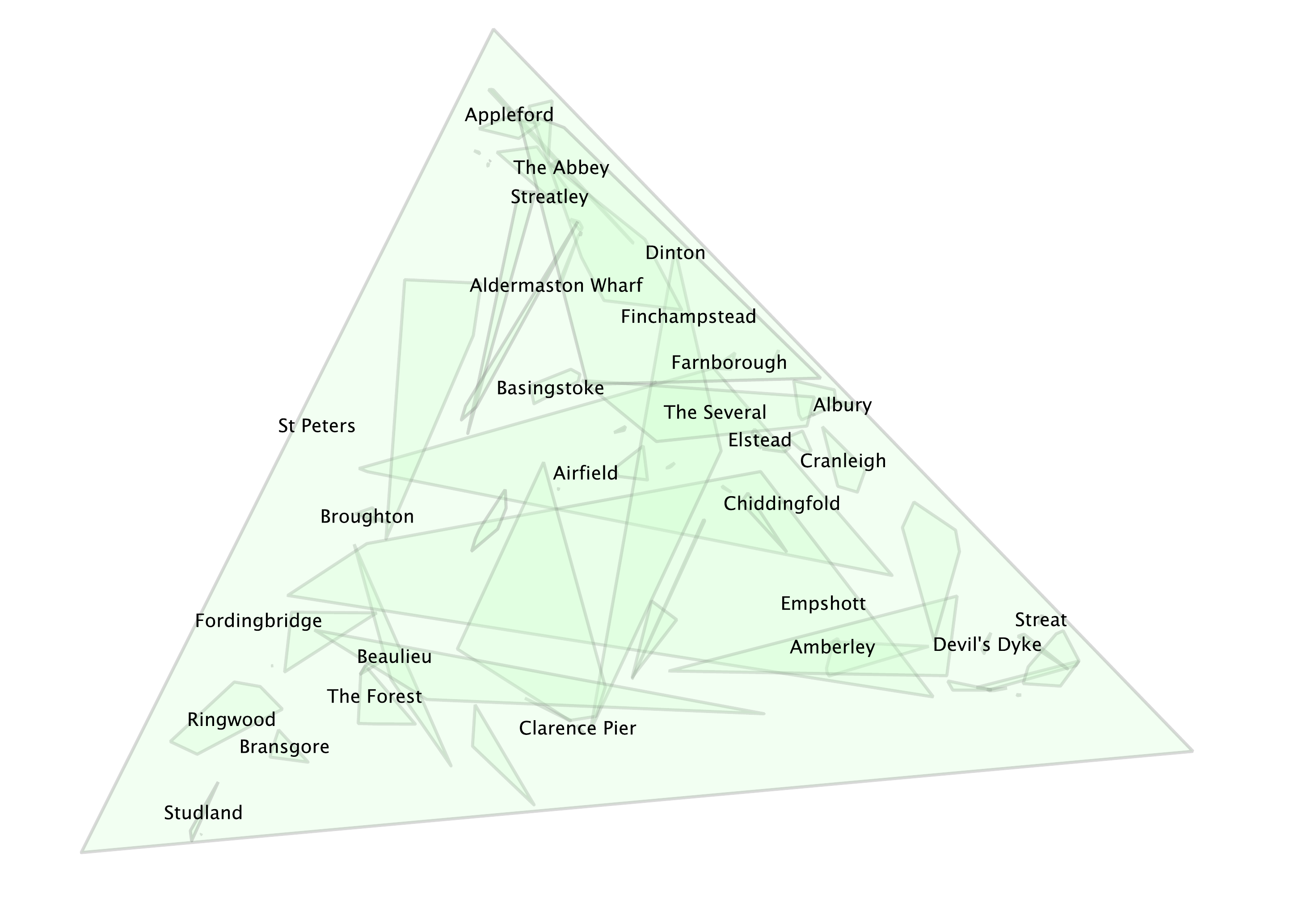}
\caption{Examples of crowd-sourced geographic placename ``footprints'' around Southampton, UK}\label{fig:southampton}
\end{figure}

{Even for} the small example in Figure \ref{fig:southampton}, the $84$ footprints then require $84*83/2=3486$ stored relations. The moderate-sized footprint data set from which Figure \ref{fig:southampton} is adapted contains a total of {$3443$} footprints {which leads} to a constraint network with {$5,925,403$} relations. {Similarly, a moderate-sized geographic data set of only {$1559$} statistical areas in Tasmania, explored further in later sections, contains in total {$3,093,551$} vertices.} In the case of both footprints and statistical areas, many of the relationships can be inferred, and {computing the prime subnetwork can potentially reduce the number of stored relationships to be approximately linear in the number of regions (i.e., average-case space complexity of $O(n)$), as opposed to linear in the number of relations (i.e., space complexity $\Theta(n^2)$)  (see Section \ref{sec:empirical_evaluation}).} In the case of the Southampton constraint network, $1324$ redundant relations lead to a prime subnetwork with only {$2162$} relations needing to be stored. For the full data set, {$5,604,200$} redundant relations lead to a prime subnetwork of just {$321,203$} relations (in contrast to the full constraint network of {almost} 6 million relations).

{
As for application area b., suppose $\Gamma,\Gamma'$ are two constraint networks over the same set of $n$ variables. The similarity of $\Gamma$ and $\Gamma'$ can be measured by computing the distance of each constraint $(x R y)$ in $\Gamma$ with the corresponding constraint $(x R' y)$ in $\Gamma'$ and sum them up (see e.g., \cite{CondottaKS08,WallgruenD10,LiL13}), i.e., 
\begin{align*}
\dist(\Gamma,\Gamma')=\sum\{\dist(R,R'): (x R y)\in \Gamma\ \mbox{and}\  (x R' y) \in \Gamma'\}.
\end{align*} 
Clearly, if $\Gamma$ and $\Gamma'$ are complete networks, we need $O(n^2)$ additions. This number, however, can be significantly reduced if we use prime subnetworks.  Let $\Gamma_{pr}$ and $\Gamma_{pr}'$ be, respectively, prime subnetworks of $\Gamma$ and $\Gamma'$. We define
\begin{align*}
\dist_{pr}(\Gamma,\Gamma')=\sum\{\dist(R,R'): (x R y)\in \Gamma_{pr}\ \mbox{or}\  (x R' y) \in \Gamma_{pr}'\}.
\end{align*} 
That is, the distance of $\Gamma$ and $\Gamma'$ is approximated by $\dist_{pr}(\Gamma,\Gamma')$, which only involves constraints in either $\Gamma_{pr}$ or $\Gamma'_{pr}$. If $\Gamma_{pr}$ and $\Gamma_{pr}'$ are sparse enough, i.e., they contain a small number of (non-redundant) constraints, this will significantly simplify the comparison of two constraint networks. 
}

{
In the case of application area c., a prime subnetwork unveils the essential network structure, or the skeleton, of a network, and the relation between a prime subnetwork and a constraint network is analogous to the relation between a spanning tree/forest \cite{Bollobas98} and a graph. Moreover, by the results of \cite{BodirskyW11} and \cite{HuangLR13}, we know it is tractable to determine the consistency of a constraint network with a bounded tree-width. Therefore, in general, checking the consistency of a prime subnetwork will be easier than checking the consistency of the network itself.
}

{
As for application area d., Wallgr\"{u}n \cite{Wallgrun12} proposed a method for exploiting qualitative spatial reasoning for topological adjustment of spatial data. To simplify the complexity of topological adjustment, he suggested replacing the original constraint network (say $\Gamma$) by an equivalent one (say $\Gamma'$) which has fewer redundant constraints. It is clear that the fewer constraints contained in $\Gamma'$ the better it is. A prime subnetwork is, roughly speaking, an optimal solution and contains fewest constraints. Therefore, replacing $\Gamma$ with a prime subnetwork will significantly simplify the complexity of topological adjustment.    
}

\vspace*{2mm}

The remainder of this paper is structured as follows. We first recall the {RCC5 and RCC8 constraint languages and introduce the notion of distributive subalgebras}  in Section 2, and then define the key notions of redundant constraint and prime subnetwork in Section 3. In Section 4 we show that consistent {RCC5 or RCC8} networks over {distributive subalgebras} have unique prime subnetworks. In Section 5 we compare our results with related works. {In Section 6 we present a detailed evaluation of a practical implementation of our algorithm, in comparison with the approximations proposed by Wallgr\"{u}n \cite{Wallgrun12}.} Section 7 concludes the paper and outlines future research. 

An extended abstract of this paper {was presented in KR-2014 as a short paper}.

\section{{RCC5 and RCC8} Constraint Languages}

{Suppose $U$ is a domain of spatial or temporal entities. Write $\rel(U)$ for the Boolean algebra of binary relations on $U$. A \emph{qualitative calculus} $\qcm$ on $U$ is defined as a finite Boolean subalgebra of $\rel(U)$ which contains the identity relation on $U$ as an atom and is closed under converse, i.e., $R$ is in $\qcm$ iff its converse $$R^{-1}=\{(a,b)\in U\times U: (b,a)\in R\}$$ is in $\qcm$. A relation $\alpha$ in a qualitative calculus $\qcm$ is \emph{basic} if it is an atom in $\qcm$. Well-known qualitative calculi include, among others, PA \cite{VilainK86}, IA \cite{Allen83}, CRA  \cite{Ligozat98}, RA \cite{Guesgen89}, and RCC5 and RCC8 \cite{RandellCC92}.  
}

{Since we are mainly interested in topological constraints, in this section, we only recall the RCC5 and RCC8 constraint languages and  refer the reader to for example \cite{RenzN07,CohnR08,LiLW13} for discussion of the other calculi. For convenience, we denote by RCC5/8  either RCC5 or RCC8.} 

\subsection{RCC5 and RCC8}
{The RCC5/8 constraint language is a fragment of the Region Connection Calculus (RCC) \cite{RandellCC92}, which is perhaps the most influential formalism for spatial relations in artificial intelligence. The RCC is a first order theory based on a binary connectedness relation and has canonical models defined over connected topological spaces \cite{stell2000bca,LiY03a}.}

{Let $X$ be a connected topological space and $U$ the set of nonempty regular closed sets of  $X$.} We call each element in $U$ a \emph{region}. Note that a region may have multiple connected components as well as holes. Write $\bp$ for the binary ``part-of" relation on $U$, i.e., $x\bp y$ if $x\subseteq y$.  Define 
\begin{align*}
\pp{x}{y} \equiv\ & \p{x}{y} \wedge \neg(\p{y}{x})\\
x\bo y \equiv\ & (\exists z)(\p{z}{x}\wedge\p{z}{y})\\
\dr{x}{y} \equiv\ & \neg(x \bo y) \\
\po{x}{y} \equiv\ & x\bo y \wedge \neg (\p{x}{y}) \wedge \neg(\p{y}{x})\\
\eq{x}{y} \equiv\ & \p{x}{y} \wedge \p{y}{x}
\end{align*}
Write $\bppi$ for the converse of $\bpp$. Then
\begin{align}\label{eq:rcc5}
\mathcal{B}_5=\{\bdr, \bpo, \beq, \bpp, \bppi\}
\end{align}
is a \emph{jointly exhaustive and pairwise disjoint} (JEPD) set of relations, i.e., for any two regions $a,b\in U$, $a,b$ is related by exactly one of the above five relations. We call the Boolean algebra generated by these five relations the RCC5 algebra, which consists of all relations that are unions of the five basic relations in \eqref{eq:rcc5}. For convenience, we denote a non-basic RCC5 relation $R$ as the subset of $\mathcal{B}_5$ it contains. For example, we write $\{\bdr,\bpo,\bpp\}$ for the relation $\bdr\cup\bpo\cup\bpp$, and write $\star_5$ for the universal relation $\{\bdr,\bpo,\bpp,\bppi,\beq\}$. 

RCC5 relations are in essence part-whole relations. We next introduce a topological relation model. For two regions $a,b$, we say $a$ is \emph{connected} to $b$, written $\conn{a}{b}$, if $a\cap b\not=\varnothing$. Using $\bc$ and $\bp$, the following topological relations can be defined \cite{RandellCC92}: 
\begin{align*}
\dc{x}{y} \equiv\ & \neg(x \bc y) \\
\ec{x}{y} \equiv\ & \conn{x}{y} \wedge \neg(x \bo y) \\
\tpp{x}{y} \equiv\ & \pp{x}{y} \wedge  (\exists z) (\ec{z}{x}\wedge \ec{z}{y})\\
\ntpp{x}{y} \equiv\ & \pp{x}{y}\wedge \neg(\tpp{x}{y})
\end{align*}
Write $\btppi$ and $\bntppi$ for the converses of $\btpp$ and $\bntpp$. Then 
\begin{align}\label{eq:rcc8}
\mathcal{B}_8 &=\{\bdc,\bec, \bpo, \beq,
\btpp, \bntpp, \btppi,\bntppi\}
\end{align}
is a JEPD set of relations. We call the Boolean algebra generated by these eight relations the RCC8 algebra, which consists of all relations that are unions of the eight basic relations in \eqref{eq:rcc8}. For convenience, we write $\star_8$ for the universal relation consisting of all basic relations in $\mathcal{B}_8$.

\subsection{Weak Composition Table}

{While PA, IA, CRA and RA are all closed under composition, the} composition of two basic RCC5/8 relations is not necessarily a relation in RCC5/8 \cite{DuntschWM01,LiY03a}. For example, the composition of $\bdr$ and itself is not an RCC5 relation. This is because, {for example}, $\bpo$ intersects with, but is not contained in, $\bdr\circ\bdr$, where $\circ$ denotes the relational composition operator. In fact, there are three regions $a,b,c$ such that $a\bpo c$ and $\dr{a}{b}, \dr{b}{c}$. This shows that $\bpo\cap \bdr\circ\bdr$ is nonempty. Let $d,e$ be two regions such that $\po{d}{e} $ and $d\cup e =\mathbb{R}^2$. Clearly, there is no region $f$ such that $\dr{d}{f}$ and $\dr{f}{e}$ hold simultaneously. Therefore $\bpo$ is not contained in $\bdr\circ\bdr$.

For two RCC5/8 relations $ R $ and $ S $, we call the smallest relation in RCC5/8 that contains $ R \circ  S $ the \emph{weak composition} of $ R $ and $ S $, written $ R \diamond S $ \cite{DuntschWM01,LiY03a}. 

The weak  compositions of RCC5 and RCC8 basic relations are summarised in, respectively, Table~\ref{tab:rcc5ct} and Table~\ref{tab:RCC8} (from  \cite{RandellCC92}). For each pair of RCC5/8 basic relations $(\alpha,\beta)$, the table cell corresponding to $(\alpha,\beta)$ contains all basic relations that are contained in $\alpha\diamond\beta$. In fact, suppose $\alpha,\beta,\gamma$ are three basic RCC5/8 relations. Then we have
\begin{align}\label{eq:wc}
\gamma\in \alpha \circw \beta \Leftrightarrow \gamma \cap {(\alpha \circ \beta)} \not=\varnothing.
\end{align}
The weak composition of two (non-basic) RCC5/8 relations $R$ and $S$ can be computed as follows: 
\begin{align*}
R\circw S &=\bigcup\{\alpha\circw \beta: {\alpha\in R, \beta\in S}\}.
\end{align*}
Given $(x R y)$ and $(y S z)$, by definition, we have $(x R\circw S z)$, i.e., $\{(x R y), (y S z)\}$ entails $(x R\circw S z)$.
 
\begin{table*}[ht]
\centering
\scalebox{0.65}{
\begin{tabular} {c|ccccc}
$\diamond$  & $\bdr$ &$\bpo$ & $\bpp$ & $\bppi$ & $\beq$ \\ \hline
$\bdr$ & $\bdr$,$\bpo$,$\bpp$,$\bppi$,$\beq$ &  $\bdr$,$\bpo$,$\bpp$& $\bdr$,$\bpo$,$\bpp$ & $\bdr$ & $\bdr$ \\
$\bpo$ &   $\bdr$,$\bpo$,$\bppi$& $\bdr$,$\bpo$,$\bpp$,$\bppi$,$\beq$ & $\bpo$,$\bpp$& $\bdr$,$\bpo$,$\bppi$& $\bpo$ \\
$\bpp$ & $\bdr$ & $\bdr$,$\bpo$,$\bpp$ & $\bpp$ & $\bdr$,$\bpo$,$\bpp$,$\bppi$,$\beq$ & $\bpp$ \\
$\bppi$ & $\bdr$,$\bpo$,$\bppi$ & $\bpo$,$\bppi$ & $\bpo$,$\bpp$,$\bppi$,$\beq$ & $\bppi$  &$\bppi$ \\
$\beq$ & $\bdr$ &$\bpo$ & $\bpp$ & $\bppi$ & $\beq$
\end{tabular}
}
\caption{ Composition table for {\rm RCC5} relations} \label{tab:rcc5ct}
\end{table*}

 \begin{table*}[t]
 \vskip 3mm \centering
 {
  \scalebox{0.6}{
 \begin{tabular}{c|c|c|c|c|c|c|c|c}
 $\diamond$&\bdc&\bec&\bpo&\btpp&\bntpp&$\btppi$&$\bntppi$&\beq\\
 &&&&&&&&\\
 \hline &\bdc,\bec,\bpo   &\bdc,\bec &\bdc,\bec&\bdc,\bec&\bdc,\bec&  &&\\
      \bdc&\btpp,\bntpp   &\bpo    &\bpo   &\bpo   &\bpo   &\bdc&\bdc&\bdc\\
        &$\btppi$,\beq     &\btpp   &\btpp  &\btpp  &\btpp  &  &&\\
        &$\bntppi$      &\bntpp  &\bntpp &\bntpp &\bntpp &  &&\\
 \hline &\bdc,\bec,\bpo&\bdc,\bec,\bpo&\bdc,\bec,\bpo&\bec,\bpo&\bpo  &\bdc&&\\
      \bec&$\btppi$    &\beq,\btpp   &\btpp     &\btpp  &\btpp &\bec&\bdc&\bec\\
        &$\bntppi$   &$\btppi$    &\bntpp    &\bntpp &\bntpp&  &&\\
 \hline &\bdc,\bec,\bpo &\bdc,\bec,\bpo&\bdc,\bec,\bpo&\bpo&\bpo&\bdc,\bec,\bpo&\bdc,\bec,\bpo&\\
      \bpo&$\btppi$&$\btppi$&\btpp,$\btppi$,\beq&\btpp&\btpp&$\btppi$&$\btppi$&\bpo\\
        &$\bntppi$&$\bntppi$&\bntpp,$\bntppi$&\bntpp&\bntpp&$\bntppi$&$\bntppi$&\\
 \hline &  &\bdc&\bdc,\bec &\btpp &    &\bdc,\bec,\bpo  &\bdc,\bec,\bpo&\\
     \btpp&\bdc&\bec&\bpo,\btpp&\bntpp&\bntpp&\beq,\btpp     &$\btppi$&\btpp\\
        &  &  &\bntpp  &    &    &$\btppi$      &$\bntppi$&\\
 \hline &  &  &\bdc,\bec&    &    &\bdc,\bec&\bdc,\bec,\bpo&\\
    \bntpp&\bdc&\bdc&\bpo   &\bntpp&\bntpp&\bpo   &\btpp,$\btppi$&\bntpp\\
        &  &  &\btpp  &    &    &\btpp  &\bntpp,\beq&\\
        &  &  &\bntpp &    &    &\bntpp &$\bntppi$&\\
 \hline &\bdc,\bec,\bpo&\bec,\bpo&\bpo   &\bpo,\beq&\bpo  &$\btppi$ &&\\
    $\btppi$&$\btppi$    &$\btppi$ &$\btppi$ &\btpp &\btpp &     &$\bntppi$&$\btppi$\\
        &$\bntppi$   &$\bntppi$&$\bntppi$&$\btppi$&\bntpp&$\bntppi$&&\\
 \hline &\bdc,\bec,\bpo&\bpo   &\bpo   &\bpo   &\bpo,\btpp,\beq & & &\\
   $\bntppi$&$\btppi$    &$\btppi$ &$\btppi$ &$\btppi$ &\bntpp,$\btppi$&$\bntppi$&$\bntppi$&$\bntppi$\\        &$\bntppi$   &$\bntppi$&$\bntppi$&$\bntppi$&$\bntppi$    &  & &\\
 \hline \beq&\bdc&\bec&\bpo&\btpp&\bntpp&$\btppi$&$\bntppi$&\beq
 \end{tabular}
 }
}
 \caption{Composition table for {\rm RCC8} relations}\label{tab:RCC8}
 \end{table*}

From the RCC5 composition table, {the following result is clear.}

\begin{lemma}\label{lemma:po/dr/composition}
For any nonempty RCC5 relation $R$, we have
\begin{align*}
\bpo\in {\bpo\circw R},\ \bpo\in {R \circw \bpo},\ \mbox{and}\  \bdr\in {\bdr\circw R},\ \bdr\in {R\circw\bdr}.
\end{align*}
\end{lemma}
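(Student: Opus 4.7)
The plan is to reduce the statement to a finite case check on Table~\ref{tab:rcc5ct}, using the distributivity of weak composition over union (which is built into its definition $R\diamond S = \bigcup\{\alpha\diamond\beta : \alpha\in R, \beta\in S\}$). Concretely, since $R$ is nonempty, it contains at least one basic relation $\beta\in\basic$, and by distributivity we have the inclusions
\begin{align*}
\bpo\diamond \beta \subseteq \bpo\diamond R, \qquad \beta\diamond \bpo \subseteq R\diamond\bpo,\\
\bdr\diamond \beta \subseteq \bdr\diamond R, \qquad \beta\diamond \bdr \subseteq R\diamond\bdr.
\end{align*}
So it suffices to verify, for each of the five basic relations $\beta\in\{\bdr,\bpo,\bpp,\bppi,\beq\}$, that $\bpo$ belongs to both $\bpo\diamond\beta$ and $\beta\diamond\bpo$, and that $\bdr$ belongs to both $\bdr\diamond\beta$ and $\beta\diamond\bdr$.

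The first step is to inspect the $\bpo$-row of Table~\ref{tab:rcc5ct}: its five entries are $\{\bdr,\bpo,\bppi\}$, $\{\bdr,\bpo,\bpp,\bppi,\beq\}$, $\{\bpo,\bpp\}$, $\{\bdr,\bpo,\bppi\}$, $\{\bpo\}$, each of which contains $\bpo$. The $\bpo$-column is read off analogously and also contains $\bpo$ in every cell. This handles the two $\bpo$-statements. The second step is to perform the same inspection for the $\bdr$-row ($\{\bdr,\bpo,\bpp,\bppi,\beq\}$, $\{\bdr,\bpo,\bpp\}$, $\{\bdr,\bpo,\bpp\}$, $\{\bdr\}$, $\{\bdr\}$) and the $\bdr$-column, confirming that $\bdr$ appears in each.

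There is no real obstacle here: the content of the lemma is entirely captured by the structure of Table~\ref{tab:rcc5ct}, and the only ``step'' beyond table inspection is the trivial observation that nonemptiness of $R$ lets us pick a witness basic relation to apply distributivity. I would present the proof as one short paragraph recording the distributivity reduction, followed by a one-line remark that the ten required memberships ($\bpo$ or $\bdr$ in each of ten cells) can be read off directly from the composition table.
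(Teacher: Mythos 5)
Your proof is correct and takes the same route the paper intends: the paper offers no explicit argument beyond remarking that the lemma ``is clear'' from Table~\ref{tab:rcc5ct}, and your reduction via the union-definition of weak composition followed by inspection of the $\bpo$- and $\bdr$-rows and columns is exactly the omitted verification. All ten membership checks you list match the table entries.
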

The following result will be used later.
\begin{proposition} [{from \cite{Duntsch05}}]
\label{prop:ra}
With the weak composition operation $\circw${, the converse operation $^{-1}$, and the identity relation \beq,} RCC5 and RCC8 are relation algebras. {In particular, the weak composition operation $\circw$ is associative.}  
Moreover, for RCC5/8 relations $R,S,T$, we have the following cycle law
\begin{align}\label{eq:cycle}
(R\circw S)\cap T\not=\varnothing  \Leftrightarrow (R^{-1} \circw T)\cap S\not=\varnothing &\Leftrightarrow (T\circw S^{-1})\cap R\not=\varnothing.
\end{align}
\end{proposition}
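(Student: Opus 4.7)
My plan is to verify the relation algebra axioms for both RCC5 and RCC8 equipped with the triple $(\circw, {}^{-1}, \beq)$, in four stages. The Boolean algebra structure is free from the setup in Section~2: each of RCC5 and RCC8 is defined as a finite Boolean subalgebra of $\rel(U)$, so the Boolean operations inherit their axioms wholesale. The identity and converse axioms are essentially immediate. Since $\beq$ is the identity relation on $U$, we have $R\circ\beq = \beq\circ R = R$ in $\rel(U)$, and because $R$ is already in the algebra its weak composition with $\beq$ coincides with $R$ by minimality. Involution $(R^{-1})^{-1}=R$ and $(R\circw S)^{-1} = S^{-1}\circw R^{-1}$ both follow from the corresponding identities in $\rel(U)$ together with the observation that taking converses commutes with the ``smallest containing RCC5/8 relation'' closure, since $\alpha^{-1}$ is an atom whenever $\alpha$ is.

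The central technical bridge I would establish next is the following: for any RCC5/8 relations $R, S, T$,
\begin{align*}
(R\circw S)\cap T\neq\varnothing \ &\Leftrightarrow\  (R\circ S)\cap T\neq\varnothing.
\end{align*}
The direction $\Leftarrow$ is immediate because $R\circ S\subseteq R\circw S$. For $\Rightarrow$, use that every RCC5/8 relation is a union of its atoms: pick an atom $\delta\subseteq T$ with $\delta\subseteq R\circw S$; by the expansion $R\circw S=\bigcup\{\alpha\circw\beta:\alpha\subseteq R,\ \beta\subseteq S\}$ there are atoms $\alpha\subseteq R$, $\beta\subseteq S$ with $\delta\in\alpha\circw\beta$, and equation~\eqref{eq:wc} gives $\delta\cap(\alpha\circ\beta)\neq\varnothing$, whence $(R\circ S)\cap T\supseteq\delta\cap(\alpha\circ\beta)\neq\varnothing$.

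With this bridge in hand, the cycle law for $\circw$ reduces immediately to the classical cycle law for the true relational composition $\circ$ in $\rel(U)$: apply the bridge to each of the three expressions in the statement, chain through the three equivalent conditions for $\circ$, and then apply the bridge in reverse. The main obstacle I anticipate is associativity of $\circw$, since weak composition is only an over-approximation of true composition and need not be associative in an arbitrary qualitative calculus. The route I would try first is atom-wise: show that for all basic $\alpha,\beta,\gamma$ the relations $(\alpha\circw\beta)\circw\gamma$ and $\alpha\circw(\beta\circw\gamma)$ contain the same atoms. Using the bridge, membership of an atom $\delta$ on either side unfolds to a statement of the form ``there exist witnesses in $U\times U$ realizing $\delta$ together with an intermediate factoring through $\alpha,\beta,\gamma$''; the non-trivial point is to realize a \emph{single} joint witness rather than two unrelated ones, which in the RCC setting amounts to a geometric realizability claim about the canonical topological model. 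Ultimately this can be verified either by a finite inspection of Tables~\ref{tab:rcc5ct} and~\ref{tab:RCC8} or by appealing to the realizability arguments given in~\cite{Duntsch05}. Once associativity is established, all the relation algebra axioms are in place and the cycle law for $\circw$ follows from the bridge as described.
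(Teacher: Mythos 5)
The paper offers no proof of Proposition~\ref{prop:ra}: it is imported wholesale from \cite{Duntsch05}, so there is no in-paper argument to compare against. Judged on its own, your proposal is essentially sound and, for the cycle law, more informative than a bare citation. The key device --- the equivalence $(R\circw S)\cap T\not=\varnothing \Leftrightarrow (R\circ S)\cap T\not=\varnothing$, proved by passing to a shared atom and invoking \eqref{eq:wc} --- is correct (two elements of a finite Boolean subalgebra of $\rel(U)$ meet iff they share an atom), and it does reduce the cycle law for $\circw$ to the elementary three-point-witness characterisation of the cycle law for genuine composition in $\rel(U)$, which holds trivially there. The identity, involution, and distributivity-over-join axioms are likewise unproblematic for the reasons you give, the last because $\circw$ is by definition completely additive over atoms.

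The one place where your argument does not close is associativity, and you correctly diagnose why: unfolding $\delta\subseteq(\alpha\circw\beta)\circw\gamma$ via the bridge produces a witness for the intermediate pair lying in $\alpha\circw\beta$ that need not itself factor through $\alpha$ and $\beta$, so the two existential witnesses cannot be glued without either a realizability argument specific to the RCC models or a finite inspection of all triples of atoms (which does suffice, since atom-wise associativity lifts to all relations by complete additivity --- $125$ triples for RCC5, $512$ for RCC8). Deferring this step to \cite{Duntsch05} or to table inspection is acceptable here, as it is exactly what the paper does for the entire proposition; but note that associativity is the only axiom that genuinely depends on RCC-specific facts, and your remark that weak composition can fail to be associative in other qualitative calculi is the right warning that this step cannot be obtained by general nonsense.
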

{Figure~\ref{fig:cycle} gives an illustration of the cycle law.} 
\begin{figure}[h]
\centering
\begin{tabular}{c}
\includegraphics[width=.8\columnwidth]{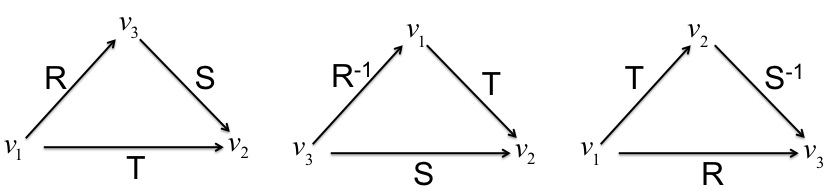} \\
\end{tabular}
\caption{Illustration of the cycle law.}\label{fig:cycle}
\end{figure}
{In the following, we assume $\circw$ takes precedence over $\cap$.}
\subsection{{Qualitative} Constraint Network}

{Let $\qcm$ be a qualitative calculus with domain $U$. A qualitative constraint over $\qcm$ has the form $(x  R  y)$, where $x,y$ are variables taking values from $U$ and $ R $ is a relation (not necessarily basic) in $\qcm$. Given a set $\Gamma$ of qualitative constraints over variables $V=\{v_1,v_2,...,v_n\}$ and an assignment $\sigma: V \to U$, we say $\sigma$ is a \emph{solution} of $\Gamma$  if $(\sigma(v_i),\sigma(v_j))$ satisfies the constraints in $\Gamma$ that relate $v_i$ to $v_j$ for any $1\leq i,j\leq n$. We say $\Gamma$ is \emph{consistent} or \emph{satisfiable} if it has a solution.} 

Without loss of generality, we assume
\begin{itemize}
\item for each pair of variables $v_i,v_j$, there is at most one constraint in $\Gamma$ that relates $v_i$ to $v_j$; 
\item for each pair of variables $v_i,v_j$, if there is no constraint in $\Gamma$ that relates $v_i$ to $v_j$, we say $v_i$ is related to $v_j$ by $\star$, the universal relation {in $\qcm$};
\item for each pair of variables $v_i,v_j$, the constraint in $\Gamma$ that relates $v_i$ to $v_j$ is the converse of the constraint that relates $v_j$ to $v_i$;
\item for each variable $v_i$, the constraint in $\Gamma$ that relates $v_i$ to itself is the identity relation {(e.g., \beq\  in RCC5/8)}.
\end{itemize}
In this sense, we call $\Gamma$ a \emph{network} of constraints, and denote by for example $R_{ij}$ the constraint that relates $v_i$ to $v_j$.  Let $\Gamma=\{v_i R_{ij} v_j{:\ 1\leq i,j\leq n} \}$ and $\Gamma'=\{v_i R'_{ij} v_j: {1\leq i,j\leq n} \}$ be two constraint networks {over $\qcm$}. We say $\Gamma$ and $\Gamma'$ are \emph{equivalent} if they have the same set of solutions; and say $\Gamma$ \emph{refines} $\Gamma'$ if $R_{ij}\subseteq R'_{ij}$ for all $(i,j)$. We say {a constraint network} $\Gamma$ is a \emph{basic} network if each constraint is either  a basic relation or the universe relation; and say a basic network is \emph{complete} if there are no universal relations. {In this paper, we also call every complete basic network that refines $\Gamma$ a \emph{scenario} of $\Gamma$.}

{
Suppose $\mathcal{S}$ is a subclass of $\qcm$.  We say a constraint network $\Gamma=\{v_i R_{ij} v_j: {1\leq i,j\leq n} \}$ is over $\mathcal{S}$ if $R_{ij}\in \mathcal{S}$ for every pair of variables $v_i,v_j$.  The consistency problem over $\mathcal{S}$, written as $\csp(\mathcal{S})$, is the decision problem of the consistency of an arbitrary constraint network over $\mathcal{S}$. The consistency problem over {PA}  (i.e., $\csp$(PA)) is in P \cite{VilainK86,Beek89} and the consistency problems over IA, CRA, RA and RCC5/8 are NP-complete \cite{NebelB95,Ligozat98,BalbianiCC99,RenzN97}. We say $\mathcal{S}$ is a tractable subclass of $\qcm$ if $\csp(\mathcal{S})$ is tractable. It is well-known that these calculi all have large tractable subclasses, in particular, RCC8 has three maximal tractable subclasses that contain all basic relations \cite{RenzN97,Renz99}  and RCC5 has only one \cite{Jonsson97}.
}

The consistency of {a qualitative constraint network} can be approximately determined by a local consistency algorithm. We say a network $\Gamma=\{v_i R_{ij}v_j: { 1\leq i,j\leq n} \}$ is \emph{path-consistent}\footnote{{For PA, IA, CRA and RA, since weak composition is composition, this definition of path-consistency is equivalent to that for finite constraint satisfaction problems \cite{Montanari}; for RCC5/8, the two definitions are different mainly in the use of weak composition instead of composition.}} if for every $1\leq i,j,k\leq n$, we have\footnote{{Recall we have assumed that $R_{ji}$ is the converse of $R_{ij}$ for each pair of variables $v_i,v_j$.}}
\begin{align*}
{\varnothing\not=}R_{ij} \subseteq R_{ik}\diamond R_{kj}.
\end{align*}
In general, path-consistency can be enforced by calling the following rule until an empty constraint occurs (then $\Gamma$ is inconsistent) or the network becomes stable\footnote{{Under the assumption that initially we have $R_{ji}=R_{ij}^{-1}$ for every $i\not=j$, we do not need to call updating rules like $R_{ji} \leftarrow R_{ij}^{-1}$, as this can be achieved by calling $R_{ji} \leftarrow (R_{jk}\diamond R_{ki}) \cap R_{ji}$ after $R_{ij} \leftarrow (R_{ik}\diamond R_{kj}) \cap R_{ij}$ is called. This will simplify the discussion in, for example, the proof of Lemma~\ref{lem:int-paths}.}}
\begin{align*}
R_{ij} \leftarrow (R_{ik}\diamond R_{kj}) \cap R_{ij},
\end{align*}
where $1\leq i,j,k\leq n$ are arbitrary. A cubic time algorithm, henceforth called the \emph{path-consistency algorithm} or PCA, has been devised to enforce path-consistency. For any {qualitative constraint network} $\Gamma$, the PCA either detects inconsistency of $\Gamma$ or returns a path-consistent network, written $\Gamma_p$, which is equivalent to $\Gamma$ and also known as the \emph{algebraic closure} or \emph{a-closure} of $\Gamma$ \cite{LigozatR04}. It is easy to see that in this case $\Gamma_p$ refines $\Gamma$, i.e., we have $S_{ij} \subseteq R_{ij}$ for each constraint $(v_i S_{ij} v_j)$ in $\Gamma_p$.

{For RCC5/8 constraint networks, we have}
 \begin{proposition} [{from \cite{Renz99}}]\label{prop:horn5}
 Let $\mathcal{S}$ be a tractable subclass of RCC5/8 which contains all basic relations. An RCC5/8 network $\Gamma$ over $\mathcal{S}$ is consistent if applying PCA to $\Gamma$ does not detect inconsistency.
 \end{proposition}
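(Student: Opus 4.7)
The plan is to refine any path-consistent network over $\mathcal{S}$ to a consistent scenario, and then realise that scenario geometrically. After applying PCA we obtain the a-closure $\Gamma_p=\{v_i R^p_{ij} v_j\}$, which contains no empty constraint and satisfies $R^p_{ij}\subseteq R^p_{ik}\diamond R^p_{kj}$ for all $i,j,k$. Our task is to show that $\Gamma_p$, and hence $\Gamma$, admits a solution.

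The natural approach is a refinement procedure: iterate over pairs $(i,j)$ with $|R^p_{ij}|>1$ and replace $R^p_{ij}$ by $\{\alpha\}$ for some well-chosen basic $\alpha\in R^p_{ij}$, reapplying PCA after each choice. Since $\mathcal{S}$ contains all basic relations by hypothesis, each singleton $\{\alpha\}$ sits inside $\mathcal{S}$; so if the process never produces an empty constraint and keeps every updated relation inside $\mathcal{S}$, it terminates with a path-consistent complete basic network, i.e., a scenario $\sigma$ refining $\Gamma$. Two things must then be established: (i) a basic $\alpha$ can always be chosen so that PCA does not derive $\varnothing$, and (ii) any path-consistent scenario over $\mathcal{S}$ admits a topological realisation.

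For (ii) one invokes the standard result that every path-consistent complete basic RCC5/8 network is realisable by concrete regions (e.g., by an explicit construction in $\mathbb{R}^2$ matching the prescribed basic relations, as in the canonical models of \cite{RenzN97,RenzN07}). Point (i) is the crux and is subclass-specific: for the three maximal tractable subclasses of RCC8 isolated by Renz and Nebel (including $\hornr$) and the unique maximal tractable subclass of RCC5 of Jonsson--Drakengren, one fixes an ordering of $\mathcal{B}_8$ (resp.\ $\mathcal{B}_5$) and always picks the first $\alpha\in R^p_{ij}$ compatible with $\mathcal{S}$; the algebraic structure of $\mathcal{S}$ together with the weak composition tables (Tables~\ref{tab:rcc5ct} and~\ref{tab:RCC8}) guarantees that this canonical refinement preserves path-consistency and keeps every constraint in $\mathcal{S}$.

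The hardest step is exactly this canonical-refinement lemma: in general a greedy basic refinement may destroy path-consistency even when the starting network is path-consistent, so ruling this out requires a careful case analysis of the weak composition tables and of the closure properties of $\mathcal{S}$ under intersection and weak composition with basic relations. This is the content of the Renz--Nebel and Jonsson--Drakengren theorems that the proposition cites; reproving them from scratch would mean reintroducing the definitions of $\hornr$ and its RCC5 counterpart and verifying, relation by relation, that the selected basic refinement remains within $\mathcal{S}$ and consistent with every triangle. Since the paper only needs the resulting fact, I would simply cite those two results rather than redo the case analysis.
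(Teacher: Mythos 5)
Your proposal is correct in outline and ends up in the same place as the paper, which offers no proof of its own for this proposition and simply imports it from \cite{Renz99}: the refinement-to-a-consistent-scenario strategy plus realisability of path-consistent atomic networks is indeed the standard argument, and deferring the subclass-specific case analysis to the cited results is exactly what the paper does. The only point worth keeping in mind is that the proposition is stated for an \emph{arbitrary} tractable subclass containing all basic relations, so one also needs the (cited) fact that every such subclass is contained in one of the maximal tractable subclasses for which the refinement lemma is proved.
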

 
In particular, we have 
 \begin{proposition} [{from \cite{Nebel95}}]\label{prop:basic}
 A basic RCC5/8 network $\Gamma$ is consistent if it is path-consistent.
 \end{proposition}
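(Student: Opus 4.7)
My plan is to prove the statement in two steps, first by extending a path-consistent basic network to a path-consistent scenario, and then by realizing every path-consistent scenario by actual regions. Alternatively, one can observe that the statement is a corollary of Proposition~\ref{prop:horn5}: since every basic RCC5/8 relation lies inside a maximal tractable subclass of RCC5/8 that contains all basic relations (for instance, the Horn fragments identified in the works of Renz, Nebel, and J{\"o}nsson--Drakengren), any basic network $\Gamma$ is automatically a network over such a tractable subclass $\mathcal{S}$; since $\Gamma$ is already path-consistent, applying the PCA to $\Gamma$ does not detect inconsistency, and hence $\Gamma$ is consistent by Proposition~\ref{prop:horn5}. I sketch the more direct two-step plan below, since it does not rely on the tractability classification.

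For Step 1, I would show that any path-consistent basic network $\Gamma$ can be refined to a path-consistent scenario $\Gamma^{*}$. Since the constraints of a basic network are either basic relations or the universal relation $\star$, it suffices to show that whenever some constraint $(v_i R_{ij} v_j)$ equals $\star$ (or, more generally, is non-basic), one can pick a basic relation $\alpha\in R_{ij}$, substitute it for $R_{ij}$, and re-enforce path-consistency via the PCA without producing an empty constraint. The key lemma is therefore: for any path-consistent RCC5/8 network with a non-basic constraint, there exists a basic refinement of that constraint whose closure remains path-consistent. This can be established by case analysis on the weak composition tables (Tables~\ref{tab:rcc5ct} and~\ref{tab:RCC8}), using the cycle law of Proposition~\ref{prop:ra} to guarantee compatibility of the chosen basic relation with each triangle $(i,j,k)$. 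Iterating this refinement finitely many times produces the desired scenario $\Gamma^{*}$.

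For Step 2, I would explicitly realize any path-consistent scenario by constructing regions in a suitable topological space, such as $\mathbb{R}^2$. For RCC5 the atomic scenarios correspond to set-theoretic part-whole arrangements and can be realized directly by assigning each variable a subset of a finite ground set consistently with the atomic relations. For RCC8 the construction is more delicate, as one must additionally realize the distinction between external connection and tangential versus non-tangential proper parts; a standard approach constructs each region as a union of closed disks (or more general regular closed sets) with prescribed boundary-touching patterns dictated by the scenario. The main obstacle is this Step~2: the explicit geometric realization of a path-consistent RCC8 scenario requires a careful case analysis to simultaneously realize all $n(n-1)/2$ atomic constraints, and the construction is intricate. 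In contrast, Step~1 is comparatively routine once the refinement lemma is established.
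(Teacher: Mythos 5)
The paper offers no proof of this proposition: it is imported from Nebel (1995) and stated immediately after Proposition~\ref{prop:horn5} with the phrase ``In particular'', i.e., precisely as the corollary you describe in your ``alternatively'' paragraph. That short derivation is correct and is effectively the paper's own reading of the result: every basic relation and the universal relation lie in a tractable subclass containing all basic relations (e.g.\ $\horn_5$ or $\hornr$), a path-consistent network has no empty constraints, so applying the PCA detects no inconsistency and Proposition~\ref{prop:horn5} applies. Had you stopped there, the attempt would match the paper (modulo the fact that Proposition~\ref{prop:horn5} is itself a cited black box).

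Your preferred two-step direct route, however, has a genuine gap as written. Step~2 --- constructing regular closed regions that realize an arbitrary path-consistent RCC8 scenario --- is the entire mathematical content of Nebel's theorem, and you leave it as an acknowledged obstacle rather than carrying it out; ``a standard approach constructs each region as a union of closed disks'' is a pointer to the literature, not an argument. Step~1 also needs more than you give it: choosing a basic $\alpha\subseteq R_{ij}$ that is compatible with each triangle $(i,k,j)$ via the cycle law does not by itself guarantee that re-enforcing path-consistency on the refined network yields no empty constraint, since the propagation is global; the fact that a path-consistent RCC5/8 network can be refined edge by edge to a path-consistent scenario is itself a theorem (used, e.g., in the proof of Theorem~\ref{thm:wglobal} via Corollary~\ref{cor:distr-pc}), not a routine table check. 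Finally, note that in the literature Proposition~\ref{prop:horn5} is proved \emph{using} the realizability of path-consistent basic networks, so in a self-contained development your corollary direction would be circular; within this paper, where both results are cited, it is unobjectionable.
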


{Consistency is closely related to the notions of minimal network (cf.  \cite{ChandraP05,GereviniS11,LiuL12}) and global consistency.}
\begin{definition} \label{dfn:minimal-network}
{
Let $\qcm$ be a qualitative calculus with domain $U$. Suppose $\Gamma=\{v_i T_{ij} v_j: {1\leq i,j\leq n}\}$ is a qualitative constraint network over $\qcm$ and $V=\{v_1,...,v_n\}$. For a pair of variables $v_i,v_j \in V$ ($i\not=j$) and a basic relation $\alpha$ in $T_{ij}$, we say $\alpha$ is \emph{feasible} if there exists a solution $(a_1,a_2,\ldots, a_n)$ in $U$ of $\Gamma$ such that $(a_i,a_j)$ is an instance of $\alpha$. We say $\Gamma$ is \emph{minimal} if $\alpha$ is feasible for every pair of variables $v_i,v_j$ ($i\not=j$) and every basic relation $\alpha$ in $T_{ij}$.}

{
We say $\Gamma$ is \emph{weakly globally consistent} (\emph{globally consistent}, respectively) if any consistent scenario (solution, respectively) of $\Gamma{\downarrow}_{V'}$ can be extended to a consistent scenario (solution, respectively) of $\Gamma$, where $V'$ is any nonempty subset of $V$ and $\Gamma{\downarrow}_{V'}$ is the restriction of $\Gamma$ to $V'$.}   
\end{definition}
{
The notion of weakly global consistency is weaker than the notion of global consistency. The latter requires that every partial solution can be extended to obtain a global solution, which is too strong for even complete basic RCC5/8 networks.  But the two notions are equivalent for PA, IA, CRA and RA as consistent basic networks over these calculi are all globally consistent.}

{
While every consistent RCC5/8 constraint network has a unique minimal network, it is in general NP-hard to compute it \cite{LiuL12}. The following result shows that every weakly globally consistent network is also minimal.}
{
\begin{proposition}\label{wgcISmin}
 Let $\qcm$ be a qualitative calculus with domain $U$. Suppose $\Gamma=\{v_i T_{ij} v_j: {1\leq i,j\leq n}\}$ is a qualitative constraint network over $\qcm$. If $\Gamma$ is weakly globally consistent, then it is minimal.
 \end{proposition}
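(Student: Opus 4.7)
The plan is to reduce to the two-variable restriction and invoke weak global consistency directly. Fix a pair of distinct variables $v_i, v_j$ and a basic relation $\alpha$ in $T_{ij}$; I want to produce a solution of $\Gamma$ witnessing $(a_i,a_j) \in \alpha$. Take $V' = \{v_i, v_j\}$, so that $\Gamma{\downarrow}_{V'}$ consists essentially of the single nontrivial constraint $v_i T_{ij} v_j$ (together with the identity constraints on $v_i$ and $v_j$ and the converse on $(v_j,v_i)$).

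Next, I would form the complete basic network $\Gamma' = \{v_i\, \alpha\, v_j, v_j\, \alpha^{-1}\, v_i, v_i\,\mathrm{id}\, v_i, v_j\,\mathrm{id}\, v_j\}$ and observe that it is a scenario of $\Gamma{\downarrow}_{V'}$, since $\alpha \subseteq T_{ij}$. To see that $\Gamma'$ is \emph{consistent}, note that $\alpha$ is an atom of the qualitative calculus $\qcm$ and hence a nonempty binary relation on $U$: pick any $(a,b)\in\alpha$ and set $\sigma(v_i)=a$, $\sigma(v_j)=b$; this $\sigma$ is a solution of $\Gamma'$.

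By the weak global consistency of $\Gamma$ applied to the subset $V'\subseteq V$, the consistent scenario $\Gamma'$ of $\Gamma{\downarrow}_{V'}$ extends to a consistent scenario $\Gamma^*$ of $\Gamma$. Since $\Gamma^*$ is consistent, it has a solution $(a_1,\ldots,a_n)\in U^n$. As $\Gamma^*$ refines $\Gamma$, this tuple is in particular a solution of $\Gamma$. Since $\Gamma^*$ extends $\Gamma'$, the constraint between $v_i$ and $v_j$ in $\Gamma^*$ is exactly $\alpha$, so $(a_i,a_j)\in\alpha$. Hence $\alpha$ is feasible, and since $\alpha$ and the pair $v_i,v_j$ were arbitrary, $\Gamma$ is minimal.

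The only potential obstacle is checking that the two-variable scenario $\Gamma'$ is consistent in $U$, i.e.\ that every basic relation of $\qcm$ is realized by some pair of elements in the intended domain; this is immediate for RCC5/8 (and for every other standard qualitative calculus) because the basic relations are, by construction, nonempty atoms of $\rel(U)$. No other step requires any machinery beyond unpacking the definitions of \emph{scenario}, \emph{weakly globally consistent}, and \emph{minimal}.
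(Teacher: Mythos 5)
Your proof is correct and follows essentially the same route as the paper's: restrict to the two-element set $\{v_i,v_j\}$, observe that $\{v_i\,\alpha\,v_j\}$ is a consistent scenario of the restriction, and extend it via weak global consistency to a consistent scenario of $\Gamma$ whose solution witnesses the feasibility of $\alpha$. The only difference is that you spell out the consistency of the two-variable scenario (nonemptiness of the atom $\alpha$), which the paper dismisses as clear.
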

 \begin{proof}
 For every pair of variables $v_i,v_j$ ($i\not=j$) and every basic relation $\alpha$ in $T_{ij}$, it is clear that $\{v_i\alpha v_j\}$ is a consistent scenario of $\Gamma{\downarrow}_{\{v_i,v_j\}}$. Because $\Gamma$ is weakly globally consistent, we can extend this to a consistent scenario of $\Gamma$. In other words, there exists a solution $(a_1,a_2,\ldots, a_n)$ of $\Gamma$ in $U$ such that $(a_i,a_j)$ is an instance of $\alpha$. This shows that $\Gamma$ is minimal.
 \end{proof}
}

In what follows, we write $\Gamma_m$ for the minimal network of $\Gamma$, and  $\Gamma_p$ for the a-closure of $\Gamma$. 

\subsection{Distributive Subalgebra}
As mentioned before, RCC5 has a unique maximal tractable subclass which contains all basic relations \cite{RenzN97,Jonsson97}. This subclass, written $\horn_5$, contains all RCC5 relations except 
\begin{align*}
\{\bpp,\bppi\}, &\{\bpp,\bppi,\beq\}, \{\bdr,\bpp,\bppi\}, 
\{\bdr,\bpp,\bppi,\beq\}.
\end{align*}
Write $\clb_5$ for the closure of $\basic$ under converse, intersection, and weak composition  in RCC5. Then $\clb_5$ contains all basic relations as well as 
\begin{align*}
&\{\bpo,\bpp\},\{\bpo,\bppi\},\{\bpo,\bpp,\bppi,\beq\},\\
&\{\bdr,\bpo,\bpp\},\{\bdr,\bpo,\bppi\},\{\bdr,\bpo\},\star_5,
\end{align*}
where $\star_5=\{\bdr,\bpo,\bpp,\bppi,\beq\}$. It is interesting to note that in $\clb_5$ the weak composition operation is \emph{distributive} over nonempty intersections in the following sense.
\begin{lemma}\label{lemma:distri}
Let $R,S,T$ be three relations in $\clb_5$. Suppose $S\cap T$ is nonempty. Then we have 
\begin{align*}
R\circw(S\cap T) = R\circw S \cap R\circw T\quad\mbox{and}\quad (S\cap T)\circw R = S\circw R \cap T\circw R.
\end{align*}
\end{lemma}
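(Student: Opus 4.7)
My plan is to split the identity into an easy inclusion and a harder one, and reduce the harder one to a Helly-type combinatorial property of $\clb_5$ on basic relations via the cycle law.

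First, the inclusion $R\circw(S\cap T)\subseteq R\circw S\cap R\circw T$ is immediate from the monotonicity of $\circw$, since $S\cap T\subseteq S$ and $S\cap T\subseteq T$. The whole work lies in the reverse inclusion $R\circw S\cap R\circw T\subseteq R\circw(S\cap T)$ (assuming $S\cap T\neq\varnothing$).

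For this, I would pick an arbitrary basic relation $\gamma\in R\circw S\cap R\circw T$ and show $\gamma\in R\circw(S\cap T)$. Set $U:=R^{-1}\circw\gamma$; since $\clb_5$ is by definition closed under converse and weak composition, $U\in\clb_5$. Applying the cycle law \eqref{eq:cycle} with $T:=\gamma$ (a basic relation, so $(X\circw Y)\cap\gamma\neq\varnothing$ is the same as $\gamma\in X\circw Y$) gives the three equivalences
\begin{align*}
\gamma\in R\circw S &\Longleftrightarrow U\cap S\neq\varnothing,\\
\gamma\in R\circw T &\Longleftrightarrow U\cap T\neq\varnothing,\\
\gamma\in R\circw(S\cap T) &\Longleftrightarrow U\cap S\cap T\neq\varnothing.
\end{align*}
Together with the hypothesis $S\cap T\neq\varnothing$, the claim is thus reduced to the following Helly-type property on $\clb_5$: for any three relations $U,S,T\in\clb_5$ that pairwise intersect, the triple intersection $U\cap S\cap T$ is nonempty.

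The verification of this Helly property is where the specific structure of $\clb_5$ enters, and I regard it as the main (but easy) obstacle. I would proceed by cases. If any one of $U,S,T$ is a singleton basic relation $\{\alpha\}$, then pairwise nonempty intersection with the other two forces $\alpha$ to lie in both, giving $\alpha\in U\cap S\cap T$. Otherwise all three are non-singleton; inspecting the explicit list of relations in $\clb_5$ (namely $\{\bpo,\bpp\}$, $\{\bpo,\bppi\}$, $\{\bdr,\bpo\}$, $\{\bdr,\bpo,\bpp\}$, $\{\bdr,\bpo,\bppi\}$, $\{\bpo,\bpp,\bppi,\beq\}$, and $\star_5$) reveals that \emph{every} non-singleton relation in $\clb_5$ contains $\bpo$, so $\bpo\in U\cap S\cap T$. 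This uniform presence of $\bpo$ is exactly what makes $\clb_5$ a distributive subalgebra, and I would emphasize it as the structural reason behind the lemma.

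Finally, the second identity $(S\cap T)\circw R=S\circw R\cap T\circw R$ I would deduce from the first by taking converses and using $(X\circw Y)^{-1}=Y^{-1}\circw X^{-1}$ together with the fact that $\clb_5$ is closed under converse: applying the already-established first identity to $R^{-1},S^{-1},T^{-1}\in\clb_5$ and then inverting both sides yields the desired equality.
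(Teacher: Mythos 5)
Your proof is correct, but it takes a genuinely different route from the paper: the paper states this lemma without a written proof and, as its appendix explains, the distributivity of $\clb_5$ (and of the larger subalgebras) is verified by an exhaustive mechanical check over all triples of relations. Your argument instead isolates the conceptual content. The inclusion $R\circw(S\cap T)\subseteq R\circw S\cap R\circw T$ is indeed just monotonicity, and your use of the cycle law of Proposition~\ref{prop:ra} correctly converts the reverse inclusion, basic relation by basic relation, into the Helly-type statement that any three pairwise-intersecting relations in $\clb_5$ have nonempty triple intersection; your case analysis (singletons are forced into the other two; all seven non-basic relations of $\clb_5$ contain $\bpo$) establishes that property directly, and the converse trick for the second identity is sound since $(X\circw Y)^{-1}=Y^{-1}\circw X^{-1}$ holds in the relation algebra of Proposition~\ref{prop:ra}. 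Two points are worth making explicit. First, your Helly property is precisely Lemma~\ref{lem:3x} restricted to $\clb_5$, but the paper \emph{derives} that lemma from distributivity, so your independent verification is essential to avoid circularity --- and you do supply it. Second, the trade-off: your structural observation that every non-singleton relation of $\clb_5$ contains $\bpo$ is specific to $\clb_5$ and fails already for the maximal distributive subalgebra $\mathcal{D}^5_{14}$ (which contains $\{\bpp,\beq\}$), so extending your argument there would require a finer Helly case analysis, whereas the paper's brute-force check applies uniformly; on the other hand, your reduction explains \emph{why} distributivity holds and makes the equivalence between distributivity and the pairwise-intersection property transparent.
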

{We note the requirement that $S\cap T$ is nonempty is necessary, as we have for example $\{\bdr\}\circw \{\bdr \}\cap \{\bdr\}\circw \{\bpo\} =\{\bdr,\bpo,\bpp\}\not=\varnothing$ while $\{\bdr\}\circw(\{\bdr\} \cap \{\bpo\})=\{\bdr\}\circw \varnothing=\varnothing$.}

In what follows, we call such a subclass a distributive subalgebra. Formally, we have 
\begin{definition} \label{dfn:distributive}
{Let $\qcm$ be a qualitative calculus. A subclass $\mathcal{S}$ of $\qcm$ is called} a \emph{distributive subalgebra}  if 
\begin{itemize}
\item $\mathcal{S}$ contains all basic relations; and
\item $\mathcal{S}$ is closed under converse, weak composition, and intersection; and
\item weak composition distributes over nonempty intersections of relations in $\mathcal{S}$.
\end{itemize}
\end{definition}
{Write $\widehat{\mathcal{B}}_l$ for the closure of $\mathcal{B}_l$ in RCC$l$ $(l=5,8)$ under converse, weak composition, and intersection. It is straightforward to check that both $\widehat{\mathcal{B}}_5$ and $\widehat{\mathcal{B}}_8$ are distributive subalgebras. This shows that the above definition is well-defined and  every distributive subalgebra of RCC$l$ contains $\widehat{\mathcal{B}}_l$ as a subclass. }

{We say a distributive subalgebra $\mathcal{S}$ is \emph{maximal} if there is no other distributive subalgebra that properly contains $\mathcal{S}$. To find all maximal distributive subalgebras of RCC5 and RCC8, we start with $\widehat{\mathcal{B}}_l$ and then try to add other relations to this subalgebra to get larger distributive subalgebras. It turns out that both RCC5 and RCC8 have only two maximal distributive subalgebras. In~\ref{app:dis} we list all relations contained in these subalgebras, and explain how we find these subalgebras and why there are no other maximal distributive subalgebras.
}

{The next lemma summarises one useful property of distributive subalgebras.}  

\begin{lemma}\label{lem:3x}
Let $\mathcal{S}$ be a distributive subalgebra of RCC5/8. Suppose $R,S,T$ are three relations in $\mathcal{S}$. Then $R\cap S\cap T=\varnothing$ iff $R\cap S=\varnothing$, or $R\cap T=\varnothing$, or $S\cap T=\varnothing$.
\end{lemma}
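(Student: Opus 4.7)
The forward direction is immediate: if any pairwise intersection is empty, then the triple intersection is certainly empty. So the content is in the reverse direction: assuming $R\cap S, R\cap T, S\cap T$ are all nonempty, I need to conclude $R\cap S\cap T\neq\varnothing$.

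The key idea is to express ``$R\cap S\cap T \neq \varnothing$'' as a statement about weak compositions containing $\beq$, so that the distributivity hypothesis can be brought to bear. Using the cycle law from Proposition~\ref{prop:ra} with $T=\beq$ (noting $X\diamond \beq = X$ and $\beq\diamond X = X$), one obtains the handy equivalence
\begin{align*}
\beq \in X\diamond Y^{-1} \iff X\cap Y \neq \varnothing
\end{align*}
for any two relations $X,Y$ in the calculus. Applying this to $X = R\cap S$ and $Y = S\cap T$ reduces the goal to showing $\beq \in (R\cap S)\diamond(S\cap T)^{-1}$.

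Now I would expand this weak composition using Lemma~\ref{lemma:distri} (which applies because $R\cap S$, $S\cap T$, and hence $(S\cap T)^{-1} = S^{-1}\cap T^{-1}$, are nonempty and lie in $\mathcal{S}$ by closure under intersection and converse). Distributing on the right and then on the left gives
\begin{align*}
(R\cap S)\diamond(S^{-1}\cap T^{-1})
 = (R\diamond S^{-1})\cap(S\diamond S^{-1})\cap(R\diamond T^{-1})\cap(S\diamond T^{-1}).
\end{align*}
By the equivalence above, each of the four factors contains $\beq$: $R\diamond S^{-1}$ and $R\diamond T^{-1}$ and $S\diamond T^{-1}$ by the three nonemptiness hypotheses, and $S\diamond S^{-1}$ since $S$ itself is nonempty. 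Hence $\beq$ lies in the intersection, as desired, and the cycle law then yields $(R\cap S)\cap(S\cap T) = R\cap S\cap T\neq\varnothing$.

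The main obstacle is conceptual rather than technical: the distributivity hypothesis only applies to nonempty intersections, so one cannot apply it directly to $R\cap S\cap T$ (whose emptiness we are trying to rule out). The trick is to repackage the triple-intersection question as the question of whether a certain composition contains the identity, which is precisely the form distributivity is designed to handle. Once this repackaging is in place the rest is mechanical, and I expect the whole proof to fit in a few lines.
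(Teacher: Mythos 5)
Your proof is correct and rests on the same key idea as the paper's: translating nonemptiness of an intersection into membership of $\beq$ in a weak composition, and then invoking distributivity over a nonempty intersection. The paper's version is slightly leaner---it establishes the equivalence $P\cap Q\neq\varnothing \Leftrightarrow \beq\in Q^{-1}\diamond P$ semantically rather than via the cycle law, and needs only a single application of distributivity, to $T^{-1}\diamond(R\cap S)$, using $\beq\in T^{-1}\diamond R$ and $\beq\in T^{-1}\diamond S$---but your four-factor expansion of $(R\cap S)\diamond(S\cap T)^{-1}$ is an equally valid execution of the same strategy.
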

\begin{proof}
{We only need to show the ``only if" part.} 

{
For two RCC5/8 relations $P,Q$, we first note that $P\cap Q\not=\varnothing$ iff $\beq\in Q^{-1}\circw P$. In fact, from $P\cap Q\not=\varnothing$, we know there exist two regions $a,b$ such that $(a,b)\in P\cap Q$. This implies that $(b,b)\in Q^{-1}\circ P$ as $(b,a)\in Q^{-1}$ and $(a,b)\in P$. Hence $\beq\cap Q^{-1}\circ P$ is nonempty and, by the definition of weak composition and \eqref{eq:wc}, $\beq\in Q^{-1} \circw P$. On the other hand, if $\beq\in Q^{-1}\circw P$, then $\beq\cap Q^{-1}\circ P$ is nonempty. This implies that there exist two regions $a,b$ such that $(b,a)\in Q^{-1}$ and $(a,b)\in P$. Thus $(a,b)\in P\cap Q$ and, hence, $P\cap Q\not=\varnothing$. 
}

{
Suppose $R\cap S\cap T$ is empty but $R\cap S, R\cap T$ and $S\cap T$ are all nonempty. By the above property, we have $\beq\in T^{-1}\circw R$ and $\beq\in T^{-1}\circw S$. Because $R,S,T$ are relations in the distributive subalgebra $\mathcal{S}$ and $R\cap S\not=\varnothing$, we know $$\beq\in (T^{-1} \circw R) \cap (T^{-1} \circw S) = T^{-1} \circw (R\cap S).$$
Thus $T^{-1} \circw (R\cap S)\not=\varnothing$ and, hence, $R\cap S\cap T\not=\varnothing$ . A contradiction.
}
\end{proof}
{The above result does not hold in general. For example, consider the RCC5 relations  $R=\{\bpo,\bpp\}$, $S=\{\bdr,\bpp\}$, $T=\{\bdr,\bpo,\bppi\}$. $R,S,T$ are all in $\horn_5$ but $S$ is not in any distributive subalgebra of RCC5. We have $R\cap S\cap T=\varnothing$ but $R\cap S=\{\bpp\}$, $R\cap T=\{\bpo\}$, and $S\cap T=\{\bdr\}$ are all nonempty.
} 

\vspace*{2mm}

It is worth noting that each distributive subalgebra of RCC5 is contained in $\horn_5$, the maximal tractable subclass of RCC5  identified in \cite{RenzN97,Jonsson97}, and each distributive subalgebra of RCC8 is contained in $\hornr$, one of the three maximal subclasses of RCC8  identified in \cite{Renz99}. In particular, by Proposition~\ref{prop:horn5}, we have 
\begin{cor}\label{cor:distr-pc}
Let $\mathcal{S}$ be a {distributive} subalgebra of RCC5/8. Then every path-consistent network over $\mathcal{S}$ is consistent.
\end{cor}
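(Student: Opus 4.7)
The plan is to derive the corollary as an immediate consequence of Proposition~\ref{prop:horn5}, using the inclusion remark made in the paragraph just before the statement. First I would fix notation: let $\mathcal{S}$ be a distributive subalgebra of RCC$l$ for $l\in\{5,8\}$, and let $\Gamma=\{v_i R_{ij} v_j:1\leq i,j\leq n\}$ be a path-consistent network over $\mathcal{S}$, meaning in particular that every $R_{ij}$ is a nonempty relation in $\mathcal{S}$ and $R_{ij}\subseteq R_{ik}\diamond R_{kj}$ for all triples.

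Next I would invoke the containment already asserted in the paragraph preceding the corollary: every distributive subalgebra of RCC5 is contained in $\horn_5$, and every distributive subalgebra of RCC8 is contained in $\hornr$. Both $\horn_5$ and $\hornr$ are tractable subclasses that contain all basic relations (by \cite{RenzN97,Jonsson97} and \cite{Renz99}, respectively), so $\Gamma$ can equally well be viewed as a network over one of these maximal tractable subclasses. Since $\Gamma$ is already path-consistent, running the path-consistency algorithm on $\Gamma$ modifies nothing and in particular never produces the empty relation, so PCA does not detect inconsistency.

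Then a direct application of Proposition~\ref{prop:horn5} to $\horn_5$ or $\hornr$ yields that $\Gamma$ is consistent, which is exactly the claim. The verification is therefore essentially a two-line argument once the inclusion $\mathcal{S}\subseteq\horn_5$ or $\mathcal{S}\subseteq\hornr$ is granted.

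The only non-trivial ingredient is this inclusion, which is where the real work lies. Establishing it amounts to an exhaustive enumeration: start from $\widehat{\mathcal{B}}_l$, close under converse, weak composition, and intersection while testing the distributivity condition, and verify that every relation that survives lies inside $\horn_5$ (respectively $\hornr$). I would treat this step as a separate lemma (as the paper in fact does, deferring the case analysis to Appendix~\ref{app:dis}) rather than try to reprove it inside the proof of the corollary; with that lemma in hand the corollary itself is a one-line deduction from Proposition~\ref{prop:horn5}.
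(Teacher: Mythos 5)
Your proposal is correct and follows essentially the same route as the paper: the corollary is stated as an immediate consequence of the containment of every distributive subalgebra in $\horn_5$ (resp.\ $\hornr$) together with Proposition~\ref{prop:horn5}, with the containment itself established by the enumeration deferred to \ref{app:dis}. Your added observation that PCA leaves an already path-consistent network unchanged (so no inconsistency is detected) is exactly the small bridging step the paper leaves implicit.
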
 
\section{Redundant Constraint and Prime Subnetwork}

In this section we first give definitions of redundant constraints and prime subnetworks and then discuss how to find a prime subnetwork in general. 

\begin{definition}
{Let $\qcm$ be a qualitative calculus with domain $U$.} Suppose $\Gamma$ is {a qualitative constraint network} over variables $V=\{v_1,...,v_n\}$. We say $\Gamma$ \emph{entails} a constraint $(v_i R  v_j)$, written $\Gamma\models (v_i R  v_j)$, if for every solution $\{a_1,...,a_n\}$ of $\Gamma$ in $U$ we have $(a_i,a_j)\in  R $.  A constraint $(v_i R  v_j)$ in $\Gamma$ is \emph{redundant} if $\Gamma\setminus\{(v_i R  v_j)\}$ entails $(v_i R  v_j)$.
We say  $\Gamma$ is \emph{reducible} if it has a redundant constraint, and say $\Gamma$ is \emph{irreducible} {or \emph{prime}} if otherwise. We say a subset $\Gamma'$  of $\Gamma$ is a \emph{prime} subnetwork of $\Gamma$ if $\Gamma'$ is irreducible and equivalent to $\Gamma$.
\end{definition}
Note that each universal constraint $(v_i \star v_j)$ in $\Gamma$ is, by definition, always a redundant constraint in $\Gamma$. We call this a \emph{trivial} redundant constraint. {In the following, we give an example of non-trivial redundant RCC5 constraints.}

\begin{figure}[htb]
\centering
\begin{tabular}{c}
\includegraphics[width=.25\columnwidth]{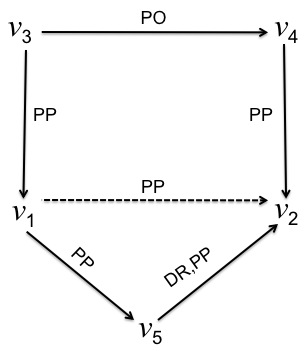} \\
\end{tabular}
\caption{An RCC5 network, where $(\pp{v_1}{v_2})$ is redundant.}\label{fig:pp-fig}
\end{figure}

\begin{example}
Suppose
\begin{align*}
\Gamma &=\{  \pp{v_1}{v_2},\pp{v_1}{v_5},\pp{v_3}{v_1}, 
\pp{v_4}{v_2}, v_5\{\bdr,\bpp\} v_2,v_3\bpo v_4\}.
\end{align*}
Then $(\pp{v_1}{v_2})$ is redundant. This is because, after enforcing path-consistency to $\Gamma\setminus\{(\pp{v_1}{v_2})\}$, we have $(\pp{v_5}{v_2})$ {and hence} $(\pp{v_1}{v_2})$. This shows that $\Gamma\setminus\{(\pp{v_1}{v_2})\}$ entails  $(\pp{v_1}{v_2})$. Moreover, $(\pp{v_1}{v_2})$ is the only non-trivial redundant constraint in $\Gamma$ and $\Gamma\setminus\{(\pp{v_1}{v_2})\}$ is the {unique} prime subnetwork of $\Gamma$.
\end{example}

Given {a qualitative constraint network} $\Gamma$, a very interesting question is, \emph{how to find a prime subnetwork of $\Gamma$}? This problem is clearly at least as hard as determining if $\Gamma$ is reducible. Similar to the case of propositional logic formulae \cite{Liberatore05}, we have the following result for RCC5/8.

\begin{proposition}\label{prop:redundant-co-npc} 
Let $\Gamma$ be an RCC5/8 network and suppose $(x R y)$ is a constraint in $\Gamma$. It is co-NP-complete to decide if $(x  R y)$ is redundant in $\Gamma$. 
\end{proposition}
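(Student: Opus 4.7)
The plan is to show both directions of the co-NP-completeness claim, handling membership in co-NP first and then co-NP-hardness by reduction from the consistency problem for RCC5/8.

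For membership in co-NP, I will give a polynomial-size witness for \emph{non}-redundancy. Observe that $(x R y)$ is not redundant in $\Gamma$ iff there is a solution of $\Gamma\setminus\{(x R y)\}$ in which $(x,y)$ is an instance of some basic relation $\alpha$ with $\alpha\notin R$. Since solutions live in an infinite domain, I cannot use a solution directly as certificate, but I can use a complete basic scenario $\Gamma^\ast$ that refines $\Gamma\setminus\{(x R y)\}$ and assigns some basic $\alpha\notin R$ to the pair $(x,y)$. By Proposition~\ref{prop:basic} the consistency of $\Gamma^\ast$ can be checked in polynomial time by running the path-consistency algorithm. Thus guessing $\Gamma^\ast$ together with $\alpha$ and verifying by PCA places non-redundancy in NP, so redundancy is in co-NP.

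For co-NP-hardness I will reduce from the (NP-complete) consistency problem for RCC5/8 in the form ``is $\Gamma'$ inconsistent?'', which is co-NP-complete. Given an RCC5/8 network $\Gamma'$ on variables $V$, introduce two fresh variables $x,y\notin V$, pick any non-universal relation $R$ (e.g. $R=\{\beq\}$), and let
\[
\Gamma \;=\; \Gamma' \cup \{(x R y)\},
\]
where all other constraints involving $x$ or $y$ are taken to be $\star$ (so that $\Gamma$ is a legitimate network under our standing assumptions). This construction is clearly polynomial.

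The correctness of the reduction is the key point. If $\Gamma'$ is inconsistent, then so is $\Gamma\setminus\{(x R y)\}$, since it only adds two unconstrained variables to $\Gamma'$; an inconsistent network vacuously entails every constraint, so $(x R y)$ is redundant. Conversely, if $\Gamma'$ is consistent with solution $\sigma\colon V\to U$, extend $\sigma$ to $x,y$ by choosing two regions related by some basic relation $\alpha\notin R$ (possible because $R\neq\star$ and the universe contains witnesses for every basic relation). This gives a solution of $\Gamma\setminus\{(x R y)\}$ that violates $(x R y)$, so $(x R y)$ is not redundant. Hence $(x R y)$ is redundant in $\Gamma$ iff $\Gamma'$ is inconsistent, establishing co-NP-hardness and completing the proof. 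The only mildly delicate step is the co-NP membership argument, since one must phrase the certificate as a basic scenario rather than an actual solution in order to have polynomial size; everything else is essentially a bookkeeping reduction.
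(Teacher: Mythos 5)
Your proof is correct, and both halves ultimately rest on the same underlying fact as the paper's proof --- that RCC5/8 consistency is NP-complete --- but the route is genuinely different in its details. The paper's argument is organised around a single observation: $(x\, R\, y)$ is redundant in $\Gamma$ iff $(\Gamma\setminus\{(x\, R\, y)\})\cup\{(x\, R^c\, y)\}$ is inconsistent, where $R^c$ is the Boolean complement of $R$. This one equivalence gives co-NP membership immediately (consistency is in NP), and, read in the other direction, it also gives hardness: an arbitrary network $\Gamma$ over $V$ with $x,y\in V$ is inconsistent iff $(x\, R^c\, y)$ is redundant in $(\Gamma\setminus\{(x\, R\, y)\})\cup\{(x\, R^c\, y)\}$, so the reduction reuses two variables already present and replaces one constraint by its complement. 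You avoid the complement entirely. For membership you unfold the NP certificate explicitly --- a consistent complete basic scenario refining $\Gamma\setminus\{(x\, R\, y)\}$ that realises some basic $\alpha\notin R$ between $x$ and $y$, verified via Proposition~\ref{prop:basic} --- which is really the same certificate the paper's inconsistency test would guess, just stated directly. For hardness you adjoin two fresh, completely unconstrained variables $x,y$ and a single non-universal constraint $(x\, R\, y)$ between them; redundancy of that constraint then reduces to whether $\Gamma'$ has any solution at all (an inconsistent network entails everything, a consistent one extends to a solution violating $R$). Your reduction is arguably cleaner to verify, since it needs no reasoning about the complemented constraint interacting with the rest of the network, only the realisability of some basic relation $\alpha\notin R$ in the domain (which the paper assumes throughout); the paper's version is more economical in that one identity serves both directions. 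Both are valid polynomial many-one reductions from the co-NP-complete inconsistency problem, so the proposal stands as written.
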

\begin{proof}
First of all, we note that $(x R y)$ is redundant in $\Gamma$ iff $(\Gamma\setminus\{(x R y)\}) \cup\{x R^c y\}$ is inconsistent, where $R^c$ is the complement of $R$. Since it is NP-complete to decide if an RCC5/8 network is consistent, we know this redundancy problem (i.e., the problem of determining if a constraint is redundant in a network) is in co-NP. On the other hand, it is easy to construct a polynomial many-one reduction from the inconsistency problem of RCC5/8 to the redundancy problem. {Fix two variables $x,y$. Suppose  $\Gamma$ is  an arbitrary RCC5/8 network over $V$ and $x,y$ are two variables in $V$. Then $\Gamma$ is inconsistent iff $\Gamma\setminus \{(x R y)\} \models (x R^c y)$ iff $( x R^c y)$ is redundant in $(\Gamma\setminus\{(x R y)\}) \cup\{x R^c y\}$.} This shows that the redundancy problem is co-NP complete.
\end{proof}
{Similarly, we can show that the redundancy problems for IA, CRA, and RA are also co-NP-complete and, because the consistency problem of PA is in P,  the redundancy problem for PA is  in P.}

To determine if a network $\Gamma$ is reducible, we need in the worst case check for $O(n^2)$ constraints in $\Gamma$ whether they are redundant in $\Gamma$. By the above proposition, this is a decision problem in  $\Delta_2^P${, the class of problems solvable in polynomial time with an oracle for some NP-complete problem.}  Finding a prime subnetwork of $\Gamma$ is more complicated. A naive method is to remove redundant constraints iteratively from $\Gamma$ until we get an irreducible network. Let $c_1,c_2,\ldots,c_k$ be the sequence of {all non-trivial constraints} in $\Gamma$. Write $\Gamma_0=\Gamma$, and define
\begin{align}\label{eq:Gamma_i}
\Gamma_{i+1}=
\left\{
\begin{array}{ll}
\Gamma_i\setminus\{c_{i+1}\} &\quad \mbox{if $c_{i+1}$ is redundant in $\Gamma_i$;} \\
\Gamma_i &\quad \mbox{if otherwise.}
\end{array}
\right.
\end{align}
for $0\leq i\leq k-1$. Then it is easy to show that $\Gamma_k$ is a prime subnetwork of $\Gamma$. Suppose we have an oracle which can tell if a constraint is redundant in a network. Then $\Gamma_k$ can be constructed in $O(n^2)$ time. {We note that the construction of the prime subnetwork $\Gamma_k$ depends on the  order  of the constraints $c_1,c_2, \ldots,c_k$.} 

Despite that it is in general co-NP-complete to determine if a constraint is redundant, we have a polynomial time procedure if the constraints are all taken from a tractable subclass of RCC5/8.

\begin{proposition}\label{prop:redundant-horn-5}
Let $\mathcal{S}$ be a tractable subclass of RCC5/8 that contains all basic relations. Suppose $\Gamma$ is a network over $\mathcal{S}$. Then in $O(n^3)$ time we can determine whether a constraint is redundant in $\Gamma$ and in $O(n^5)$ time  find all redundant constraints of $\Gamma$. In addition, a prime subnetwork for $\Gamma$ can be found in $O(n^5)$ time.
\end{proposition}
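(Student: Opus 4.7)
The plan is to reduce each redundancy test to a bounded number of consistency checks on networks that remain inside $\mathcal{S}$. Recall from the proof of Proposition~\ref{prop:redundant-co-npc} that a constraint $(v_i R v_j)\in \Gamma$ is redundant iff the network $(\Gamma\setminus\{(v_i R v_j)\})\cup\{(v_i R^c v_j)\}$ is inconsistent. Using $R^c$ directly is problematic because $R^c$ need not lie in $\mathcal{S}$. I would circumvent this by observing that the latter network is inconsistent iff, for every basic relation $\alpha\notin R$, the network
\[
\Gamma_{ij}^{\alpha} \df (\Gamma\setminus\{(v_i R v_j)\})\cup\{(v_i \alpha v_j)\}
\]
is inconsistent. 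Since $\mathcal{S}$ contains all basic relations by hypothesis, each $\Gamma_{ij}^{\alpha}$ is a network over $\mathcal{S}$, so Proposition~\ref{prop:horn5} lets us decide its consistency by PCA in $O(n^3)$ time.

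Because the number of basic relations is bounded by a constant (5 for RCC5, 8 for RCC8), testing redundancy of a single constraint requires only a constant number of PCA runs, giving the claimed $O(n^3)$ bound. For the second claim, there are at most $n(n-1)/2 = O(n^2)$ non-trivial constraints in $\Gamma$, so scanning all of them for redundancy costs $O(n^5)$ in total.

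For the prime subnetwork, I would invoke the iterative deletion scheme of equation~(\ref{eq:Gamma_i}): fix any ordering $c_1,\ldots,c_k$ of the non-trivial constraints, with $k=O(n^2)$, and set $\Gamma_{i+1}=\Gamma_i\setminus\{c_{i+1}\}$ if $c_{i+1}$ is redundant in $\Gamma_i$, otherwise $\Gamma_{i+1}=\Gamma_i$. Each step involves a single redundancy test on a network still over $\mathcal{S}$ (deletion never introduces new relations), hence an $O(n^3)$ PCA call, for an overall $O(n^5)$ time bound. Equivalence $\Gamma_k$ with $\Gamma$ holds by an easy induction on $i$, since removing a constraint that is entailed by the rest of the network preserves the solution set. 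The subtle point, which is the only obstacle worth highlighting, is arguing that $\Gamma_k$ itself is irreducible: if $c_{i+1}$ survives into $\Gamma_k$ then it was non-redundant in $\Gamma_i$, meaning $\Gamma_i\setminus\{c_{i+1}\}\not\models c_{i+1}$; because entailment is monotone in the constraint set and $\Gamma_k\setminus\{c_{i+1}\}\subseteq \Gamma_i\setminus\{c_{i+1}\}$, it follows that $\Gamma_k\setminus\{c_{i+1}\}\not\models c_{i+1}$, so $c_{i+1}$ remains non-redundant in $\Gamma_k$.
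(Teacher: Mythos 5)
Your proposal is correct and follows essentially the same route as the paper's proof: redundancy of $(v_i R v_j)$ is tested by running PCA on $(\Gamma\setminus\{(v_i R v_j)\})\cup\{(v_i\,\alpha\,v_j)\}$ for each of the constantly many basic relations $\alpha\notin R$, and the prime subnetwork is built by the iterative deletion scheme of \eqref{eq:Gamma_i}, with irreducibility of the result secured by the same monotonicity observation (a constraint non-redundant in a network stays non-redundant in any subnetwork). No substantive differences.
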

\begin{proof}
Suppose $(x R y)$ is a constraint in $\Gamma$ and let $\Gamma'\equiv \Gamma\setminus\{(x R y)\}$. To determine if $(x R y)$ is redundant in $\Gamma$, we check for each basic RCC5/8 relation $\gamma$ that is not in $R$, whether $\Gamma'\cup\{(x\gamma y)\}$ is consistent. If the answer is confirmative for one $\gamma$ (note that RCC5 has five and RCC8 has eight basic relations), then $(x R y)$ is not redundant in $\Gamma$. By Proposition~\ref{prop:horn5}, the consistency of $\Gamma'\cup\{(x\gamma y)\}$ can be determined by enforcing path-consistency and hence can be determined in cubic time. Since there are $O(n^2)$ constraints in $\Gamma$, in $O(n^5)$ time we can find all redundant constraints of $\Gamma$.  

Suppose $c_1,c_2,\ldots,c_k$  are all non-trivial constraints of $\Gamma$. We define $\Gamma_0=\Gamma$, and set $\Gamma_{i+1}$ as in \eqref{eq:Gamma_i}. Note that if a constraint is {non-redundant} in $\Phi$ then it is also {non-redundant} in any subset of $\Phi$. It is straightforward to show that $\Gamma_k$ is a prime subnetwork of $\Gamma$. Since we can determine in cubic time whether a constraint is redundant in a network over $\mathcal{S}$, $\Gamma_k$ can be computed in $k\times O(n^3)$ time, which is bounded by $O(n^5)$. 
\end{proof}
{Similar conclusions apply to other calculi. For example, since the consistency problem of PA can be solved in $O(n^2)$ time, the redundancy problem of PA can be solved in $O(n^2)$ time and we can find a prime subnetwork for any consistent PA network in $O(n^4)$ time.}

It is often interesting to know when a constraint is contained in \emph{some} or \emph{all} prime subnetworks. The following notion will be helpful in partially answering questions like this.
\begin{definition}\label{dfn:core}
{Let $\qcm$ be a qualitative calculus and suppose $\Gamma$ is a qualitative constraint network over $\qcm$.} Write $\Gamma_c$ for the set of non-redundant constraints in $\Gamma$. We call $\Gamma_c$ the \emph{core} of $\Gamma$. 
\end{definition}
It is easy to see that the core of $\Gamma$ is contained in every prime subnetwork of $\Gamma$. \emph{Are prime subnetworks unique? And, is the core itself always a prime subnetwork?}  The following example shows that in general this is not the case.
\begin{example}
\label{ex:2prime-networks}
Suppose $\Gamma$ is the RCC5 network specified as below
\begin{align*}
\{\p{v_1}{v_2}, \p{v_2}{v_3},\p{v_3}{v_1},\po{v_1}{v_4},\po{v_2}{v_4}\},
\end{align*}
where $\bp=\{\bpp,\beq\}$. Then both $\bpo$ constraints in $\Gamma$ are redundant. This is because, enforcing path-consistency to $ \{\p{v_1}{v_2}, \p{v_2}{v_3},\p{v_3}{v_1}\}$ we have $\eq{v_1}{v_2}, \eq{v_1}{v_3},\eq{v_2}{v_3}$. Therefore, knowing one $\bpo$  constraint will infer the other. Moreover, $\Gamma$ has no other redundant constraints and $\{\p{v_1}{v_2}, \p{v_2}{v_3},\p{v_3}{v_2}\}$ is the core of $\Gamma$ {but not equivalent to $\Gamma$}. It is easy to see that $\Gamma_c\cup\{\po{v_1}{v_4}\}$ and $\Gamma_c\cup\{\po{v_2}{v_4}\}$ are two prime subnetworks of $\Gamma$.
\end{example}
Note that this occurs since there is a cycle of $\bp$ constraints in $\Gamma$, i.e., $\Gamma$ is $\bp$-cyclic. {In the following we  often assume that $\Gamma$ has the following property:}
\begin{align}\label{eq:noneq}
(\forall i,j)[ (i\not=j) \rightarrow (\Gamma \not\models (\eq{v_i}{v_j}))]. 
\end{align}
{This implies that no two variables are forced to be identical. We call a network which satisfies \eqref{eq:noneq} an \emph{all-different} constraint network. Note that an all-different network is always consistent, as an inconsistent network entails everything.}

{
The following proposition shows that the all-different requirement is not restrictive at all for constraint networks over a tractable subalgebra. 
\begin{proposition}\label{prop:alldiff}
Let $\mathcal{S}$ be a tractable subclass of RCC5/8 that contains all basic relations. Suppose $\Gamma=\{v_i R_{ij} v_j: 1\leq i,j\leq n\}$ is a consistent network over $\mathcal{S}$ and $\Gamma_p$ its a-closure. Then, for any $i\not=j$, $\Gamma \models (\eq{v_i}{v_j})$ iff $(\eq{v_i}{v_j})$ is in $\Gamma_p$.
\end{proposition}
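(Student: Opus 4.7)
I would first dispatch the easy ``if'' direction: by soundness of PCA, $\Gamma_p$ and $\Gamma$ have exactly the same solutions, so if the $(i,j)$-relation in $\Gamma_p$ is literally $\{\beq\}$ then every solution of $\Gamma$ assigns the same value to $v_i$ and $v_j$, i.e., $\Gamma\models(\eq{v_i}{v_j})$.

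For the ``only if'' direction, suppose $\Gamma\models(\eq{v_i}{v_j})$ and write $R^p_{ij}$ for the $(i,j)$-relation of $\Gamma_p$; the goal is to show $R^p_{ij}=\{\beq\}$. The containment $\beq\in R^p_{ij}$ is immediate from consistency of $\Gamma$: any solution $\sigma$ of $\Gamma$ is also a solution of $\Gamma_p$, and the entailment forces $\sigma(v_i)=\sigma(v_j)$, so $\beq$ must already lie in $R^p_{ij}$.

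For the reverse inclusion I plan to argue by contradiction. Suppose some basic relation $\alpha\in R^p_{ij}$ with $\alpha\neq\beq$, and set $\Gamma^\alpha := \Gamma\cup\{(v_i\,\alpha\,v_j)\}$; because $\mathcal{S}$ contains every basic relation, $\Gamma^\alpha$ is still a network over $\mathcal{S}$. If I can show that $\Gamma^\alpha$ is consistent, any solution of it will simultaneously solve $\Gamma$ and realise $\alpha\neq\beq$ between $v_i$ and $v_j$, contradicting the entailment $\Gamma\models(\eq{v_i}{v_j})$. By Proposition~\ref{prop:horn5}, consistency of $\Gamma^\alpha$ reduces to checking that PCA applied to $\Gamma^\alpha$ never produces an empty constraint. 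Since PCA is a fixpoint computation, this is equivalent to starting from $\Gamma_p$, intersecting $R^p_{ij}$ with $\{\alpha\}$ to obtain $\{\alpha\}$ (non-empty because $\alpha\in R^p_{ij}$), and then continuing propagation.

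The main obstacle is certifying that this continued propagation empties no constraint --- a minimality-type statement saying that every basic relation appearing in the a-closure of a consistent network over a tractable subclass (with all basic relations) is realised by some consistent scenario. I would invoke the known fact that the maximal tractable subclasses $\horn_5$ of RCC5 and $\hornr$ (together with the other two maximal subclasses) of RCC8 enjoy this realisability property \cite{RenzN97,Renz99}; because every tractable subclass containing all basic relations embeds into one of these and those maximal subclasses are closed under weak composition and intersection, the a-closure $\Gamma_p$ itself lies inside the surrounding maximal subclass, so the realisability argument transfers to $\mathcal{S}$. Granting this, $\Gamma^\alpha$ is consistent, the desired contradiction follows, and the proof is complete.
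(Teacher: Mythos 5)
Your overall strategy mirrors the paper's---the ``if'' direction is immediate, and for ``only if'' you argue that if the $(i,j)$-entry $S_{ij}$ of $\Gamma_p$ is not exactly $\{\beq\}$ then some solution of $\Gamma$ separates $v_i$ from $v_j$---but the key lemma you invoke to close the argument is false. You appeal to a ``realisability property'' asserting that every basic relation appearing in the a-closure of a consistent network over a tractable subclass (containing all basic relations) is realised by some consistent scenario; this is precisely the statement that $\Gamma_p$ is minimal, and the paper itself exhibits a counterexample: the path-consistent network of Figure~\ref{fig:18} is over $\horn_5$, yet the relation $\beq$ in the constraint $(v_1\,\{\bpp,\beq\}\,v_2)$ is not feasible. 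Indeed, the whole point of Theorem~\ref{thm:minimal} is that minimality of the a-closure requires the stronger hypothesis of a \emph{distributive} subalgebra, which is not available in the present proposition. Your remark that the property ``transfers'' from the maximal tractable subclasses to $\mathcal{S}$ is therefore moot, because the property already fails for the maximal tractable subclasses themselves.

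What actually closes the gap---and what the paper uses---is a much weaker, $\beq$-specific refinement result: Theorem~21 of \cite{Renz99} states that a nonempty path-consistent network $\Gamma_p=\{v_i\,S_{ij}\,v_j\}$ over such a subclass admits a consistent scenario $\Gamma^*=\{v_i\,\alpha_{ij}\,v_j\}$ in which $\alpha_{ij}=\beq$ if and only if $S_{ij}=\{\beq\}$. Note that this does \emph{not} promise that an arbitrarily chosen $\alpha\in S_{ij}\setminus\{\beq\}$ is feasible (your network $\Gamma^\alpha$ may well be inconsistent for some such $\alpha$); it only guarantees that \emph{some} non-$\beq$ basic relation in $S_{ij}$ is feasible whenever $S_{ij}\neq\{\beq\}$, which is all the contradiction requires. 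If you replace your realisability claim by this theorem, and correspondingly weaken your goal from ``every $\alpha\neq\beq$ in $S_{ij}$ is feasible'' to ``some $\alpha\neq\beq$ in $S_{ij}$ is feasible,'' the rest of your argument goes through.
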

\begin{proof}
The sufficiency part is clear. We only need {to} show the necessity part. Suppose $\Gamma\models (\eq{v_i}{v_j})$. We show $(\eq{v_i}{v_j})$ is in $\Gamma_p$. Suppose $\Gamma_p=\{v_i S_{ij} v_j: 1\leq i,j\leq n\}$. Because $\Gamma$ is consistent,  $\Gamma_p$ is path-consistent and each $S_{ij}$ is nonempty. By Theorem~21 of \cite{Renz99},  $\Gamma_p$ has a consistent scenario $\Gamma^*=\{v_i \alpha_{ij} v_j: 1\leq i,j\leq n\}$, where $\alpha_{ij}=\beq$ iff $S_{ij}=\beq$. In other words, if $S_{ij}\not=\beq$, i.e., $(\eq{v_i}{v_j})$ is not in $\Gamma_p$, then $\alpha_{ij}$ cannot be $\beq$ and hence $\Gamma$ does not entail $(\eq{v_i}{v_j})$. This is a contradiction and hence $(\eq{v_i}{v_j})$ is in $\Gamma_p$. 
\end{proof}
The above proposition shows that whether a constraint network is all-different can be answered by enforcing path-consistency. When a constraint network is not all-different, we can amalgamate those identical variables and thus obtain an equivalent but simplified all-different network.}

\vspace*{2mm}
In the next section we will show that, if $\Gamma$ is {an all-different} constraint network over a distributive subalgebra of RCC5/8, then $\Gamma_c$ is the unique prime subnetwork of $\Gamma$. This is quite surprising, as, in general, knowing that  $(x R y)$ and $(u S v)$ are both redundant in $\Gamma$ does not imply that $(u S v)$ is also redundant in $\Gamma\setminus\{(x R y)\}$. 

\section{Networks over a Distributive Subalgebra}

In this section, we assume $\mathcal{S}$ is a distributive subalgebra of RCC5/8. Let  $\Gamma$ be {an all-different network} over $\mathcal{S}$. {Because $\Gamma$ satisfies \eqref{eq:noneq}, there is in particular} no $\beq$ constraint in $\Gamma$. We show that $\Gamma_c$, the core of $\Gamma$, is equivalent to $\Gamma$ and hence the unique prime subnetwork of $\Gamma$. Using this result, we then further give a cubic time algorithm for computing the unique prime subnetwork of $\Gamma$.


To prove that $\Gamma_c$ is equivalent to $\Gamma$, we need two important results.  The first result, stated in Theorem~\ref{thm:minimal}, shows that the a-closure of $\Gamma$ is minimal, i.e. $\Gamma_p$ is exactly $\Gamma_m$. The second result, stated in Proposition~\ref{prop:redun+}, shows that a particular constraint $(x R y)$ is redundant in $\Gamma$ iff its corresponding constraint in $\Gamma_p$ is redundant in $\Gamma_p$. Our main result, stated in Theorem~\ref{thm:core=Gamma+}, then follows directly from these two results. 

{In Section 4.1, we prove Theorem~\ref{thm:minimal}; in Section 4.2, we characterise relations in such a minimal network in terms of the weak compositions of paths from $x$ to $y$ in $\Gamma$; and in Section 4.3 we prove Proposition~\ref{prop:redun+}. Using these results, we show in Section 4.3 that $\Gamma_c$ is equivalent to $\Gamma$ and hence the unique prime subnetwork of $\Gamma$ and give in Section 4.4 a cubic time algorithm for computing $\Gamma_c$.}

\subsection{The A-closure of $\Gamma$ Is Minimal}

{To prove that a network is minimal, by Proposition~\ref{wgcISmin}, we only need {to} show that it is \emph{weakly globally consistent} in the sense of Definition~\ref{dfn:minimal-network}.}

\begin{theorem}\label{thm:wglobal}
Let $\mathcal{S}$ be a distributive subalgebra of RCC5/8. Suppose $\Gamma=\{v_i R_{ij} v_j: 1\leq i,j\leq n\}$ is a path-consistent network over $\mathcal{S}$. Then $\Gamma$ is weakly globally consistent.
\end{theorem}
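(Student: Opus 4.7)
The plan is to use induction on $|V \setminus V'|$, with the trivial base case $V' = V$. For the inductive step, given a consistent scenario $\Gamma^* = \{v_i\alpha_{ij}v_j : i,j \in V'\}$ of $\Gamma{\downarrow}_{V'}$ and a fresh $v \in V \setminus V'$, I would produce a consistent scenario on $V' \cup \{v\}$ extending $\Gamma^*$; the induction hypothesis then completes the extension to all of $V$. To do so I would form the hybrid network
\[
\Gamma' := \{v_i\alpha_{ij}v_j : i,j \in V'\} \cup \{v_iR_{iv}v : i \in V'\}
\]
on $V' \cup \{v\}$. Every constraint in $\Gamma'$ lies in $\mathcal{S}$, so by Corollary~\ref{cor:distr-pc} it suffices to tighten $\Gamma'$ to a path-consistent network with no empty constraint; any consistent scenario of the resulting network automatically restricts to $\Gamma^*$.

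Concretely, I would tighten only the constraints to $v$, replacing $R_{iv}$ by
\[
R_{iv}' := R_{iv} \cap \bigcap_{j \in V'} \alpha_{ij}\circw R_{jv},
\]
and call the result $\Gamma''$. The proof then reduces to two claims: (a) each $R_{iv}'$ is nonempty, and (b) $\Gamma''$ is already path-consistent. Combined, these yield consistency of $\Gamma''$ by Corollary~\ref{cor:distr-pc}, and a consistent scenario of $\Gamma''$ extending $\Gamma^*$ follows at once.

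For (a), I would first upgrade Lemma~\ref{lem:3x} by induction on family size to a Helly-style statement: for any finite family of relations in a distributive subalgebra $\mathcal{S}$, pairwise nonempty intersections imply nonempty total intersection (the induction step amalgamates $Q_1,Q_2$ into $Q_1\cap Q_2 \in \mathcal{S}$ and invokes Lemma~\ref{lem:3x}). Then I would check pairwise nonemptiness for the defining family of $R_{iv}'$. The intersection $R_{iv} \cap \alpha_{ij}\circw R_{jv}$ is nonempty because $\alpha_{ij} \in R_{ij} \subseteq R_{iv}\circw R_{vj}$ (path-consistency of $\Gamma$) plus the cycle law~\eqref{eq:cycle}; and the intersection $(\alpha_{ij}\circw R_{jv}) \cap (\alpha_{ik}\circw R_{kv})$ is nonempty because $\alpha_{ij} \in \alpha_{ik}\circw \alpha_{kj}$ (path-consistency of the basic scenario $\Gamma^*$, via Proposition~\ref{prop:basic}), $R_{kj} \subseteq R_{kv}\circw R_{vj}$, associativity of $\circw$, and again the cycle law.

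For (b), the only triples to check involve $v$; by the cycle law and nonemptiness of the $R_{iv}'$, it suffices to prove $R_{iv}' \subseteq \alpha_{ij}\circw R_{jv}'$ for all $i,j\in V'$. Since $R_{jv}'$ is nonempty by (a), distributivity of $\circw$ over nonempty intersections in $\mathcal{S}$ yields
\[
\alpha_{ij}\circw R_{jv}' \;=\; (\alpha_{ij}\circw R_{jv}) \;\cap\; \bigcap_{k\in V'} (\alpha_{ij}\circw\alpha_{jk})\circw R_{kv}.
\]
Path-consistency of $\Gamma^*$ gives $\alpha_{ik} \subseteq \alpha_{ij}\circw\alpha_{jk}$, hence $\alpha_{ik}\circw R_{kv}\subseteq (\alpha_{ij}\circw\alpha_{jk})\circw R_{kv}$ for each $k$; together with the $k=j$ term in the definition of $R_{iv}'$, this shows $R_{iv}'$ is contained in every factor on the right, and therefore in $\alpha_{ij}\circw R_{jv}'$. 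The main obstacle is precisely (b): distributivity is what makes one round of tightening enough. Without it, path-consistency would have to be enforced iteratively and there would be no guarantee that some $R_{iv}'$ survives to the fixed point; both claims ultimately rest on the distributive-subalgebra hypothesis, with Proposition~\ref{prop:basic} and Corollary~\ref{cor:distr-pc} converting path-consistency into genuine consistency.
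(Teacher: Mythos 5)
Your proposal is correct and follows essentially the same route as the paper's proof: extend the scenario one variable at a time, define the new constraints as the intersection $\bigcap_j \alpha_{ij}\circw R_{jv}$ (your $R_{iv}'$ is the converse of the paper's $\widehat{T}_i$), establish nonemptiness via the Helly-style upgrade of Lemma~\ref{lem:3x} together with the cycle law, establish path-consistency of the extended network via distributivity, and invoke Corollary~\ref{cor:distr-pc}. The only quibble is your citation of Proposition~\ref{prop:basic} for path-consistency of the scenario $\Gamma^*$, which points the wrong way — what you actually need is the trivial fact that a consistent scenario is path-consistent.
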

\begin{proof} Write $V_{k}=\{v_1,v_2,\ldots,v_k\}$ for $1\leq k < n$. Without loss of generality, we only show that every consistent scenario of $\Gamma{\downarrow}_{V_k}$ can be extended to a consistent scenario of $\Gamma{\downarrow}_{V_{k+1}}$. Suppose $\Delta=\{v_i \delta_{ij} v_j: 1\leq i,j\leq k\}$ is a consistent scenario of $\Gamma{\downarrow}_{V_k}$. Then each $\delta_{ij}$ is a basic relation in $R_{ij}$. For each $1\leq i\leq k$, write $T_i$ for $R_{k+1,i}$ {(see Figure~\ref{fig:thm:wglobalProof} for illustration)}. Let $\widehat{T}_i=\bigcap_{j=1}^k T_j\circw \delta_{ji}$. 

 \begin{figure}[h]
 \centering
  \includegraphics[width=0.5\textwidth]{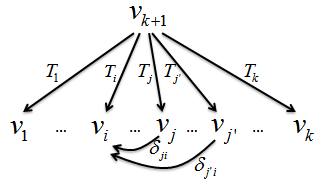}
  \caption{{Illustration of $\Gamma{\downarrow}_{V_{k+1}}$ in the proof of Theorem~\ref{thm:wglobal}}}\label{fig:thm:wglobalProof}
\end{figure}

{We assert that each $\widehat{T}_i$ is nonempty. By Lemma~\ref{lem:3x}, it is easy to show by induction on $k$ that, for any set $\{W_1,W_2,\ldots,W_k\}$ of $k\geq 3$ nonempty relations  in $\mathcal{S}$, $\bigcap_{i=1}^k W_i \not=\varnothing$ iff $W_i\cap W_j\not=\varnothing$ for any $1\leq i\not=j\leq n$. Therefore, to show $\widehat{T}_i\not=\varnothing$, we only need {to} show $T_j\circw \delta_{ji} \cap T_{j'}\circw \delta_{j'i} \not=\varnothing$ for any $1\leq i \leq n$ and any $1\leq j\not=j'\leq n$.  Applying the cycle law as stated in Proposition~\ref{prop:ra} twice, we have 
\begin{align*}
T_j\circw \delta_{ji} \cap (T_{j'}\circw \delta_{j'i}) \not=\varnothing &\quad \mbox{iff}\quad  (T_{j'}\circw \delta_{j'i})\circw (\delta_{ji})^{-1} \cap T_j\not=\varnothing\\
&\quad \mbox{iff}\quad T_{j'}\circw (\delta_{j'i}\circw \delta_{ij}) \cap T_j\not=\varnothing\\
&\quad \mbox{iff}\quad T_{j'}^{-1}\circw T_j \cap (\delta_{j'i}\circw \delta_{ij})\not=\varnothing\\
&\quad \mbox{iff}\quad (R_{j',k+1}\circw R_{k+1,j}) \cap (\delta_{j'i}\circw \delta_{ij})\not=\varnothing.
\end{align*}
Note here $(\delta_{ji})^{-1}=\delta_{ij}$ and $T_{j'}^{-1}=(R_{k+1,j'})^{-1}=R_{j',k+1}$. Because $\delta_{j'j}\subseteq R_{j'j}$, we know $\delta_{j'j}\cap R_{j'j}\not=\varnothing$. Since both $\Delta$ and $\Gamma$ are path-consistent, we also have $\delta_{j'j}\subseteq \delta_{j'i}\circw \delta_{ij}$ and $R_{j'j} \subseteq R_{j',k+1}\circw R_{k+1,j}$. Therefore, we have $(R_{j',k+1}\circw R_{k+1,j}) \cap (\delta_{j'i}\circw \delta_{ij}) \supseteq R_{j'j}\cap \delta_{j'j}\not=\varnothing$ and  hence $T_j\circw \delta_{ji} \cap T_{j'}\circw \delta_{j'i} \not=\varnothing$. This shows that $\widehat{T}_i\not=\varnothing$ for any $1\leq i\leq n$.
}

To show that $\Gamma{\downarrow}_{V_k} \cup\{v_{k+1} \widehat{T}_i v_i : 1\leq i\leq k\}$ is path-consistent, we only need {to} show for $1\leq i\not=i'\leq k$ that $\widehat{T}_i\circw \delta_{ii'}\supseteq \widehat{T}_{i'}$. By the distributivity and $\delta_{ji}\circw\delta_{ii'}\supseteq \delta_{ji'}$ we have  
\begin{align*}
\widehat{T}_i\circw \delta_{ii'} & = (\bigcap_{j=1}^k T_j\circw \delta_{ji}) \circw \delta_{ii'}  = \bigcap_{j=1}^k T_j \circw (\delta_{ji} \circw \delta_{ii'})\supseteq \bigcap_{j=1}^k T_j \circw \delta_{ji'}  = \widehat{T}_{i'}.
\end{align*}
This shows that $\Gamma{\downarrow}_{V_k} \cup\{v_{k+1} \widehat{T}_i v_i : 1\leq i\leq k\}$ is path-consistent and hence{, by Corollary~\ref{cor:distr-pc},} has a consistent scenario $\Delta'$. It is clear that $\Delta'$ extends $\Delta$ from $V_{k}$ to $V_{k+1}$. Because  $\Gamma{\downarrow}_{V_k} \cup\{v_{k+1} \widehat{T}_i v_i : 1\leq i\leq k\}$ refines $\Gamma{\downarrow}_{V_{k+1}}$, we know  $\Gamma{\downarrow}_{V_{k+1}}$ has a consistent scenario which extends $\Delta$. 
\end{proof}

{Together with Proposition~\ref{wgcISmin}, the above result immediately implies that the a-closure of a consistent network $\Gamma$ over a distributive subalgebra is minimal.}
\begin{theorem}\label{thm:minimal}
Let $\mathcal{S}$ be a distributive subalgebra of RCC5/8. Suppose $\Gamma$ is a consistent network over $\mathcal{S}$ and $\Gamma_p$ its a-closure. Then $\Gamma_p$ {is identical to} the minimal network of $\Gamma$.
\end{theorem}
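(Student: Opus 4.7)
The plan is to chain together the results just established: Theorem~\ref{thm:wglobal} gives weak global consistency of a path-consistent network over $\mathcal{S}$, Proposition~\ref{wgcISmin} upgrades this to minimality, and a short uniqueness argument then identifies $\Gamma_p$ with $\Gamma_m$.

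First I would verify that $\Gamma_p$ is a well-defined path-consistent network over $\mathcal{S}$. Since $\mathcal{S}$ is a distributive subalgebra, it is closed under converse, weak composition, and intersection, so each update $R_{ij} \leftarrow (R_{ik}\diamond R_{kj}) \cap R_{ij}$ performed by the PCA keeps all constraints inside $\mathcal{S}$. Because $\Gamma$ is consistent, the PCA does not derive any empty constraint, and therefore it returns a path-consistent network $\Gamma_p$ over $\mathcal{S}$ that is equivalent to $\Gamma$ and refines it.

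Next, I apply Theorem~\ref{thm:wglobal} to conclude that $\Gamma_p$ is weakly globally consistent, and then Proposition~\ref{wgcISmin} to conclude that $\Gamma_p$ is minimal in the sense of Definition~\ref{dfn:minimal-network}: every basic relation appearing in any constraint of $\Gamma_p$ is feasible for $\Gamma_p$, and hence (by equivalence) feasible for $\Gamma$.

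Finally, I would argue $\Gamma_p = \Gamma_m$ by mutual inclusion of the corresponding constraints. Write $\Gamma_p = \{v_i S_{ij} v_j : 1 \le i,j \le n\}$ and $\Gamma_m = \{v_i T_{ij} v_j : 1 \le i,j \le n\}$. The inclusion $S_{ij} \subseteq T_{ij}$ follows from minimality of $\Gamma_p$ since every basic relation $\alpha \in S_{ij}$ is feasible for $\Gamma$, hence by definition of $\Gamma_m$ it lies in $T_{ij}$. For the reverse inclusion, any basic relation $\alpha \in T_{ij}$ is, by definition, realized by some solution of $\Gamma$, which is also a solution of $\Gamma_p$ because the two networks are equivalent; thus $\alpha$ must be compatible with $S_{ij}$, i.e., $\alpha \in S_{ij}$. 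This yields $S_{ij} = T_{ij}$ for all pairs $(i,j)$, and hence $\Gamma_p = \Gamma_m$.

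The heavy lifting sits entirely inside Theorem~\ref{thm:wglobal} — that is where the distributive structure (via the cycle law and Lemma~\ref{lem:3x}) is genuinely needed. Once weak global consistency is in hand, the step to minimality and then to the identification with $\Gamma_m$ is essentially bookkeeping, so I do not anticipate any real obstacle in this proof.
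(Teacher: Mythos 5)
Your proposal is correct and follows exactly the paper's route: the paper derives Theorem~\ref{thm:minimal} immediately from Theorem~\ref{thm:wglobal} together with Proposition~\ref{wgcISmin}, which is precisely your chain of reasoning. The extra details you supply (closure of $\mathcal{S}$ under the PCA updates, and the mutual-inclusion identification of $\Gamma_p$ with $\Gamma_m$) are correct bookkeeping that the paper leaves implicit.
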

{The above results can also be extended to distributive subalgebras of PA, IA and CRA, but do not hold in general. Consider the network $\Gamma$ over $\horn_5$ shown in Figure~\ref{fig:18}{, which is inspired by a network over PA in \cite{Beek89}}. The network is path-consistent but not minimal. In fact, the relation $\beq$ in the constraint $(v_1\{\bpp,\beq\} v_2)$ is not feasible,  i.e., there exists no solution of $\Gamma$ in which $(v_1\beq v_2)$ is satisfied. By Proposition~\ref{wgcISmin}, we know $\Gamma$ is not weakly globally consistent.
}

\begin{figure}[htb]
\centering
\includegraphics[width=0.35\columnwidth]{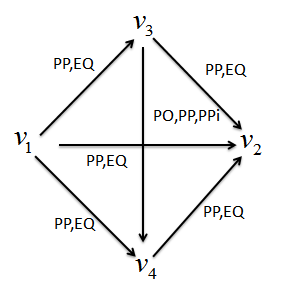} 
{
\caption{A counter-example of Theorem~\ref{thm:minimal}: a path-consistent constraint network $\Gamma$ over $\horn_5$.}
\label{fig:18}
}
\end{figure}

{In the next subsection, we characterise relations in such a minimal network in terms of the weak compositions of paths from $x$ to $y$ in $\Gamma$.}

{\subsection{Weak Compositions of Paths}}
{Let $\qcm$ be a qualitative calculus. A qualitative constraint network $\Gamma$ is in essence a labelled directed graph consisting of the variables in $\Gamma$ as vertices and qualitative relations in $\qcm$ between the variables as labels.} A \emph{path} $\pi$ from a variable $x$ to another variable $y$ is a sequence of constraints $c_1, c_2, ..., c_s$ such that $c_i=(u_{i-1} R_i u_i)$ and $u_0=x,u_s=y$.  The weak composition of path $\pi$ is the qualitative relation in $\qcm$ defined as 
$$\ct(\pi)\equiv R_1 \circw R_2  \circw ... \circw R_s.$$ 
{Since the weak composition operation is associative, the relation $\ct(\pi)$ defined above is unambiguous.} We say a path $\pi$ is \emph{non-trivial} if $\ct(\pi)$ is not the universal relation. Note that $(x \ct(\pi) y)$ {is} entailed by those constraints in $\pi$. 

Suppose $\Gamma$ is a constraint network over a distributive subalgebra of RCC5/8, {$(x R y)$ and $(x S y)$ are respectively the constraints in $\Gamma$ and $\Gamma_p$ that relate $x$ to $y$. We next show that $S$ is the intersection of the weak compositions of all paths from $x$ to $y$ in $\Gamma$. Note that such a path may contain $(x R y)$ as an (or the unique) edge.}

\vspace*{2mm}
\begin{lemma}\label{lem:int-paths}
Let $\mathcal{S}$ be a distributive subalgebra of RCC5/8. Suppose $\Gamma$ is a consistent network over $\mathcal{S}$ and $\Gamma_p$ its a-closure. Assume furthermore that $(x S y)$ is a constraint in $\Gamma_p$. Then $S$ is the intersection of the weak compositions of all paths from $x$ to $y$ in $\Gamma$.
\end{lemma}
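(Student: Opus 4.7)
The plan is to prove the two inclusions separately. Write $\Gamma_p=\{v_i S_{ij} v_j:1\le i,j\le n\}$ and set $S=S_{xy}$.

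For the inclusion $S\subseteq \bigcap_\pi \ct(\pi)$, I would argue semantically: for any path $\pi=c_1,\ldots,c_s$ with $c_t=(u_{t-1}R_t u_t)$ from $x$ to $y$ in $\Gamma$, any solution of $\Gamma$ assigns values to $u_0,\ldots,u_s$ that satisfy $R_1\circ R_2\circ\cdots\circ R_s\subseteq R_1\diamond R_2\diamond\cdots\diamond R_s=\ct(\pi)$ between $x$ and $y$. Hence $\Gamma\models (x\,\ct(\pi)\,y)$. By Theorem~\ref{thm:minimal}, $\Gamma_p$ is the minimal network of $\Gamma$, so $S$ is the strongest relation between $x$ and $y$ entailed by $\Gamma$, giving $S\subseteq \ct(\pi)$ for every such $\pi$.

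For the reverse inclusion $\bigcap_\pi \ct(\pi)\subseteq S$, I would proceed by induction on the execution of the PCA. Concretely, I claim that at every stage of enforcing path-consistency, the current value of each $R_{ij}$ can be written as a finite intersection $\bigcap_{\pi\in\Pi_{ij}}\ct(\pi)$ of weak compositions of paths from $v_i$ to $v_j$ in $\Gamma$. The base case is trivial: initially $R_{ij}$ is $\ct$ of the length-one path consisting of $(v_i R_{ij} v_j)$ itself. For the inductive step, suppose $R_{ij}\leftarrow R_{ij}\cap(R_{ik}\diamond R_{kj})$ and that $R_{ik}=\bigcap_{\pi_1}\ct(\pi_1)$, $R_{kj}=\bigcap_{\pi_2}\ct(\pi_2)$. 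Because $\Gamma$ is consistent, Proposition~\ref{prop:horn5} ensures no empty constraint ever appears, so $R_{ik}$ and $R_{kj}$ are nonempty. Using Lemma~\ref{lem:3x} to upgrade pairwise nonemptiness to finite nonemptiness, and Lemma~\ref{lemma:distri} to distribute weak composition over intersections from both sides, I would derive
\begin{align*}
R_{ik}\diamond R_{kj} \;=\; \bigcap_{\pi_1,\pi_2}\bigl(\ct(\pi_1)\diamond\ct(\pi_2)\bigr) \;=\; \bigcap_{\pi_1,\pi_2}\ct(\pi_1\cdot\pi_2),
\end{align*}
where $\pi_1\cdot\pi_2$ is the concatenated path from $v_i$ to $v_j$ through $v_k$. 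Intersecting with the old representation of $R_{ij}$ preserves the form of a finite intersection of $\ct$'s of paths. When PCA terminates with $\Gamma_p$, $S=S_{xy}$ is therefore an intersection of $\ct(\pi)$'s over paths from $x$ to $y$ in $\Gamma$, and so $\bigcap_\pi \ct(\pi)\subseteq S$.

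The main technical obstacle is making the distributivity step rigorous: Lemma~\ref{lemma:distri} is stated only for binary nonempty intersections, whereas here I need to distribute weak composition over an arbitrary finite intersection on \emph{each} side. The remedy is a small preliminary lemma: if $\bigcap_{i=1}^m X_i\neq\varnothing$ with each $X_i\in\mathcal{S}$, then by iterating Lemma~\ref{lem:3x} the pairwise intersections among any subcollection remain nonempty, so an induction on $m$ using Lemma~\ref{lemma:distri} gives $R\diamond\bigcap_i X_i=\bigcap_i(R\diamond X_i)$, and symmetrically on the other side. Once this is in place, everything else is bookkeeping.
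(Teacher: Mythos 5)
Your proof is correct, and its engine --- the induction over the execution of the PCA showing that each intermediate $R_{ij}$ is a finite intersection of weak compositions of paths from $v_i$ to $v_j$ in $\Gamma$, using distributivity to absorb $R_{ik}\diamond R_{kj}$ into that form --- is exactly the paper's argument. The one place you diverge is the inclusion $S\subseteq\ct(\pi)$: you derive it semantically, via $\Gamma\models(x\,\ct(\pi)\,y)$ together with the minimality of $\Gamma_p$ (Theorem~\ref{thm:minimal}), whereas the paper gets it purely algebraically from the path-consistency of $\Gamma_p$ (iterating $S_{ij}\subseteq S_{ik}\diamond S_{kj}$ along the path shows $S$ is contained in the weak composition of every path in $\Gamma_p$, hence in that of every path in $\Gamma$, since $\Gamma_p$ refines $\Gamma$). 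Both are legitimate at this point in the development, but the paper's route is lighter and does not lean on the global-consistency machinery. One small simplification to your ``technical obstacle'': you do not need Lemma~\ref{lem:3x} to upgrade binary distributivity to finite intersections. If $\bigcap_{i=1}^m X_i\neq\varnothing$ then every sub-intersection $\bigcap_{i=1}^{m-1}X_i$ is nonempty simply because it contains $\bigcap_{i=1}^m X_i$, so the induction using Lemma~\ref{lemma:distri} (and closure of $\mathcal{S}$ under intersection) goes through by monotonicity alone; Lemma~\ref{lem:3x} is needed elsewhere (e.g.\ in Theorem~\ref{thm:wglobal}) but not here.
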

\begin{proof}
Suppose the network becomes stable in $k$ steps when enforcing PCA. For $l\leq k$, we write $R_{ij}^l$ for the constraint between $v_i$ and $v_j$ in the $l$-th step. We prove by using induction on $l$ that every $R_{ij}^l$ is the intersection of the weak compositions of \emph{several} paths from $v_i$ to $v_j$ in $\Gamma$. 

When $l=0$, this is clearly true. Suppose this is true for $l\leq s$. We show it also holds for $l=s+1$. Suppose in this step the following updating rule is called
\begin{align*}
R_{ij}^{l+1}= (R_{ik}^l \circw R_{kj}^l) \cap R_{ij}^l.
\end{align*}
By induction hypothesis, we know $R_{ij}^l$ is the intersection of the weak compositions of several paths from $v_i$ to $v_j$ in $\Gamma$. Similar for $R_{ik}^l$ and $R_{kj}^l$. Note that when joining a path from $v_i$ to $v_k$ and a path from $v_k$ to $v_j$, we obtain a path from $v_i$ to $v_j$. Because every constraint in $\Gamma$ is taken from $\mathcal{S}$, in which weak composition distributes over nonempty intersections, it follows that  $R_{ik}^l \circw R_{kj}^l$ is identical to the intersection of the weak compositions of all these paths from $v_i$ to $v_j$ via $v_k$. It is now clear that $R_{ij}^{l+1}$ also satisfies the property. 

So far, we have shown for every constraint $(x S y)$ in $\Gamma_p$ that $S$ is the intersection of the weak compositions of \emph{several} paths from $x$ to $y$ in $\Gamma$. Because $\Gamma_p$ is path-consistent, the weak composition of \emph{every} path from $x$ to $y$ in $\Gamma_p$ contains $S$. Therefore, $S$ is also contained in the intersection of the weak compositions of \emph{all} paths from $x$ to $y$ in $\Gamma$. This shows that $S$ is exactly the intersection of the weak compositions of \emph{all} paths from $x$ to $y$ in $\Gamma$. 
\end{proof}
{The distributive property  is necessary in the above lemma. Consider the consistent RCC5 network $\Gamma$ over $\horn_5$ shown in Figure~\ref{fig:16}. The intersection of the weak compositions of all paths from $v_1$ to $v_2$ in $\Gamma$ is $\{\bdr,\bpp\}$, while the relation that relates $v_1$ to $v_2$ in $\Gamma_p$ is $\{\bdr\}$, which is strictly contained in $\{\bdr,\bpp\}$.}

\begin{figure}[htb]
\centering
\begin{tabular}{cc}
\includegraphics[width=0.35\columnwidth]{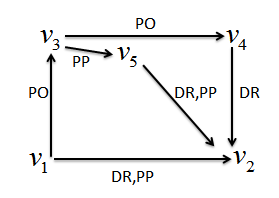} 
&
\includegraphics[width=0.35 \columnwidth]{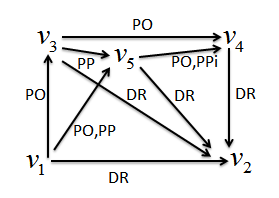}\\
(a) 
&
(b)
\end{tabular}
{
\caption{A counter-example of Lemma~\ref{lem:int-paths}: (a) a constraint network $\Gamma$ over $\horn_5$; and (b) its a-closure $\Gamma_p$.}
\label{fig:16}}
\end{figure}

The following lemma shows that the weak composition of a cycle contains $\beq$ and $\bpo$. {This result holds for arbitrary RCC5/8 networks which are all-different.}
\begin{lemma}\label{lem:circle}
{Suppose $\Gamma$ is an all-different RCC5/8 network and $\pi=(c_1, c_2, ..., c_s)$ ($s\geq 2$)} a path from $x$ to itself in $\Gamma$ such that $c_i=(u_{i-1} R_i u_i)$, $u_0=u_s=x$. Then $\ct(\pi)$ contains $\bo_5\equiv\{\bpo,\bpp,\bppi,\beq\}$ if $\Gamma$ is an RCC5 network, and contains $\bo_8\equiv\{\bpo,\btpp,\btppi,\beq\}$ if $\Gamma$ is an RCC8 network. 
\end{lemma}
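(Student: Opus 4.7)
My plan is to reduce the containment $\ct(\pi) \supseteq \bo$, where $\bo$ stands for $\bo_5$ in RCC5 and $\bo_8$ in RCC8, to a single basic-relation weak-composition inequality that I will call the \emph{key case}: for every basic RCC5/8 relation $\beta \neq \beq$, $\beta \circw \beta^{-1} \supseteq \bo$. The key case is verified by a finite inspection of Tables~\ref{tab:rcc5ct} and~\ref{tab:RCC8}. In RCC5 the four entries $\bdr \circw \bdr$, $\bpo \circw \bpo$, $\bpp \circw \bppi$, $\bppi \circw \bpp$ each contain $\{\bpo, \bpp, \bppi, \beq\}$; in RCC8 the seven analogous entries $\beta \circw \beta^{-1}$ ranging over the non-$\beq$ atoms each contain $\{\bpo, \btpp, \btppi, \beq\}$.

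To apply the key case, I will use the all-different hypothesis to manufacture a basic, non-$\beq$ witness. Pick an intermediate vertex $u_k$ with $0 < k < s$ and $u_k \neq x$ as a variable; such a $u_k$ exists, for otherwise every $R_i$ must equal $\beq$ and the cycle is trivial. Split $\pi$ at $u_k$ into $\pi_1: x = u_0 \to \cdots \to u_k$ and $\pi_2: u_k \to \cdots \to u_s = x$, and put $S_1 = \ct(\pi_1)$, $S_2 = \ct(\pi_2)$; by the associativity of $\circw$ (Proposition~\ref{prop:ra}), $\ct(\pi) = S_1 \circw S_2$. Since $\Gamma$ is all-different, $\Gamma \not\models (\eq{x}{u_k})$, so there exists a solution $\sigma$ of $\Gamma$ with $\sigma(x) \neq \sigma(u_k)$. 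Let $\gamma$ be the basic relation with $(\sigma(x), \sigma(u_k)) \in \gamma$; then $\gamma \neq \beq$. Because $\sigma$ satisfies every $R_i$ of $\pi_1$, the pair $(\sigma(x), \sigma(u_k))$ lies in the real composition $R_1 \circ \cdots \circ R_k$, which is contained in the weak composition $S_1$, so $\gamma \in S_1$. A symmetric argument along $\pi_2$ gives $\gamma^{-1} \in S_2$. Therefore $\ct(\pi) = S_1 \circw S_2 \supseteq \gamma \circw \gamma^{-1} \supseteq \bo$ by the key case.

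The only subtlety I anticipate is justifying the inclusion of the real composition inside the weak composition, $R_1 \circ \cdots \circ R_k \subseteq R_1 \circw \cdots \circw R_k$; this is a routine induction on $k$ from the defining inequality $R \circ T \subseteq R \circw T$ together with associativity of $\circw$, so it is bookkeeping rather than a genuine obstacle. The genuine leverage is supplied by the all-different assumption, which produces the non-identity witness $\gamma$; the tabulated key case then delivers the required overlap containment in a single step.
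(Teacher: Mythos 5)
Your proof is correct and follows essentially the same route as the paper's: split the cycle at an intermediate vertex, use the all-different hypothesis to produce a basic relation $\gamma\neq\beq$ lying in the weak composition of the first arc with $\gamma^{-1}$ in that of the second, and conclude from the table fact that $\gamma\circw\gamma^{-1}\supseteq\bo_l$ for every basic $\gamma\neq\beq$. The only (minor) difference is how the witness $\gamma$ is extracted — you read it off a concrete solution separating $x$ from $u_k$, whereas the paper takes a non-$\beq$ atom of the a-closure relation between $x$ and $u_1$, which is contained in $R_1\cap\ct(\pi_{>1})^{-1}$ by path-consistency.
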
 
\begin{proof}
Write $y$ for $u_1$. Let $R=R_1$ and $T=\ct(\pi_{>1})=R_2\circw R_3 \circw \ldots \circw R_s$. Note that $y\not=x$ and $\pi_{>1}$ is a path from $y$ to $x$. Suppose $S$ is the relation from $x$ to $y$ in the a-closure of $\Gamma$. Because $\Gamma$ is consistent,  we know $S$ is nonempty and $S\subseteq R$, $S\subseteq T^{-1}$. Furthermore, since $\Gamma$ {is all-different and hence} satisfies \eqref{eq:noneq}, we know $S\not=\{\beq\}$.  As a consequence, we know there is a basic RCC5/8 relation $\alpha\not=\beq$ which is contained in $R\cap T^{-1}$. Therefore, $\ct(\pi)=R\circw T\supseteq\alpha \circw \alpha^{-1}$. By checking the composition tables of RCC5 and RCC8, we have that $\alpha \circw \alpha^{-1}$ (hence $\ct(\pi)$) contains $\bo_5$ ($\bo_8$, respectively) for any RCC5 (RCC8, respectively) basic relation $\alpha\not=\beq$.
\end{proof}
The following lemma provides a \emph{finer} characterisation of the constraint $(x S y)$ in $\Gamma_p$ in terms of paths in $\Gamma$ {that do not contain the constraint $(x R y)$}.
 \begin{lemma}\label{prop:+circle+}
Let $\mathcal{S}$ be a distributive subalgebra of RCC5/8. Suppose $\Gamma$ is {an all-different network over $\mathcal{S}$} and $\Gamma_p$ its a-closure. Assume that $(x R y)$ and $(x S y)$ are the constraints from $x$ to $y$ in $\Gamma$ and $\Gamma_p$ respectively. Then $S=R\cap W$, where $W$ is the intersection of the weak compositions of all paths from $x$ to $y$ in $\Gamma\setminus\{(x R y)\}$.
\end{lemma}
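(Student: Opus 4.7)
The plan is to combine Lemma~\ref{lem:int-paths} with a careful analysis of how any path from $x$ to $y$ in $\Gamma$ interacts with the constrained edge between $x$ and $y$. By Lemma~\ref{lem:int-paths}, $S$ equals the intersection of $\ct(\pi)$ over every path $\pi$ from $x$ to $y$ in $\Gamma$. Separating these paths into those living inside $\Gamma\setminus\{(x R y)\}$ (whose weak compositions intersect to $W$ by definition) and those that traverse the edge via $(x R y)$ or $(y R^{-1} x)$, and noting that the trivial one-edge path contributes $R$ to the intersection, the easy inclusion $S\subseteq R\cap W$ is immediate.

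For the reverse inclusion, I would establish the stronger statement that $\ct(\pi)\supseteq R\cap W$ for \emph{every} path $\pi$ from $x$ to $y$ in $\Gamma$, by induction on the length $s$ of $\pi$. The base case $s=1$ is trivial, since then $\ct(\pi)=R$. For $s\ge 2$, if $\pi$ never traverses the $\{x,y\}$ edge then $\ct(\pi)\supseteq W\supseteq R\cap W$; otherwise, let $k$ be the first index at which $\pi$ uses the edge.

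If $c_k=(x R y)$, the prefix $(c_1,\ldots,c_{k-1})$ is a cycle at $x$ avoiding the edge and the suffix $(c_{k+1},\ldots,c_s)$ is a cycle at $y$. Each of these pieces is either empty, a self-loop $(z\,\beq\,z)$, or a cycle of length at least two; in all three sub-cases its weak composition contains $\{\beq\}$ (with Lemma~\ref{lem:circle} covering the length $\ge 2$ case). Hence $\ct(\pi)\supseteq\{\beq\}\circw R\circw\{\beq\}=R\supseteq R\cap W$. If instead $c_k=(y R^{-1} x)$, the prefix $\pi_1$ is an honest path from $x$ to $y$ inside $\Gamma\setminus\{(x R y)\}$, so $\ct(\pi_1)\supseteq W$, and the suffix $\pi_2$ is a strictly shorter path from $x$ to $y$ in $\Gamma$, so the induction hypothesis gives $\ct(\pi_2)\supseteq R\cap W$. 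Assuming $R\cap W\neq\varnothing$ (otherwise the inclusion is trivial), the cycle law of Proposition~\ref{prop:ra}, applied to $(W\circw R^{-1})\cap\{\beq\}\neq\varnothing\Leftrightarrow R\cap W\neq\varnothing$, forces $\beq\in W\circw R^{-1}$. Chaining the inclusions then gives
\[
\ct(\pi)=\ct(\pi_1)\circw R^{-1}\circw \ct(\pi_2)\supseteq W\circw R^{-1}\circw(R\cap W)\supseteq\{\beq\}\circw(R\cap W)=R\cap W,
\]
closing the induction and hence establishing $S=R\cap W$.

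The main obstacle is precisely this $(y R^{-1} x)$ case: the first use of the forbidden edge is traversed in reverse, so the pure cycle trick of the $(x R y)$ case no longer suffices, and one has to recurse on the suffix. The step that rescues the argument is the passage from $R\cap W\neq\varnothing$ to $\beq\in W\circw R^{-1}$ via the cycle law, which lets $W\circw R^{-1}$ act like the identity in front of the inductively controlled factor $\ct(\pi_2)$.
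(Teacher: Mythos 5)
Your proof is correct and follows the same skeleton as the paper's: both obtain $S\subseteq R\cap W$ from Lemma~\ref{lem:int-paths} and prove the reverse inclusion by showing $\ct(\pi)\supseteq R\cap W$ for every path $\pi$ from $x$ to $y$, with a case split according to whether $\pi$ avoids the $\{x,y\}$ edge, traverses $(x R y)$, or traverses $(y R^{-1} x)$. The first two cases are handled identically (in the $(x R y)$ case both proofs reduce to Lemma~\ref{lem:circle} applied to the cycles flanking the edge). The one genuine divergence is the $(y R^{-1} x)$ case: the paper splits $\pi$ just \emph{before} the first occurrence $c_i=(y\, R^{-1}\, x)$, so that the entire suffix $\pi_{\geq i}$ is a cycle at $y$; Lemma~\ref{lem:circle} then gives $\beq\in\ct(\pi_{\geq i})$ and hence $\ct(\pi)\supseteq \ct(\pi_{<i})\circw\{\beq\}\supseteq W$ outright, with no recursion. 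You instead split \emph{after} the reverse edge, which leaves a shorter $x$-to-$y$ path as the suffix and forces you into an induction on path length plus the extra cycle-law step $R\cap W\neq\varnothing\Rightarrow\beq\in W\circw R^{-1}$ so that $W\circw R^{-1}$ can act as the identity in front of the inductively controlled factor. Both arguments are sound (your treatment of the degenerate case $R\cap W=\varnothing$ and of empty or length-one cycle pieces is careful), but the paper's choice of split point eliminates both the induction and the cycle-law detour; it is worth internalising that trick, since the same ``cut before the reverse edge so the tail becomes a cycle'' decomposition reappears in the proof of Proposition~\ref{prop:redun}.
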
 
\begin{proof}
{Because $(x R y)$ is the only path with length 1  from $x$ to $y$ in $\Gamma$, Lemma~\ref{lem:int-paths} in fact asserts that}  $S$ is the intersection of $R$ and the weak compositions of all paths in $\Gamma$ with length $\geq 2$. Note that each path from $x$ to $y$ in $\Gamma\setminus\{(x R y)\}$ has length $\geq 2$. We know $S\subseteq R\cap W$. 

To show $S\supseteq R\cap W$, we only need {to} show $\ct(\pi)\supseteq R\cap W$ for every path from $x$ to $y$ in $\Gamma$ with length $\geq 2$. Suppose $\pi=(c_1, c_2, ..., c_s)$ ($s\geq 2$)  is such a path and $c_i=(u_{i-1} R_i u_i)$, $u_0=x,u_s=y$. 

\begin{figure}[h]
 \centering
\begin{tabular}{ccc}
\includegraphics[width=0.25\textwidth]{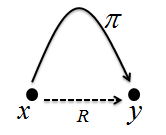}
 &\includegraphics[width=0.4\textwidth]{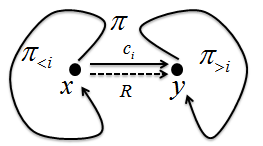}
 & \includegraphics[width=0.25\textwidth]{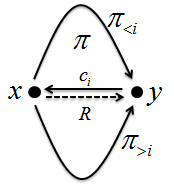}
\end{tabular}
 \caption{{Illustration of the three types of paths: (a) Case 1, (b) Case 2, (c) Case 3, where solid lines represent constraints or paths contained in $\pi$ and the dashed line represents the  constraint $(xRy)$ in $\Gamma$.}}
 \label{fig:prop:+circle+Proof}
\end{figure}

There are three types of paths {(see Figure~\ref{fig:prop:+circle+Proof} for illustration)}.

Case 1. $\pi$ contains neither $(x R y)$ nor $(y R^{-1} x)$. Clearly $\pi$ is a path from $x$ to $y$ in $\Gamma\setminus\{(x R y)\}$. By definition we have $\ct(\pi) \supseteq W$.

Case 2. If  $c_i=(x R y)$ for some $1\leq i\leq s$, then $\ct(\pi)=\ct(\pi_{<i})\circw R \circw \ct(\pi_{>i})$. Note that either $\pi_{<i}$ or $\pi_{>i}$ is a nonempty cycle. By Lemma~\ref{lem:circle} we know the weak composition of each cycle contains $\beq$. Therefore, we know $\ct(\pi)\supseteq R$.

Case 3. If $c_i=(y R^{-1} x)$ for some $1\leq i\leq s$, then $\ct(\pi)=\ct(\pi_{<i})\circw\ct(\pi_{\geq i})$. Without loss of generality, we assume $c_i$ is the first constraint in $\pi$ such that $c_i=(y R^{-1} x)$. It is clear that $\pi_{<i}$ is a path of Case 1 and hence $W\subseteq \ct(\pi_{<i})$. Note that $\pi_{\geq i}$ is a path from $y$ to itself. By Lemma~\ref{lem:circle} we know $\beq\in \ct(\pi_{\geq i})$ hence $\ct(\pi)=\ct(\pi_{<i})\circw\ct(\pi_{\geq i})\supseteq W\circw \beq =W$.

This shows that $R\cap W$ is contained in the weak composition of every path from $x$ to $y$ in $\Gamma$ with length $\geq 2$. Since $S$ is the intersection of $R$ and all paths from $x$ to $y$ in $\Gamma$ with length $\geq 2$, this shows that $S\supseteq R\cap W$. Therefore we have $S=R\cap W$.
\end{proof}
{As Lemma~\ref{lem:int-paths}, the above result does not hold in general. Consider the network shown in Figure~\ref{fig:16} and the constraint from $v_1$ to $v_2$. We have $R=\{\bdr,\bpp\}$, $S=\{\bdr\}$, but $R\cap W=\{\bdr,\bpp\}\not=S$.
}

\subsection{Correspondence Between Redundant Constraints in $\Gamma$ and $\Gamma_p$}

Suppose $\Gamma$ is an RCC5/8 network over a distributive subalgebra $\mathcal{S}$ and $\Gamma_p$ its a-closure. Let $(x R y)$ and $(x S y)$ be the constraints from $x$ to $y$ in $\Gamma$ and $\Gamma_p$ respectively. {We prove that $(x R y)$ is redundant in $\Gamma$ iff $(x S y)$ is redundant in $\Gamma_p$. To this end, we need several lemmas.} 

{The following two lemmas show that a constraint $(x R y)$ in $\Gamma$ is redundant iff $R$ contains the intersection of the weak compositions of all paths from $x$ to $y$ in $\Gamma\setminus\{(x R y)\}$.
\begin{lemma}\label{cor:intesection-redundant-if}
Suppose $\Gamma$ is a consistent RCC5/8 network and $(x R y)$ a constraint in $\Gamma$. Assume that $W$ is the intersection of the weak compositions of all paths from $x$ to $y$ in $\Gamma\setminus\{(x R y)\}$. Then $(x R y)$ is redundant in $\Gamma$ if $R\supseteq W$.
\end{lemma}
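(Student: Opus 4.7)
The plan is to show directly that every solution of $\Gamma\setminus\{(xRy)\}$ satisfies the constraint $(xRy)$. The key observation is that the weak composition $R_1\circw R_2\circw\cdots\circw R_s$ of any path $\pi$ contains the ordinary composition $R_1\circ R_2\circ\cdots\circ R_s$, so the constraints along $\pi$ entail $(x\,\ct(\pi)\,y)$.

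More concretely, let $\sigma$ be an arbitrary solution of $\Gamma\setminus\{(xRy)\}$. First I would fix any path $\pi=(c_1,c_2,\ldots,c_s)$ from $x$ to $y$ in $\Gamma\setminus\{(xRy)\}$ with $c_i=(u_{i-1}R_iu_i)$ and $u_0=x$, $u_s=y$. Since each $c_i$ is a constraint in $\Gamma\setminus\{(xRy)\}$, we have $(\sigma(u_{i-1}),\sigma(u_i))\in R_i$ for $1\leq i\leq s$, so by definition of relational composition $(\sigma(x),\sigma(y))\in R_1\circ R_2\circ\cdots\circ R_s$. Because weak composition always contains ordinary composition, this yields
\begin{align*}
(\sigma(x),\sigma(y))\in R_1\circw R_2\circw\cdots\circw R_s=\ct(\pi).
\end{align*}

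As $\pi$ was arbitrary, $(\sigma(x),\sigma(y))$ lies in $\ct(\pi)$ for every path $\pi$ from $x$ to $y$ in $\Gamma\setminus\{(xRy)\}$, hence $(\sigma(x),\sigma(y))\in W$. The hypothesis $R\supseteq W$ then gives $(\sigma(x),\sigma(y))\in R$, so $\sigma$ satisfies $(xRy)$. Since $\sigma$ was an arbitrary solution of $\Gamma\setminus\{(xRy)\}$, we conclude $\Gamma\setminus\{(xRy)\}\models (xRy)$, i.e., $(xRy)$ is redundant in $\Gamma$.

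There is essentially no obstacle here: the argument is a one-shot application of the definitions together with the inclusion $R\circ S\subseteq R\circw S$. Note that neither distributivity nor the assumption that $\Gamma$ is over a distributive subalgebra is needed for this direction; the converse implication (which will presumably be stated and proved as the companion lemma) is where those stronger hypotheses will be required, together with Lemma~\ref{prop:+circle+}.
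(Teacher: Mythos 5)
Your proof is correct and follows essentially the same route as the paper's: the paper likewise argues that $\Gamma\setminus\{(x R y)\}$ entails $(x\,\ct(\pi)\,y)$ for every path $\pi$ and hence entails $(x W y)$, from which $R\supseteq W$ gives redundancy. You merely unpack the entailment step via the inclusion $R_1\circ\cdots\circ R_s\subseteq R_1\circw\cdots\circw R_s$, which the paper treats as already established; your closing remark that distributivity is not needed here also matches the paper, which states this lemma for arbitrary consistent RCC5/8 networks.
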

\begin{proof}
Write $\Gamma'\equiv \Gamma\setminus\{(x R y)\}$. For every path $\pi$ from $x$ to $y$ in $\Gamma'$, we know $\Gamma'$ entails $(x \ct(\pi) y)$. By the definition of $W$, this implies that $\Gamma'$ entails $(x W y)$. Suppose $R\supseteq W$. It is clear that every solution of $\Gamma'$ also satisfies $(x R y)$, and therefore, $(x R y)$ is redundant in $\Gamma$.
\end{proof}
}
\begin{lemma}\label{cor:intesection-redundant-onlyif}
Let $\mathcal{S}$ be a distributive subalgebra of RCC5/8. Suppose $\Gamma$ is {an all-different} network over $\mathcal{S}$ and $(x R y)$ is a constraint in $\Gamma$. Assume that $W$ is the intersection of the weak compositions of all paths from $x$ to $y$ in $\Gamma\setminus\{(x R y)\}$. Then $(x R y)$ is redundant in $\Gamma$ only if $R\supseteq W$.
\end{lemma}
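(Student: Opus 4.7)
The plan is to prove the contrapositive: assume $(x R y)$ is redundant in $\Gamma$, and deduce $W\subseteq R$.

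The first step is to realise that the work has essentially been done already in Lemma~\ref{lem:int-paths} and Theorem~\ref{thm:minimal}, provided we apply them to the right network. I would consider $\Gamma'\equiv\Gamma\setminus\{(xRy)\}$, which by our conventions is the same network as $\Gamma$ except that the constraint between $x$ and $y$ is the universal relation $\star$. Since every distributive subalgebra of RCC5/8 contains $\widehat{\mathcal{B}}_l$ and, in particular, $\star$, we know $\Gamma'$ is still a network over $\mathcal{S}$. Moreover, $\Gamma$ is all-different and hence consistent, so $\Gamma'$ (which has a larger solution set) is consistent as well.

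Next, I would apply Lemma~\ref{lem:int-paths} to $\Gamma'$: the relation $W'$ between $x$ and $y$ in the a-closure $\Gamma'_p$ is exactly the intersection of the weak compositions of \emph{all} paths from $x$ to $y$ in $\Gamma'$. Any path that uses the direct edge $(x,y)$ or its converse $(y,x)$ in $\Gamma'$ contains $\star$ as one of the factors of its weak composition, and hence contributes $\star$ to the intersection; such paths are absorbed. Therefore $W'$ coincides with the intersection over all paths from $x$ to $y$ in $\Gamma$ that avoid these two edges, which is precisely the $W$ in the statement. Then Theorem~\ref{thm:minimal}, applied to the consistent network $\Gamma'$ over $\mathcal{S}$, tells us that $\Gamma'_p$ is the minimal network of $\Gamma'$. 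Consequently every basic relation $\alpha\in W$ is \emph{feasible} in $\Gamma'$: there exists a solution $\sigma_\alpha$ of $\Gamma'$ with $(\sigma_\alpha(x),\sigma_\alpha(y))$ an instance of $\alpha$.

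To finish, I use the redundancy hypothesis: every solution of $\Gamma'$ must satisfy $(xRy)$. Applied to each $\sigma_\alpha$ with $\alpha\in W$, this forces $\alpha\in R$, so $W\subseteq R$, as desired. The only non-routine point I anticipate is the bookkeeping in the second step, namely the observation that enlarging the constraint between $x$ and $y$ from $R$ to $\star$ does not change the value $W$ computed via Lemma~\ref{lem:int-paths}, because every path touching the enlarged edge contributes the universal relation and is swallowed by the intersection; once that is made explicit, the rest is an immediate combination of Lemma~\ref{lem:int-paths} and Theorem~\ref{thm:minimal}.
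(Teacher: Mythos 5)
Your proposal is correct and follows essentially the same route as the paper: pass to $\Gamma'=\Gamma\setminus\{(xRy)\}$, identify the $x$--$y$ entry of $\Gamma'_p$ with $W$, invoke Theorem~\ref{thm:minimal} to get feasibility of every basic relation in $W$, and let redundancy force each such relation into $R$. The only (immaterial) difference is that the paper gets the identification $T=W$ by citing Lemma~\ref{prop:+circle+} applied to $\Gamma'$, whereas you derive it directly from Lemma~\ref{lem:int-paths} together with the observation that paths through the universal edge are absorbed by the intersection --- which is sound, since any weak composition with a nonempty factor and a $\star$ factor is $\star$ in RCC5/8.
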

\begin{proof}
Suppose $(x R y)$ is redundant in $\Gamma$. Then each solution of $\Gamma'=\Gamma\setminus\{(x R y)$ also satisfies $(x R y)$. Write $(x T y)$ for the constraint between $x$ and $y$ in $\Gamma'_p$, the a-closure of $\Gamma'$. By {Lemma~\ref{prop:+circle+}} we know $T=W$. Furthermore, {by Theorem~\ref{thm:minimal}}, we know each basic relation in $T$ is feasible in $\Gamma'$. This implies that $T=W$ is contained in $R$.  
\end{proof}
{This result does not hold in general. Consider the constraint network $\Gamma$ over $\horn_5$ shown in Figure~\ref{fig:22} and the constraint from $v_1$ to $v_2$. It is easy to show that $\Gamma$ is path-consistent, i.e., $\Gamma=\Gamma_p$, and  $(v_1\{\bpp\} v_2)$ is redundant in $\Gamma$. Furthermore, we have $W=\{\bpp,\beq\}$, which is not contained in $R=\{\bpp\}$.
}
\begin{figure}[htb]
\centering
\includegraphics[width=0.35\columnwidth]{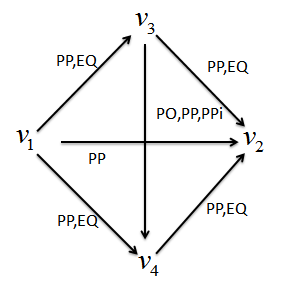} 
{
\caption{A counter-example of Lemma~\ref{cor:intesection-redundant-onlyif}: a path-consistent constraint network $\Gamma=\Gamma_p$ over $\horn_5$.}
\label{fig:22}
}
\end{figure} 

{The above characterisation of redundant constraints can be strengthened if $\Gamma$ is itself path-consistent.}

\begin{lemma}\label{lem:pcnetwork-redun}
Let $\mathcal{S}$ be a distributive subalgebra of RCC5/8. Suppose $\Gamma$ is an all-different and path-consistent network over $\mathcal{S}$. Then a constraint $(v_i R_{ij} v_j)$ is redundant in $\Gamma$ iff $R_{ij}=\bigcap\{ R_{ik}\circw R_{kj}: k\not=i,j\}$, i.e., $R_{ij}$ is the intersection of the weak compositions of all paths from $v_i$ to $v_j$ which have length $2$.
\end{lemma}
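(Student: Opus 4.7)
The plan is to prove the two directions of the iff using Lemmas~\ref{cor:intesection-redundant-if} and~\ref{cor:intesection-redundant-onlyif} as the main building blocks. Throughout, write $W_2 \df \bigcap\{R_{ik}\circw R_{kj}: k\neq i,j\}$ for the intersection of the weak compositions of length-$2$ paths from $v_i$ to $v_j$, and $W$ for the intersection of the weak compositions of \emph{all} paths from $v_i$ to $v_j$ in $\Gamma'\df\Gamma\setminus\{(v_iR_{ij}v_j)\}$. Trivially $W\subseteq W_2$, since the length-$2$ paths form a subfamily of all paths in $\Gamma'$.

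For the ``if'' direction, assume $R_{ij}=W_2$. Each length-$2$ path $v_i\to v_k\to v_j$ with $k\neq i,j$ is a path in $\Gamma'$, so the intersection defining $W_2$ contains the intersection defining $W$; hence $R_{ij}=W_2\supseteq W$, and Lemma~\ref{cor:intesection-redundant-if} immediately yields that $(v_iR_{ij}v_j)$ is redundant.

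For the ``only if'' direction, assume $(v_iR_{ij}v_j)$ is redundant. Lemma~\ref{cor:intesection-redundant-onlyif} gives $R_{ij}\supseteq W$, and path-consistency trivially gives $R_{ij}\subseteq W_2$. The crucial step is to establish $W=W_2$ in any path-consistent network. Given any path $\pi=(v_iR_{ik_1}v_{k_1},\ldots,v_{k_{s-1}}R_{k_{s-1}j}v_j)$ in $\Gamma'$, the removal of the direct edge forces $k_1\neq j$, and by absorbing any trivial $\beq$-loop we may assume $k_1\neq i$. A standard induction on path length, using path-consistency of the triangles formed by the suffix of $\pi$ together with the monotonicity of $\circw$, shows that the weak composition of the subpath from $v_{k_1}$ to $v_j$ is a superset of $R_{k_1 j}$. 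Hence $\ct(\pi)\supseteq R_{ik_1}\circw R_{k_1 j}$, and since $k_1\neq i,j$ the right-hand side is one of the terms in the intersection defining $W_2$, so $\ct(\pi)\supseteq W_2$. Intersecting over all $\pi$ gives $W\supseteq W_2$, hence $W=W_2$, and combining with $R_{ij}\supseteq W$ and $R_{ij}\subseteq W_2$ yields $R_{ij}=W_2$.

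The main obstacle is the reverse inclusion $W\supseteq W_2$ in the ``only if'' direction. The delicate point is ensuring that the length-$2$ bound $R_{ik_1}\circw R_{k_1 j}$ we obtain for $\ct(\pi)$ actually appears in the intersection defining $W_2$, i.e.\ that $k_1\notin\{i,j\}$. This is precisely where the removal of the edge $(v_iR_{ij}v_j)$ is used: it forces the first non-trivial step of any path in $\Gamma'$ to leave the pair $\{v_i,v_j\}$ towards a genuine third vertex, ruling out the otherwise circular case $k_1=j$ (which would only yield the tautology $\ct(\pi)\supseteq R_{ij}$).
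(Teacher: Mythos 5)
Your proof is correct and takes essentially the same route as the paper's: the ``if'' direction via Lemma~\ref{cor:intesection-redundant-if} after observing $W\subseteq W_2$, and the ``only if'' direction by combining Lemma~\ref{cor:intesection-redundant-onlyif} and path-consistency with the key observation that every path in $\Gamma\setminus\{(v_iR_{ij}v_j)\}$ has weak composition containing some $R_{ik_1}\circw R_{k_1j}$ with $k_1\neq i,j$ (where the paper invokes the easy containment half of Lemma~\ref{lem:int-paths}, you carry out the same induction directly). Your explicit treatment of the corner cases $k_1=i$ and $k_1=j$ is a detail the paper glosses over but does not change the argument.
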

\begin{proof}
{
Let $W_{ij}$ be the intersection of the weak compositions of all paths from $v_i$ to $v_j$ in $\Gamma\setminus\{(v_i R_{ij} v_j)\}$. It is clear $W_{ij}\subseteq \bigcap\{ R_{ik}\circw R_{kj}: k\not=i,j\}$.
}
{
Suppose $R_{ij}=\bigcap\{ R_{ik}\circw R_{kj}: k\not=i,j\}$. We have $R_{ij} \supseteq W_{ij}$. By Lemma~\ref{cor:intesection-redundant-if}, this immediately implies that $(v_i R_{ij} v_j)$ is redundant in $\Gamma$.
}
{
On the other hand, suppose $(v_i R_{ij} v_j)$ is redundant in $\Gamma$. We show $R_{ij}=\bigcap\{ R_{ik}\circw R_{kj}: k\not=i,j\}$. By Lemma~\ref{cor:intesection-redundant-onlyif} we know $R_{ij}\supseteq W_{ij}$. Let $\pi=(c_1, c_2, ..., c_s)$ ($s\geq 2$)  be an arbitrary path from $v_i$ to $v_j$ in $\Gamma\setminus\{(v_i R_{ij} v_j)\}$ such that $c_k=(u_{k-1} R_k u_k)$, $u_0=v_i,u_s=v_j$.  Then $\ct(\pi)=R_1\circw \ct(\pi_{>1})$. Suppose $u_1=v_{i'}$.  Then $R_1=R_{ii'}$ and $\pi_{>1}$ is a path from $v_{i'}$ to $v_j$. Because $\Gamma$ is path-consistent, we know by Lemma~\ref{lem:int-paths} that  $R_{i'j}$ is contained in $\ct(\pi_{>1})$.  This implies that $\ct(\pi)$ contains $R_{ii'}\circw R_{i'j}$ and, therefore, $\bigcap\{ R_{ik}\circw R_{kj}: k\not=i,j\}$. Due to the arbitrariness of $\pi$, $W_{ij}$ also contains $\bigcap\{ R_{ik}\circw R_{kj}: k\not=i,j\}$. Since $R_{ij}\supseteq W_{ij}$, we have $R_{ij}\supseteq \bigcap\{ R_{ik}\circw R_{kj}: k\not=i,j\}$. By the path-consistency of $\Gamma$, we have $R_{ij}\subseteq R_{ik}\circw R_{kj}$ for every $k\not=i,j$. This shows $R_{ij} =\bigcap\{ R_{ik}\circw R_{kj}: k\not=i,j\}$.
}
\end{proof}

{This result does not hold in general. Again, consider the path-consistent RCC5 network $\Gamma$ over $\horn_5$ shown in Figure~\ref{fig:22}. Although $(\pp{v_1}{v_2})$ is redundant in $\Gamma$, $R_{13}\circw R_{32} \cap R_{14}\circw R_{42} = \{\bpp,\beq\}$ strictly contains $\{\bpp\}$.
}

\vspace*{2mm}

{We next show that $(x R y)$ is redundant in $\Gamma$ iff  $(x S y)$ is redundant in $\Gamma_p$.} 

 \begin{lemma}\label{prop:redun-onlyif}
Suppose $\Gamma$ is {an all-different RCC5/8 network}. Assume that $(x R y)$ and $(x S y)$ are the constraints from $x$ to $y$ in $\Gamma$ and $\Gamma_p$ respectively. Then $(x R y)$ is redundant in $\Gamma$ only if $(x S y)$ is redundant in $\Gamma_p$.
\end{lemma}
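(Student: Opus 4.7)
The plan is to argue the lemma directly, using only two properties of the a-closure: $\Gamma_p$ and $\Gamma$ have the same set of solutions, and $\Gamma_p$ refines $\Gamma$ constraint by constraint. No appeal to distributivity or to the path characterisations of the previous subsection will be needed.

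Let me set $\Gamma' := \Gamma \setminus \{(x R y)\}$ and $\Gamma_p^\ast := \Gamma_p \setminus \{(x S y)\}$. Unpacking the hypothesis, "$(x R y)$ is redundant in $\Gamma$" means exactly $\Gamma' \models (x R y)$. I want to deduce $\Gamma_p^\ast \models (x S y)$, i.e.\ that $(x S y)$ is redundant in $\Gamma_p$. First I would observe that $\Gamma \models (x S y)$: this is because $(x S y)$ is by definition a constraint of $\Gamma_p$, and $\Gamma_p$ and $\Gamma$ share the same solution set, so every solution of $\Gamma$ satisfies $(x S y)$. Combining this with $\Gamma' \models (x R y)$ gives the intermediate claim $\Gamma' \models (x S y)$: every solution of $\Gamma'$ satisfies $(x R y)$ and therefore is a solution of $\Gamma = \Gamma' \cup \{(x R y)\}$, hence satisfies $(x S y)$.

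The final step is to observe that $\Gamma_p^\ast$ refines $\Gamma'$. Under our convention that a missing constraint is implicitly $\star$, both networks carry $\star$ between $x$ and $y$. For every other pair of variables $(u,v)$, the constraint in $\Gamma_p^\ast$ is the constraint in $\Gamma_p$, which is a subset of the constraint in $\Gamma$, which is the constraint in $\Gamma'$. Hence any solution of $\Gamma_p^\ast$ is a solution of $\Gamma'$; by the previous paragraph it satisfies $(x S y)$, so $\Gamma_p^\ast \models (x S y)$, as required.

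The only real point of care is the bookkeeping in the refinement $\Gamma_p^\ast \sqsubseteq \Gamma'$, specifically the handling of the implicit $\star$ after a constraint is deleted; aside from this, the proof is essentially formal manipulation of the notions of "refines," "same solution set," and "entails." It is worth noting that the argument does not really use either the all-different assumption or that $\Gamma$ lies in a distributive subalgebra; the all-different hypothesis enters only to guarantee that $\Gamma$ is consistent so that $\Gamma_p$ is well-behaved (no empty component) and therefore shares a solution set with $\Gamma$.
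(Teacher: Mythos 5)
Your proof is correct and follows essentially the same route as the paper's: both arguments chain a solution of $\Gamma_p\setminus\{(x S y)\}$ through the refinement into $\Gamma\setminus\{(x R y)\}$, use redundancy to recover $(x R y)$ and hence a solution of $\Gamma$, and then use the equivalence of $\Gamma$ with $\Gamma_p$ to conclude that $(x S y)$ is satisfied. Your explicit bookkeeping of the implicit $\star$ after deletion, and your remark that neither distributivity nor (beyond guaranteeing consistency) the all-different hypothesis is needed, are accurate refinements of what the paper leaves implicit.
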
 
\begin{proof}
Write $\Gamma'$ and ${\Gamma''}$ for $\Gamma\setminus\{(x R y)\}$ and $\Gamma_p\setminus\{(x S y)\}$ respectively. 

Suppose $(x R y)$ is redundant in $\Gamma$. Then $\Gamma'$ entails $(x R y)$. Note that ${\Gamma''}$ refines $\Gamma'$. We know every solution of ${\Gamma''}$ is a solution of $\Gamma'$, hence also satisfies $(x R y)$. In other words, each solution of ${\Gamma''}$ is a solution of $\Gamma$. Since $\Gamma$ is equivalent to its a-closure, we know each solution of ${\Gamma''}$ is also a solution of $\Gamma_p$, hence also satisfies $(x S y)$. Therefore, $(x S y)$ is redundant in $\Gamma_p$.
\end{proof}

 \begin{proposition}\label{prop:redun}
Let $\mathcal{S}$ be a distributive subalgebra of RCC5/8. Suppose $\Gamma$ is {an all-different network over $\mathcal{S}$}. Assume that $(x R y)$ and $(x S y)$ are the constraints from $x$ to $y$ in $\Gamma$ and $\Gamma_p$ respectively. Then $(x R y)$ is redundant in $\Gamma$ iff $(x S y)$ is redundant in $\Gamma_p$.
\end{proposition}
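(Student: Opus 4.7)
The only-if direction has already been established in Lemma~\ref{prop:redun-onlyif}, and its proof does not use the distributivity of $\mathcal{S}$. My plan is therefore to focus on the if direction: assuming $(xSy)$ is redundant in $\Gamma_p$, I want to deduce that $(xRy)$ is redundant in $\Gamma$. By the characterisation supplied by Lemmas~\ref{cor:intesection-redundant-if} and~\ref{cor:intesection-redundant-onlyif} (the latter is applicable since $\Gamma$ is all-different and over the distributive subalgebra $\mathcal{S}$), it suffices to show $W\subseteq R$, where $W$ is the intersection of $\ct(\pi)$ over all paths $\pi$ from $x$ to $y$ in $\Gamma\setminus\{(xRy)\}$.

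My first step would be to apply Lemma~\ref{lem:pcnetwork-redun} to $\Gamma_p$, which is an all-different, path-consistent network over $\mathcal{S}$. The redundancy hypothesis then yields
\[
S=\bigcap_{k\neq x,y} S_{xk}\circw S_{ky}.
\]
Next I would expand each $S_{xk}$ and $S_{ky}$ using Lemma~\ref{lem:int-paths} applied to the pairs $(x,k)$ and $(k,y)$ in $\Gamma$, writing them as intersections of $\ct(\pi_1)$ and $\ct(\pi_2)$ over paths in $\Gamma$ from $x$ to $k$ and from $k$ to $y$ respectively. Using the distributivity of weak composition over nonempty intersections (Lemma~\ref{lemma:distri}), together with the associativity of $\circw$ guaranteed by Proposition~\ref{prop:ra}, I would commute intersections past compositions to obtain $S_{xk}\circw S_{ky}=\bigcap_{\pi_1,\pi_2}\ct(\pi_1\pi_2)$, and consequently
\[
\bigcap_k S_{xk}\circw S_{ky}=\bigcap\{\ct(\pi):\pi \text{ is a path from } x \text{ to } y \text{ in } \Gamma \text{ with } |\pi|\geq 2\}.
\]
Combined with Lemma~\ref{lem:int-paths} applied to the pair $(x,y)$, which gives $S=R\cap\bigcap_{|\pi|\geq 2}\ct(\pi)$, the above identity forces $\bigcap_{|\pi|\geq 2}\ct(\pi)\subseteq R$.

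The remaining step is to relate this intersection to $W$. Following the case analysis used in the proof of Lemma~\ref{prop:+circle+}, I would split length-$\geq 2$ paths from $x$ to $y$ in $\Gamma$ into Case~1 (avoiding both $(xRy)$ and $(yR^{-1}x)$, which yield exactly $W$), Case~2 (using $(xRy)$ internally), and Case~3 (using $(yR^{-1}x)$ internally). Using Lemma~\ref{lem:circle} to analyse the cycle at $x$ or $y$ that every Case~2 or Case~3 path induces, together with the cancellation $R\circw R^{-1}\supseteq\{\beq\}$ (read off the RCC5/8 composition tables), I would argue that the Case~2 and Case~3 intersections contain~$W$, so that $\bigcap_{|\pi|\geq 2}\ct(\pi)=W$. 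Combining with the previous paragraph then gives $W\subseteq R$, and Lemma~\ref{cor:intesection-redundant-if} finishes the proof.

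The main obstacle is precisely this last case analysis, in which one has to control the weak compositions of Case~2 and Case~3 paths and verify that no pathological cycle can shrink $\ct(\pi)$ below $W$. This is where the distributivity of $\mathcal{S}$ and the all-different assumption are essential: distributivity lets us manipulate the Lemma~\ref{lem:pcnetwork-redun} identity into an intersection over concatenated paths, and the all-different hypothesis ensures that Lemma~\ref{lem:circle} applies to every cycle encountered.
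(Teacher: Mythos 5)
Your setup is right and follows the paper up to a point: the only-if direction via Lemma~\ref{prop:redun-onlyif}, the reduction to showing $W\subseteq R$ via Lemma~\ref{cor:intesection-redundant-if}, and the use of Lemma~\ref{lem:pcnetwork-redun} plus Lemma~\ref{lem:int-paths} and distributivity to identify $S$ with the intersection of $\ct(\pi)$ over all paths $\pi$ from $x$ to $y$ in $\Gamma$ of length $\geq 2$ are all exactly what the paper does. The gap is in your final step, where you claim that the Case~1/2/3 analysis yields $\bigcap_{|\pi|\geq 2}\ct(\pi)=W$. This identity is essentially equivalent to the conclusion you are trying to prove: since $\bigcap_{|\pi|\geq 2}\ct(\pi)=S=R\cap W$ (Lemma~\ref{prop:+circle+}), asserting it equals $W$ is the same as asserting $W\subseteq R$. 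And the cycle analysis does not deliver it: for a Case~2 path $\pi$ that uses $(x R y)$ as an internal edge together with a cycle at $x$ or at $y$, Lemma~\ref{lem:circle} only gives $\ct(\pi)\supseteq \bo_l\circw R$ or $\ct(\pi)\supseteq R\circw\bo_l$ --- quantities tied to $R$, not to $W$ --- and there is no $R\circw R^{-1}$ cancellation available because such a path uses $R$ once plus a cycle, not $R$ followed by its converse. The paper's footnote at this exact point warns that one \emph{cannot} directly show $S=W$.

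What the paper actually does after this stage is derive only the weaker inclusion $R\cap W\supseteq W\cap \bo_l\circw R\cap R\circw \bo_l$ (its equation~\eqref{eq:s=rw}) and then finish with an exhaustive case analysis on the possible forms of $R$ as a relation in a distributive subalgebra: if $\bpo\in R$ then $\bo_l\circw R\cap R\circw\bo_l$ is universal and $W\subseteq R$ falls out immediately; if $\bpo\notin R$ then $R$ is one of a short explicit list (a non-$\bpo$, non-$\beq$ basic relation, or $\{\btpp,\bntpp\}$, $\{\bdc,\bec\}$, $\{\btpp,\beq\}$, etc.), and for each one a composition-table computation shows that the basic relations excluded from $R$ are forced into $\bo_l\circw R\cap R\circw\bo_l$ and hence excluded from $W$. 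This enumeration is where the specific algebra of RCC5/8 enters --- the proposition fails for IA and RA precisely because the analogous enumeration fails there --- and it is entirely absent from your proposal. Without it, your argument does not close.
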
 

\begin{proof}
{
The necessity part has been proved in Lemma~\ref{prop:redun-onlyif}. We only need {to} show the sufficiency part.}  
Write $\Gamma'$ and $\Gamma''$ for $\Gamma\setminus\{(x R y)\}$ and $\Gamma_p\setminus\{(x S y)\}$ respectively. Suppose $(x S y)$ is redundant in $\Gamma_p$. Let $W$ be the intersection of the weak compositions of all paths from $x$ to $y$ in $\Gamma\setminus\{(x R y)\}$.  {To show that $(x R y)$ is redundant in $\Gamma$, by Lemma~\ref{cor:intesection-redundant-if}, we only need {to} show $R\supseteq W$.} 

{Recall $S=R\cap W$ by Lemma~\ref{prop:+circle+}. To show $R\subseteq W$,  we first show} 
\begin{align}\label{eq:s=rw}
R\cap W \supseteq W\cap \bo_l\circw R \cap R\circw \bo_l,
\end{align}
where $\bo_l$ is either $\bo_5$ or $\bo_8$ {(cf. Lemma~\ref{lem:circle} for definition)}, according to whether $\Gamma$ is over RCC5 or RCC8.

{Because $(x S y)$ is redundant in $\Gamma_p$, by Lemma~\ref{lem:pcnetwork-redun}, we know $S$ is the intersection of the weak compositions of all paths with length 2 from $x$ to $y$ in $\Gamma_p$.} For each  constraint  $(u_i S_{ij} u_j)$ in any such a path, Lemma~\ref{lem:int-paths} shows that $S_{ij}$ is the intersection of the weak compositions of all paths from $u_i$ to $u_j$ in $\Gamma$. Replace each $(u_i S_{ij} u_j)$ with several paths such that $S_{ij}$ is the intersection of their weak compositions. We get several paths from $x$ to $y$ in $\Gamma$ with length $\geq 2$ such that $S$ is the intersection of the weak compositions of these paths. By Lemma~\ref{lem:int-paths} again we know $S$ is contained in the weak composition of every path from $x$ to $y$ in $\Gamma$. This shows that $S$ is exactly the intersection of the weak compositions of \emph{all} paths from $x$ to $y$ in $\Gamma$ with length $\geq 2$.\footnote{{While we can further show that $S$ is the intersection of the weak compositions of all paths from $x$ to $y$ in $\Gamma$ that have no cycles and are with length $\geq 2$, it is not guaranteed that such a path is in $\Gamma\setminus\{ (x R y)\}$. That is, we cannot directly show $S=W$.}}  

As we have seen in the proof of Lemma~\ref{prop:+circle+}, there are three types of paths. For every path $\pi$ of Case 1 or 3 {(defined  in Lemma~\ref{prop:+circle+})}, we know $\ct(\pi)$ contains $W$. Suppose $\pi$ is a path of Case 2 and $c_i=(x R y)$ for some $1\leq i \leq s$. Then $\ct(\pi)=\ct(\pi_{<i})\circw R\circw \ct(\pi_{>i})$. Note that if $\pi_{<i}$ ($\pi_{>i}$, respectively) is nonempty, then $\ct(\pi_{<i})$ ($\ct(\pi_{>i})$, respectively) contains $\bo_l$ by Lemma~\ref{lem:circle}.  Either $\pi_{<i}$ or $\pi_{>i}$ is a cycle. Therefore, $\ct(\pi)$ contains $\bo_l\circw R \cap R\circw \bo_l \cap \bo_l\circw R \circw \bo_l$. In summary, for each path $\pi$ from $x$ to $y$ in $\Gamma$ with length $\geq 2$, we have $\ct(\pi)\supseteq W\cap \bo_l\circw R \cap R\circw \bo_l \cap \bo_l\circw R \circw \bo_l$. Because $ \bo_l\circw R \circw \bo_l$ is always the universal relation (as $\bpo\circw R\circw \bpo =\bpo\circw\bpo=\star_l$ {by Lemma~\ref{lemma:po/dr/composition}}), we know {$S$, as the intersection of the weak compositions of \emph{all} paths from $x$ to $y$ in $\Gamma$ with length $\geq 2$, contains $W\cap \bo_l\circw R \cap R\circw \bo_l$. Since $S=R\cap W$, we have \eqref{eq:s=rw} immediately.} 

We next show $R\supseteq W$. {Because $\Gamma$ is consistent and satisfies \eqref{eq:noneq}, we know $S=R\cap W$ is neither empty nor $\{\beq\}$, i.e., }
\begin{align*}
\varnothing\not= R\cap W\not=\{\beq\}.
\end{align*}

If $\bpo\in R$, then $\bo_l\circw R \cap R\circw \bo_l\supseteq \bpo\circw\bpo$ is the universal relation. That $R\supseteq W$ follows directly from $R\cap W\supseteq W\cap \star_l= W$.

If $\bpo\not\in R$, then $\bpo\not\in W$ because $\bpo\in \bo_l\circw R \cap R\circw \bo_l$ and \eqref{eq:s=rw} holds. We show $R\supseteq W$. We only consider RCC8 relations. The case for RCC5 relations is similar. Suppose $R$ is a relation in a distributive subalgebra of RCC8 such that $\bpo\not\in R$ and $R\not=\beq$. {Checking the lists of relations in the two maximal distributive subalgebras given in~\ref{app:dis},} $R$ is either a basic relation other than \bpo\ and \beq, or one of the following relations 
\begin{align}\label{eq:rcc8-dist-1} 
 &\{\btpp,\bntpp\},\{\btppi,\bntppi\}, \\ 
&\{\bdc,\bec\}, \{\btpp,\beq\},\{\btppi,\beq\},\notag\\ 
&\{\btpp,\bntpp,\beq\},\{\btppi,\bntppi,\beq\}.\notag
\end{align} 

There are several subcases. Suppose $R$ is a basic relation $\alpha$ other than \bpo\ and \beq.  We write $\alpha^d$ for the other basic relation such that $\{\alpha,\alpha^d\}$ is a relation in \eqref{eq:rcc8-dist-1}. For example, $\bdc^d=\bec$, $\btpp^d=\bntpp$, and ${\btppi}^d=\bntppi$. From the RCC8 composition table we can see 
\begin{align*}
\{\alpha,\alpha^d,\bpo\} &\subseteq \bpo\circw \alpha \cap \alpha \circw \bpo \subseteq \bo_8\circw \alpha \cap \alpha \circw \bo_8 
\end{align*}
holds for every basic relation $\alpha$ other than $\bpo$\ and $\beq$. We assert that $\alpha^d\not\in W$ if $R=\{\alpha\}$. This is because, otherwise, we have $\alpha^d\in W\cap \bo_8\circw R \cap R \circw \bo_8$ and hence by \eqref{eq:s=rw} $\alpha^d\in R\cap W\subseteq R$. A contradiction. In particular, if $\alpha$ is $\bdc$, $\bec$, $\bntpp$, or $\bntppi$, then $W=R$. If $\alpha$ is either $\btpp$\ or $\btppi$, then we can further show that $\beq\in \bo_8\circw \alpha \cap \alpha\circw \bo_8$ and hence $\beq\not\in W$. This implies that $W=R$. 

Suppose $R$ is $\{\bdc,\bec\}$, $\{\btpp,\bntpp,\beq\}$, or $\{\btppi,\bntppi,\beq\}$. Note that $\bpo\not\in W$, and $\varnothing \not= R\cap W \not=\{\beq\}$. This shows that $W$ is contained in $R$. 

Suppose $R$ is $\{\btpp,\bntpp\}$ or $\{\btppi,\bntppi\}$. By \eqref{eq:s=rw} and $\beq\in   \bo_8\circw R \cap R \circw \bo_8$  we know that $W$ does not contain $\beq$. Hence $W$ is contained in $R$.

Suppose $R$ is $\{\btpp,\beq\}$. By \eqref{eq:s=rw} and  $\bntpp\in  \bo_8\circw R \cap R \circw \bo_8$, $W$ cannot contain $\bntpp$. This implies that $W$ is contained in $R$. The case for $R=\{\btppi,\beq\}$ is similar.

In summary, we have ${R\supseteq W}$ in all cases. In other words, $R$ can be obtained as the intersection of all paths from $x$ to $y$ in $\Gamma\setminus\{(x R y)\}$. Hence $(x R y)$ is redundant in $\Gamma$ by Lemma~\ref{cor:intesection-redundant-if}.
\end{proof}

{The  result does not hold in general. Consider the constraint network $\Gamma$ over $\horn_5$ shown in Figure~\ref{23CE} and the constraint from $v_3$ to $v_2$.  It is clear that the constraint $(v_3 \bpp v_2)$ is redundant in $\Gamma_p$. However, $(v_3 \bpp v_2)$ is not redundant in $\Gamma$. This is because $(\dr{v_3}{v_2})$ is consistent with $\Gamma\setminus\{(v_3 \bpp v_2)\}$ (shown in Figure~\ref{23CE}(c)). Actually, it is easy to construct a solution $\{a_1,a_2,a_3,a_4\}$ of $\Gamma\setminus\{(v_3 \bpp v_2)\}$    in which $(\pp{a_3}{a_1}),\ (\pp{a_1}{a_4})$ and $(\dr{a_2}{a_j})$ for $j=1,3,4$. }
\begin{figure}[h]
\centering
\begin{tabular}{cccc}
\hspace*{-4mm}\includegraphics[width=0.33\textwidth]{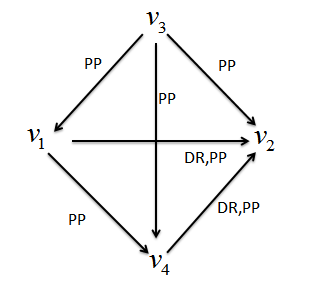}
& {\includegraphics[width=0.31\textwidth]{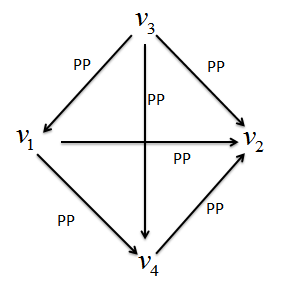}}
&{\includegraphics[width=0.34\textwidth]{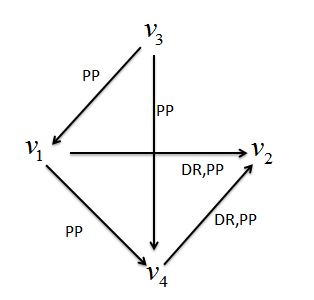}}\\
 (a) & (b) & (c) 
 \end{tabular}
 {
 \caption{A counter-example of Proposition~\ref{prop:redun}: (a) a constraint network $\Gamma$ over $\mathcal{H}_5$; (b) its a-closure $\Gamma_p$; (c)  $\Gamma\setminus\{(v_3 \bpp v_2)\}$. }\label{23CE}
 }
\end{figure}

Recall that Theorem~\ref{thm:minimal} asserts that $\Gamma_p$ is minimal. Proposition~\ref{prop:redun} can be rephrased as follows:

 \begin{proposition}\label{prop:redun+}
Let $\mathcal{S}$ be a distributive subalgebra of RCC5/8. Suppose $\Gamma$ is {an all-different network over $\mathcal{S}$} and $\Gamma_m$ the minimal network of $\Gamma$. Assume that $(x R y)$ and $(x S y)$ are the constraints from $x$ to $y$ in $\Gamma$ and $\Gamma_m$ respectively. Then $(x R y)$ is redundant in $\Gamma$ iff $(x S y)$ is redundant in $\Gamma_m$.
\end{proposition}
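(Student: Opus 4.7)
The plan is to observe that this proposition is essentially a restatement of Proposition~\ref{prop:redun} in which the a-closure $\Gamma_p$ has been replaced by the minimal network $\Gamma_m$. Hence the proof requires only a short bridging argument that identifies these two networks under the hypothesis that $\mathcal{S}$ is a distributive subalgebra.

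The first step is to invoke Theorem~\ref{thm:minimal}, which guarantees that for any consistent network $\Gamma$ over a distributive subalgebra $\mathcal{S}$ of RCC5/8, the a-closure $\Gamma_p$ coincides with the minimal network $\Gamma_m$. Since $\Gamma$ is all-different, it is in particular consistent (an inconsistent network entails everything, in contradiction with \eqref{eq:noneq}), so Theorem~\ref{thm:minimal} applies. Consequently, the constraint $(x S y)$ appearing in $\Gamma_m$ that relates $x$ to $y$ is identical to the corresponding constraint appearing in $\Gamma_p$.

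The second step is simply to note that Proposition~\ref{prop:redun} establishes exactly the equivalence we want, with $\Gamma_p$ in place of $\Gamma_m$. Because $\Gamma_p=\Gamma_m$ as constraint networks (same variables, same relation on each pair), redundancy in $\Gamma_p$ is the same notion as redundancy in $\Gamma_m$. Therefore $(x R y)$ is redundant in $\Gamma$ iff $(x S y)$ is redundant in $\Gamma_p$ iff $(x S y)$ is redundant in $\Gamma_m$, which is the claim.

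There is no real obstacle here, as all the substantive work was carried out in establishing Theorem~\ref{thm:minimal} (via weakly global consistency in Theorem~\ref{thm:wglobal}) and in the proof of Proposition~\ref{prop:redun} (which relied on the path characterisation of Lemma~\ref{lem:int-paths} and Lemma~\ref{prop:+circle+}, and on the structural case analysis for basic relations in the maximal distributive subalgebras of RCC8). The present proposition is best regarded as a convenient reformulation that replaces the algorithmic object $\Gamma_p$ by the semantic object $\Gamma_m$, exploiting the fact that distributivity collapses the distinction between them.
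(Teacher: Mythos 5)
Your proof is correct and follows exactly the paper's route: the paper derives Proposition~\ref{prop:redun+} as an immediate rephrasing of Proposition~\ref{prop:redun}, using Theorem~\ref{thm:minimal} to identify $\Gamma_p$ with $\Gamma_m$ (and the all-different hypothesis to guarantee consistency so that the theorem applies). Nothing is missing.
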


As a consequence, we have our main result.

\begin{theorem}
\label{thm:core=Gamma+}
Let $\mathcal{S}$ be a distributive subalgebra of RCC5/8. Suppose $\Gamma$ is {an all-different network over $\mathcal{S}$} and $\Gamma_c$ the core of $\Gamma$. Then $\Gamma_c$ is equivalent to $\Gamma$ and hence the unique prime subnetwork of $\Gamma$.
\end{theorem}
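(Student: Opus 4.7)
The plan is to establish $\Gamma_c \equiv \Gamma$ by iteratively removing the redundant constraints of $\Gamma$ in an arbitrary order and showing that, at each step, the constraint being removed remains redundant. Once $\Gamma_c \equiv \Gamma$ is proved, irreducibility of $\Gamma_c$ and uniqueness of the prime subnetwork follow by standard arguments.

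First I would set up the iteration. Enumerate the redundant constraints of $\Gamma$ as $c_1,\ldots,c_k$, set $\Gamma_0 = \Gamma$, and define $\Gamma_{i+1} = \Gamma_i \setminus \{c_{i+1}\}$. I then prove by induction on $i$ that $\Gamma_i$ is equivalent to $\Gamma$ for all $i \le k$. The base case is trivial. For the inductive step, assume $\Gamma_i \equiv \Gamma$. Then (a) $\Gamma_i$ is still a network over $\mathcal{S}$, because any removed constraint is now implicitly the universal relation, which belongs to every distributive subalgebra of RCC5/8; (b) $\Gamma_i$ is all-different, because equivalence with $\Gamma$ preserves entailment of the constraints $(\eq{v_i}{v_j})$; and (c) the minimal network of $\Gamma_i$ coincides with $\Gamma_m = \Gamma_p$ by Theorem~\ref{thm:minimal}. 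Since $c_{i+1}=(xRy)$ is redundant in $\Gamma$, Proposition~\ref{prop:redun+} gives that the corresponding constraint $(xSy)$ in $\Gamma_m$ is redundant in $\Gamma_m$. Applying Proposition~\ref{prop:redun+} in the opposite direction to $\Gamma_i$, whose minimal network is still $\Gamma_m$, then yields that $c_{i+1}$ is redundant in $\Gamma_i$. Hence $\Gamma_{i+1} \equiv \Gamma_i \equiv \Gamma$, completing the induction, and $\Gamma_k = \Gamma_c$ is equivalent to $\Gamma$.

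For irreducibility, every $c \in \Gamma_c$ is non-redundant in $\Gamma$, so a fortiori non-redundant in the smaller network $\Gamma_c$; hence $\Gamma_c$ is a prime subnetwork. For uniqueness, let $\Gamma'$ be any prime subnetwork of $\Gamma$. If some $c \in \Gamma_c$ were absent from $\Gamma'$, then $\Gamma' \subseteq \Gamma \setminus \{c\}$, and $\Gamma' \equiv \Gamma$ would force $\Gamma \setminus \{c\} \models c$, contradicting $c \in \Gamma_c$. So $\Gamma_c \subseteq \Gamma'$, and since $\Gamma_c$ already captures the full solution set, any additional constraint in $\Gamma'$ would be redundant in $\Gamma'$, contradicting irreducibility. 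Therefore $\Gamma' = \Gamma_c$.

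The main obstacle is the inductive step: the conjunction of Theorem~\ref{thm:minimal} (removing redundant constraints does not perturb the minimal network) and Proposition~\ref{prop:redun+} (redundancy in $\Gamma$ is witnessed entirely inside $\Gamma_m$) is precisely what makes simultaneous removal of all redundant constraints safe. Outside of distributive subalgebras this invariance fails, as illustrated by Example~\ref{ex:2prime-networks}, where the order of removal matters and prime subnetworks are not unique.
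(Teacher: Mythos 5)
Your proof is correct and takes essentially the same route as the paper: both remove the redundant constraints one at a time and use Proposition~\ref{prop:redun+} twice through the shared minimal network $\Gamma_m$ to show each removal preserves equivalence (your induction is just the contrapositive packaging of the paper's ``largest $i$'' argument). You additionally spell out details the paper leaves implicit --- that each $\Gamma_i$ remains an all-different network over $\mathcal{S}$, and the irreducibility/uniqueness of $\Gamma_c$ --- which is fine but not a different method.
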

\begin{proof}
Suppose $c_1,c_2,\ldots,c_k$ are the redundant constraints of $\Gamma$. Let $\Gamma_0=\Gamma$ and $\Gamma_{i+1}=\Gamma_i\setminus\{c_{i+1}\}$ for $0\leq i\leq k$. Note that $\Gamma_k$ is precisely $\Gamma_c$, the core of $\Gamma$. Suppose $0\leq i< k$ is the largest integer such that $\Gamma_i$ is equivalent to $\Gamma$.  

Suppose $c_{i+1}=(x R y)$ and $(x S y)$ is the corresponding constraint in $\Gamma_m$, the minimal network of $\Gamma$. Note that $c_{i+1}$ is also in $\Gamma_i$. By Proposition~\ref{prop:redun+} we know $(x S y)$ is redundant in $\Gamma_m$ since $(x R y)$ is redundant in $\Gamma$. Because $\Gamma_m$ is also the minimal network of $\Gamma_i$, by Proposition~\ref{prop:redun+} again we know $(x R y)$ is redundant in $\Gamma_i$. This means that $\Gamma_{i+1}$ is equivalent to $\Gamma_i$, hence $\Gamma$. This contradicts our assumption that $i<k$ is the largest integer such that $\Gamma_i$ is equivalent to $\Gamma$. Therefore, $i=k$ and $\Gamma_c$ is equivalent to $\Gamma$. 
\end{proof}

{The above result does not hold in general. For example, consider the RCC5 network $\Gamma$ over $\horn_5$ shown in Figure~\ref{25CE}(a). The core $\Gamma_c$ (shown in Figure~\ref{25CE}(b)) is not equivalent to and hence not a prime subnetwork of $\Gamma$. This is because  $(\dr{v_3}{v_2})$ is feasible in $\Gamma_c$ but not in $\Gamma$.} 
\begin{figure}[h]
\centering
 \begin{tabular}{cc}
 {\includegraphics[width=0.35\textwidth]{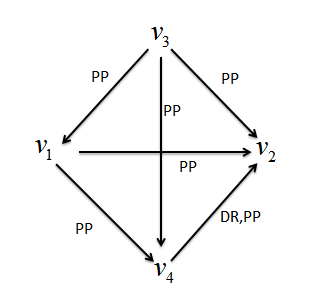}}
 & {\includegraphics[width=0.32\textwidth]{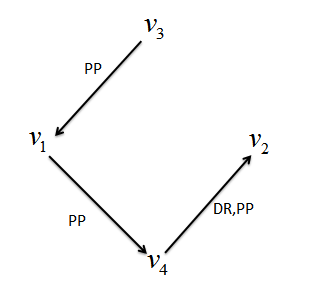}}\\
 (a) & (b)
 \end{tabular}
 {
 \caption{A counter-example of Theorem~\ref{thm:core=Gamma+}:  (a) a constraint network $\Gamma$ over $\mathcal{H}_5$; (b) the core $\Gamma_c$ of $\Gamma$. 
 }\label{25CE}
 }
\end{figure}

Similar to Proposition~\ref{prop:redundant-horn-5}, we can show that the core of an RCC5/8 network over a tractable subclass can be found in $O(n^5)$ time. In the next subsection we show this can be improved if the network is over a distributive subalgebra.
\subsection{A Cubic Time Algorithm for Computing the Core of $\Gamma$}

Suppose $\Gamma$ is {an all-different}  network over a distributive subalgebra of RCC5/8. Proposition~\ref{prop:redun} and Lemma~\ref{lem:pcnetwork-redun} suggest a simple way for computing $\Gamma_c$, the unique prime subnetwork of $\Gamma$. By Proposition~\ref{prop:redun}, a constraint $(v_i R_{ij} v_j)$ in $\Gamma$ is redundant iff the corresponding constraint $(v_i S_{ij} v_j)$ in $\Gamma_p$ is redundant. Furthermore, Lemma~\ref{lem:pcnetwork-redun} shows that $(v_i S_{ij} v_j)$ is redundant in $\Gamma_p$ iff $S_{ij}$ is the intersection of all $S_{ik}\circw S_{kj}$ ($k\not=i,j$). We hereby have the following cubic time algorithm {(Algorithm~\ref{al:ax})} for finding all redundant constraints in $\Gamma$. For each constraint $(v_i S_{ij} v_j)$, to verify if $S_{ij}=\bigcap\{S_{ik}\circw S_{kj}: k\not=i,j\}${, we introduce a temporary relation $Q_{ij}$, which is initially assigned as the universal relation. It is possible that, after just a few intersections of $S_{ik}\circw S_{kj}$ with $Q_{ij}$, the resulting $Q_{ij}$ is already equal to $S_{ij}$, which implies $(v_i S_{ij} v_j)$ is redundant in $\Gamma_p$ and hence $(v_i R_{ij} v_j)$ is redundant in $\Gamma$. }

\begin{algorithm} \label{al:ax}
\caption{Algorithm for finding {all} redundant constraints in {a constraint network} over a distributive subalgebra $\mathcal{S}$ of RCC5/8, where $\star_l$ is the universal relation in RCC$l$.} \BlankLine
\LinesNumbered
{
\KwIn{A consistent {RCC5/8 network} $\Gamma = \{v_i R_{ij} v_j: 1\leq i,j\leq n\}$ over $\mathcal{S}$ { and $V=\{v_i:1\leq i\leq n\}$.}}
\KwOut{$Redun$: the set of redundant constraints of $\Gamma$.}
\BlankLine
$Redun \leftarrow \varnothing$\;
$\Gamma_p\ {\leftarrow}\  \mbox{ the a-closure of }\ \Gamma$\;
\For{each constraint $(v_i S_{ij} v_j) \in \Gamma_p$}
{
$Q_{ij} \leftarrow \star_l$\;
\For{each variable $v_k \in V\setminus\{v_i,v_j\}$}
{
$Q_{ij} \leftarrow Q_{ij}\cap S_{ik}\circw S_{kj}$\;
\If{$Q_{ij} = S_{ij}$}
{
$Redun \leftarrow Redun\cup \{(v_i R_{ij} v_j)\}$\;
break the inner loop;
}
}
}
}
\end{algorithm}

\vspace*{-3mm}
\section{Related Works}
Redundancy checking is an important task in AI research, in particular in knowledge representation and reasoning. For example, Ginsberg \cite{Ginsberg88a} and Schmolze and Snyder \cite{Schmolze99} designed algorithms for checking redundancy of knowledge bases; Gottlob and Ferm{\"u}ller \cite{Gottlob93} and Liberatore  \cite{Liberatore05} analysed the computational properties of removing redundancy from a clause and a CNF formula, respectively; and Grimm and Wissmann \cite{GrimmW11} considered checking redundancy of ontologies. 

In research on constraint satisfaction problems (CSPs), there are also many studies of constraint redundancy. While most of this research concerns redundant modelling (e.g., \cite{ChoiLS07}), Chmeiss et al. \cite{ChmeissKS08} studied redundancy modulo a given local consistency. Their paper is close in spirit to ours. Let $\Gamma$ be a CSP and $\phi$ a local consistency. Chmeiss et al. call a constraint $c$ in $\Gamma$  \emph{$\phi$-redundant} iff $(\Gamma\setminus\{c\} )\cup \{\neg c\}$ is $\phi$-inconsistent. Because path-consistency implies consistency for RCC5/8 constraint networks over their maximal tractable subclasses, our notion of redundancy (when restricted to networks over these tractable subclasses) is equivalent to redundancy modulo path-consistency in the sense of \cite{ChmeissKS08}.

In qualitative spatial reasoning, there are also two works that are close to our research. Egenhofer and Sharma noticed in \cite{EgenhoferS93}  that ``For any scene description, the set of $n^2$ binary topological relations between the $n$ objects is redundant since some of these topological relations are always implied by others.'' They did not provide an efficient algorithm for deriving such a minimal set even for basic topological constraints. Recently, Wallgr{\"u}n \cite{Wallgrun12} proposed two algorithms to approximately find the prime subnetwork. The essence of these algorithms is to replace $R_{ij}$ with the universal constraint if there exists $k$ such that $R_{ik}\circw R_{kj}$ is contained in $R_{ij}$. {As was noted in \cite{Wallgrun12}, neither of these two algorithms is guaranteed to provide the optimal simplification. But it is worth noting that these two approximate algorithms are applicable to general constraint networks which are not necessarily over a distributive subalgebra. In Section~6, we will empirically compare our method with the methods of  Wallgr{\"u}n.}



The property of distributivity was first used by van Beek \cite{Beek89} for IA, but the notion of distributive subalgebra is new. It is not difficult to show that {PA}, IA,  RCC5 and RCC8 all have two maximal distributive subalgebras (see \ref{app:dis} for maximal distributive subalgebras of RCC5/8). Very interestingly, the two maximal distributive subalgebras of IA are exactly the subalgebras $\mathcal{C}_{IA}$ and $\mathcal{S}_{IA}$ discussed in \cite{AmaneddineC12}, where Amaneddine and Condotta proved that $\mathcal{C}_{IA}$ and $\mathcal{S}_{IA}$ are the only maximal subalgebras of IA {over which} path-consistent networks are globally consistent. For RCC8, the maximal distributive subalgebra $\mathcal{D}_{41}^8$ we identify in \ref{app:dis} turns out to be the class of convex RCC8 relations found in \cite{ChandraP05}, where Chandra and Pujari proved that path-consistent networks over $\mathcal{D}_{41}^8$ are minimal. In \ref{app:dis} we find another maximal distributive subalgebra for RCC8, which contains 64 relations. Furthermore, we also show that every path-consistent constraint network $\Gamma$ over a distributive subalgebra is {weakly globally consistent and minimal. This has not been studied for RCC5/8 before.}

\section{{Empirical Evaluation}}
\label{sec:empirical_evaluation}

{
In this section, we empirically evaluate our method in comparison with the methods of Wallgr{\"u}n \cite{Wallgrun12}. In \cite{Wallgrun12}, Wallgr{\"u}n  proposes two greedy algorithms for removing redundant constraints in the constraint network: the \emph{basic} and \emph{extended} simplification algorithms (hereafter Simple and SimpleExt). The Simple algorithm loops through all triples of regions $i$, $j$, and $k$ and identifies as redundant any constraints $R_{ik}$ such that $R_{ij}\diamond R_{jk}\subseteq R_{ik}$. A drawback of the Simple algorithm is that redundant relations removed may affect subsequent iterations of the algorithm.  Hence, the order in which triples are visited by the Simple algorithm can alter the resulting subnetwork. The SimpleExt solves this issue by first marking potentially redundant relations for removal, subject to a consistency check, before removing all marked relations in a final loop. The Simple and SimpleExt algorithms are not guaranteed to provide an optimal solution. Thus, the prime subnetwork is necessarily a (possibly improper) {subnetwork} of that generated by the SimpleExt algorithm, which is in turn a (possibly improper) {subnetwork} of that generated by the Simple algorithm.
}

\begin{figure}[htb]
\centering{
\begin{tabular}{cc}
\includegraphics[width=0.4\textwidth]{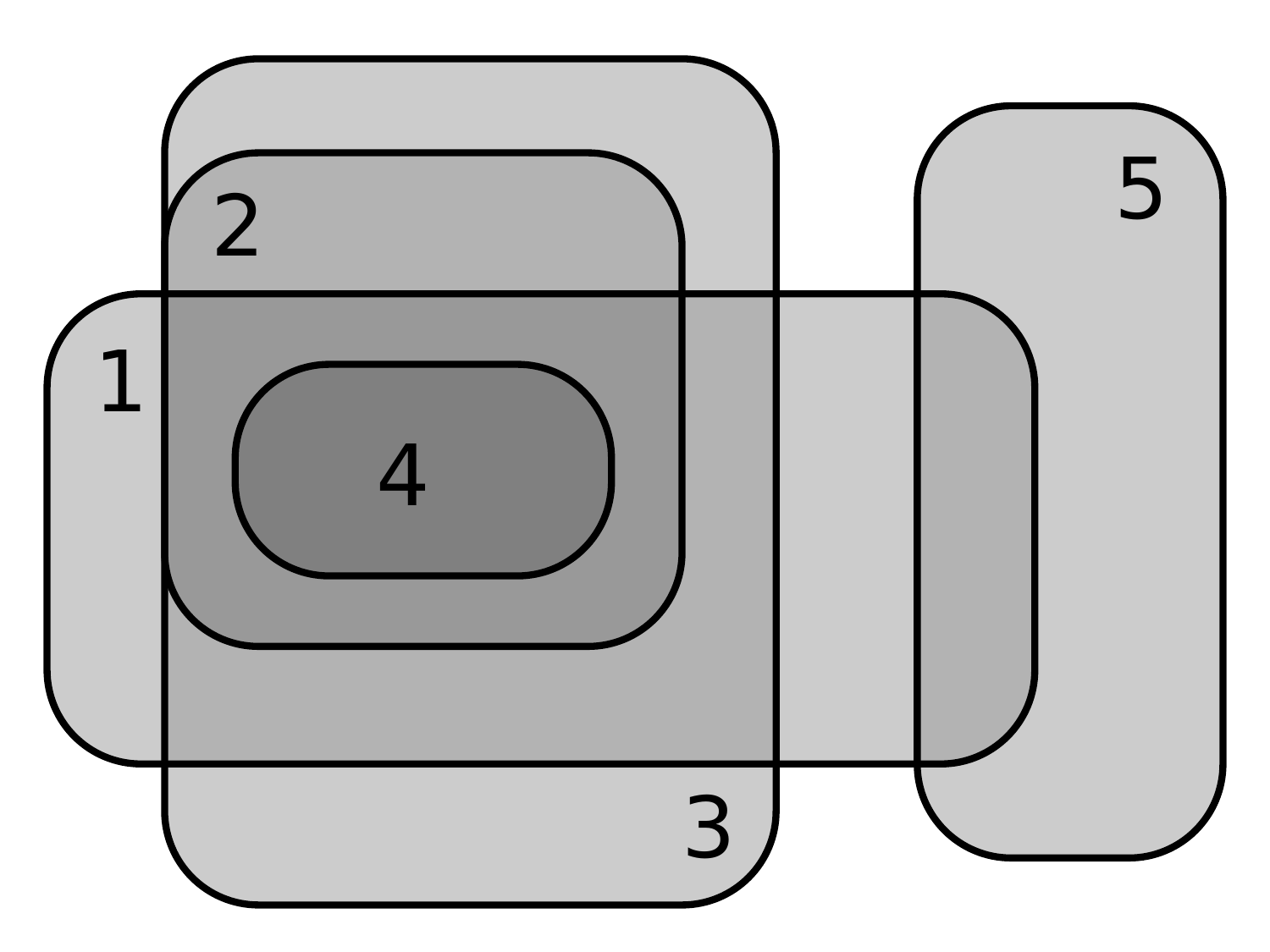}&
\includegraphics[width=0.4\textwidth]{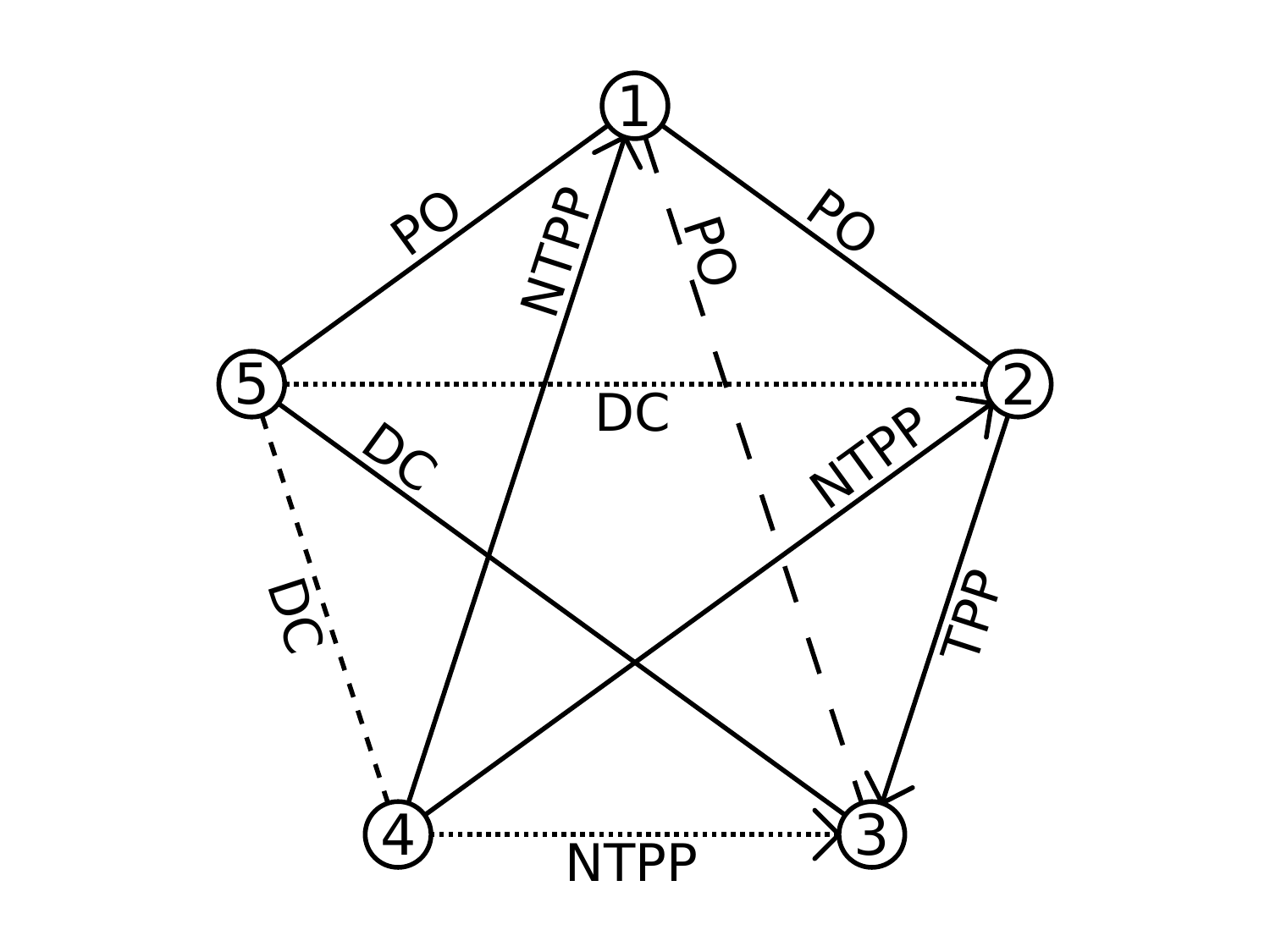}\\
\footnotesize{a.~Example region configuration}. & \footnotesize{b.~Example constraint network}\\
\end{tabular}}
\caption{{Example constraint network illustrating the differences between the prime subnetwork, and the subnetworks generated by the Simple and SimpleExt algorithms \cite{Wallgrun12}. Redundant constraints {found} in the prime subnetwork only are shown with wide dashes; constraints {found} in both the prime and SimpleExt subnetworks are shown with narrow dashes; and constraints {found} in the prime, SimpleExt, and Simple subnetworks are shown with dotted lines.}}\label{fig:example}
\end{figure}

{
Figure \ref{fig:example} shows an example with just five regions, highlighting the constraints identified as redundant in the prime subnetwork and by the Simple and SimpleExt algorithms. Assuming the Simple algorithm visits regions in numerical order, the relations between regions 2 and 5 and between 3 and 4 will be identified as redundant. Additionally, the SimpleExt algorithm is able to identify the relation between 4 and 5 as redundant. However, only in the prime subnetwork is the redundant relation between 1 and 3 identified. 
}

\subsection{{Data Sets}}

{In the following evaluation, two real data sets were used: the {UK} geographic ``footprint" dataset introduced in Section \ref{sec:motivation} (total {3443} regions) and the statistical areas levels 1--4 dataset for Tasmania (in total {1559} regions), provided by the Australian Bureau of Statistics. Derived from social media, the footprint data set contains a variety of regions of differing sizes and shapes, but relatively unstructured sharing almost no adjacent boundaries. In stark contrast the Tasmanian statistical areas are highly structured, made up of four levels of spatially contiguous and nested but non-overlapping regions. To aid in our analysis, {five} subsets of each of the two datasets were generated  {in addition to the full datasets}. The subsets were generated from selecting those regions that intersect an arbitrarily selected spatial region of increasing size. In this way, subsets of data of varying sizes were generated, with {108, 217, 433, 862 and 1725} regions in subsets of the footprint data set, and {49, 98, 193, 374 and 658} regions from the statistical areas set.}

{Subsets of spatially related regions were explicitly used, as opposed to entirely random selection of regions, to ensure that the range of {RCC8} base relations in each subset were representative of the {RCC8} relations in close spatial proximity. The resulting distribution of {RCC8} base relations in the full constraint network for these {10} region subsets {along with their complete dataset} is shown in Table \ref{tab:constraint_proportion}. By design, the relations exhibit systematic variations in the distribution of relations, for example with the statistical areas data set exhibiting consistently higher levels of \bdc\ and lower levels of \bpo\ relations (due to the non-overlapping nature of statistical areas); and smaller subsets exhibiting lower levels of \bdc\ relations (as a result of the smaller spatial area in which regions must fit for the smaller subsets of data). }


\begin{table}[htb]
\centering{\begin{tabular}{c|r|r|r|r|r|r|}
\cline{2-7}
\multicolumn{1}{l|}{}                                     & \multicolumn{1}{c|}{Size} & \multicolumn{1}{c|}{DC}        & \multicolumn{1}{c|}{EC}        & \multicolumn{1}{c|}{PO}        & \multicolumn{1}{c|}{NTPP(I)}   & \multicolumn{1}{c|}{TPP(I)}            \\ \hline
\multicolumn{1}{|c|}{}                                    & 5778                      & \cellcolor[HTML]{C0C0C0}1.1\%  & 0.0\%                          & \cellcolor[HTML]{9B9B9B}85.6\% & \cellcolor[HTML]{C0C0C0}13.3\% & \cellcolor[HTML]{C0C0C0}\textless0.1\% \\
\multicolumn{1}{|c|}{}                                    & 23436                     & \cellcolor[HTML]{9B9B9B}66.9\% & 0.0\%                          & \cellcolor[HTML]{9B9B9B}22.8\% & \cellcolor[HTML]{C0C0C0}10.3\% & \cellcolor[HTML]{C0C0C0}\textless0.1\% \\
\multicolumn{1}{|c|}{}                                    & 93528                     & \cellcolor[HTML]{9B9B9B}26.1\% & 0.0\%                          & \cellcolor[HTML]{9B9B9B}56.7\% & \cellcolor[HTML]{9B9B9B}17.1\% & \cellcolor[HTML]{C0C0C0}\textless0.1\% \\
\multicolumn{1}{|c|}{}                                    & 371091                    & \cellcolor[HTML]{9B9B9B}62.6\% & 0.0\%                          & \cellcolor[HTML]{9B9B9B}30.5\% & \cellcolor[HTML]{C0C0C0}6.9\%  & \cellcolor[HTML]{C0C0C0}\textless0.1\% \\
\multicolumn{1}{|c|}{}                                    & 1486950                   & \cellcolor[HTML]{9B9B9B}78.1\% & 0.0\%                          & \cellcolor[HTML]{9B9B9B}15.2\% & \cellcolor[HTML]{C0C0C0}6.7\%  & \cellcolor[HTML]{C0C0C0}\textless0.1\% \\
\multicolumn{1}{|c|}{\multirow{-6}{*}{\rotatebox[origin=c]{90}{Footprint}}}         & 5925403                   & \cellcolor[HTML]{9B9B9B}92.5\% & 0.0\%                          & \cellcolor[HTML]{C0C0C0}4.8\%  & \cellcolor[HTML]{C0C0C0}2.7\%  & \cellcolor[HTML]{C0C0C0}\textless0.1\% \\ \hline
\multicolumn{1}{|c|}{}                                    & 1176                      & \cellcolor[HTML]{9B9B9B}69.6\% & \cellcolor[HTML]{9B9B9B}20.0\% & 0.0\%                          & \cellcolor[HTML]{C0C0C0}2.0\%  & \cellcolor[HTML]{C0C0C0}8.4\%          \\
\multicolumn{1}{|c|}{}                                    & 4753                      & \cellcolor[HTML]{9B9B9B}87.5\% & \cellcolor[HTML]{C0C0C0}7.0\%  & 0.0\%                          & \cellcolor[HTML]{C0C0C0}1.9\%  & \cellcolor[HTML]{C0C0C0}3.6\%          \\
\multicolumn{1}{|c|}{}                                    & 18528                     & \cellcolor[HTML]{9B9B9B}92.9\% & \cellcolor[HTML]{C0C0C0}4.3\%  & 0.0\%                          & \cellcolor[HTML]{C0C0C0}0.4\%  & \cellcolor[HTML]{C0C0C0}2.4\%          \\
\multicolumn{1}{|c|}{}                                    & 69751                     & \cellcolor[HTML]{9B9B9B}96.7\% & \cellcolor[HTML]{C0C0C0}1.8\%  & 0.0\%                          & \cellcolor[HTML]{C0C0C0}0.7\%  & \cellcolor[HTML]{C0C0C0}0.8\%          \\
\multicolumn{1}{|c|}{}                                    & 216153                    & \cellcolor[HTML]{9B9B9B}98.0\% & \cellcolor[HTML]{C0C0C0}1.1\%  & 0.0\%                          & \cellcolor[HTML]{C0C0C0}0.4\%  & \cellcolor[HTML]{C0C0C0}0.4\%          \\
\multicolumn{1}{|c|}{\multirow{-6}{*}{\rotatebox[origin=c]{90}{Statistical areas}}} & 1214461                   & \cellcolor[HTML]{9B9B9B}99.2\% & \cellcolor[HTML]{C0C0C0}0.5\%  & 0.0\%                          & \cellcolor[HTML]{C0C0C0}0.2\%  & \cellcolor[HTML]{C0C0C0}0.2\%          \\ \hline
\end{tabular}}
\caption{{Table showing the proportion of {RCC8} constraints for the two data sets and across the six region subsets.}}\label{tab:constraint_proportion}
\end{table}

\subsection{{Redundant Constraints}}

{In \cite{Wallgrun12}, the two conjectures are made that: a.~the Simple and SimpleExt are good approximations for removing all redundant relations; and b.~ that the Simple algorithm is in practice almost as good as the SimpleExt algorithm at removing redundant relations. In this section, we compare the three types of subnetwork (prime, SimpleExt, and Simple) in practice and in the context of these conjectures. Figure \ref{fig:RegionsVsRedundant} shows the growth in size of the three types of subnetwork across the six subsets of each of the two data sets. Several features are worth noting in {Figure~}\ref{fig:RegionsVsRedundant}:} 

{
\begin{itemize} 
\item All three subnetworks grow in size approximately linearly with the number of regions (coefficient of determination {$R^2>0.97$} in all cases, indicating a high level of fit between the data and the linear regression). Linear $O(n)$ growth is a lower bound on the space complexity of these subnetworks, since they must remain connected (and so must have at least $n-1$ edges). Thus, this result indicates all three algorithms are approaching optimal scalability in terms of space complexity. The only exception occurs with the Simple subnetwork and in the case of the statistical areas data set, which grows in size quadratically with the number of regions. 
{
\item The prime subnetwork is consistently smaller than the subnetwork generated by the Simple algorithm at all network sizes and is significantly smaller for larger networks. 
\item The SimpleExt subnetwork is significantly larger than the prime subnetwork and of similar size to the Simple subnetwork in the case of the footprint data set, while it is of similar size to the prime subnetwork and significantly smaller than the Simple subnetwork in the case of the statistical areas data set.
}
\end{itemize}
}

{
In summary, neither the Simple nor SimpleExt algorithm can be relied upon to identify as many redundant constraints as the prime subnetwork, although the SimpleExt algorithm may in some cases identify many more redundant constraints than the Simple algorithm (such as the statistical areas data set). 
}

\begin{figure}[htb]
\centering{
\begin{tabular}{cc}
\includegraphics[width=0.47\textwidth]{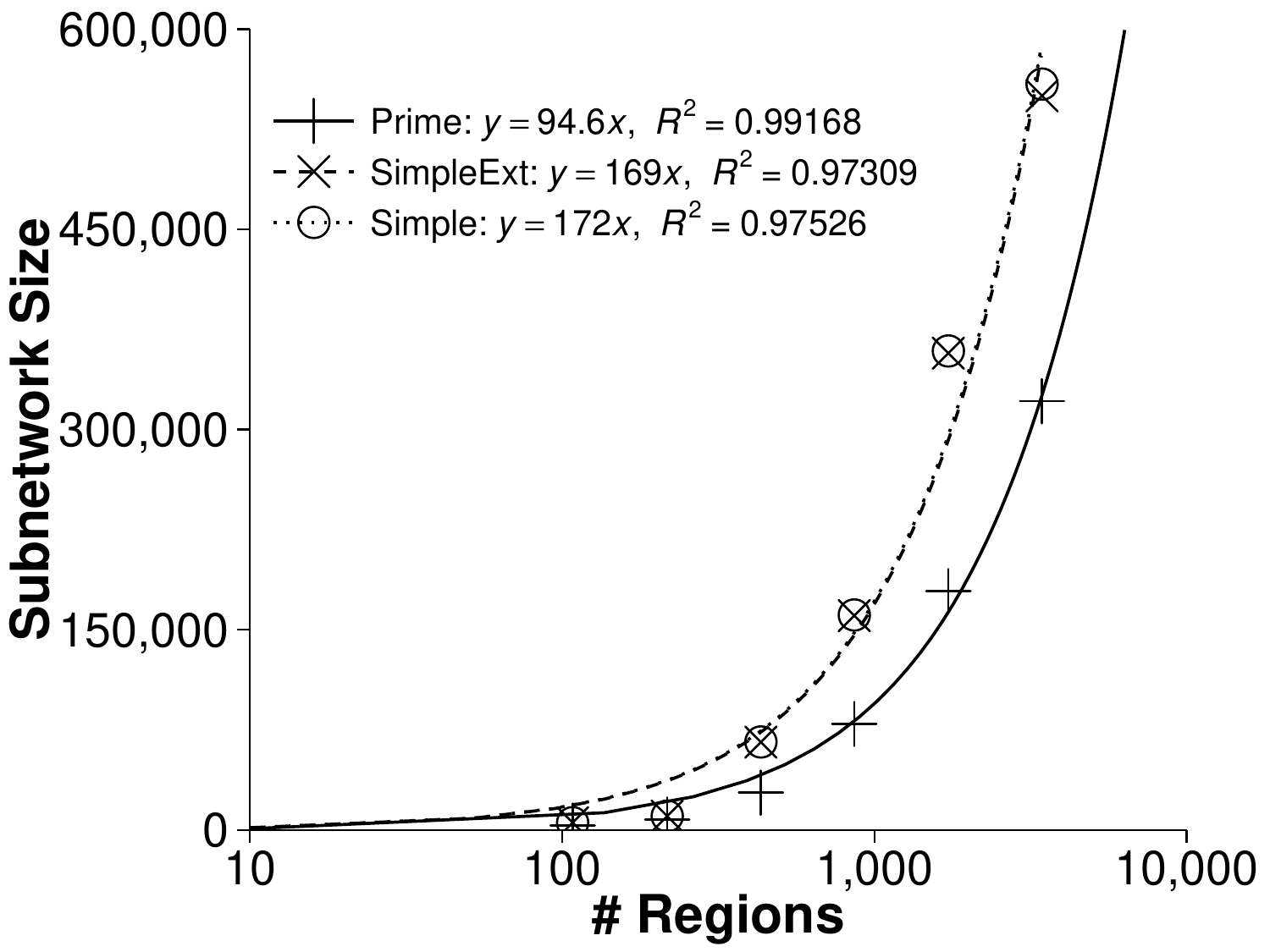}&
\includegraphics[width=0.47\textwidth]{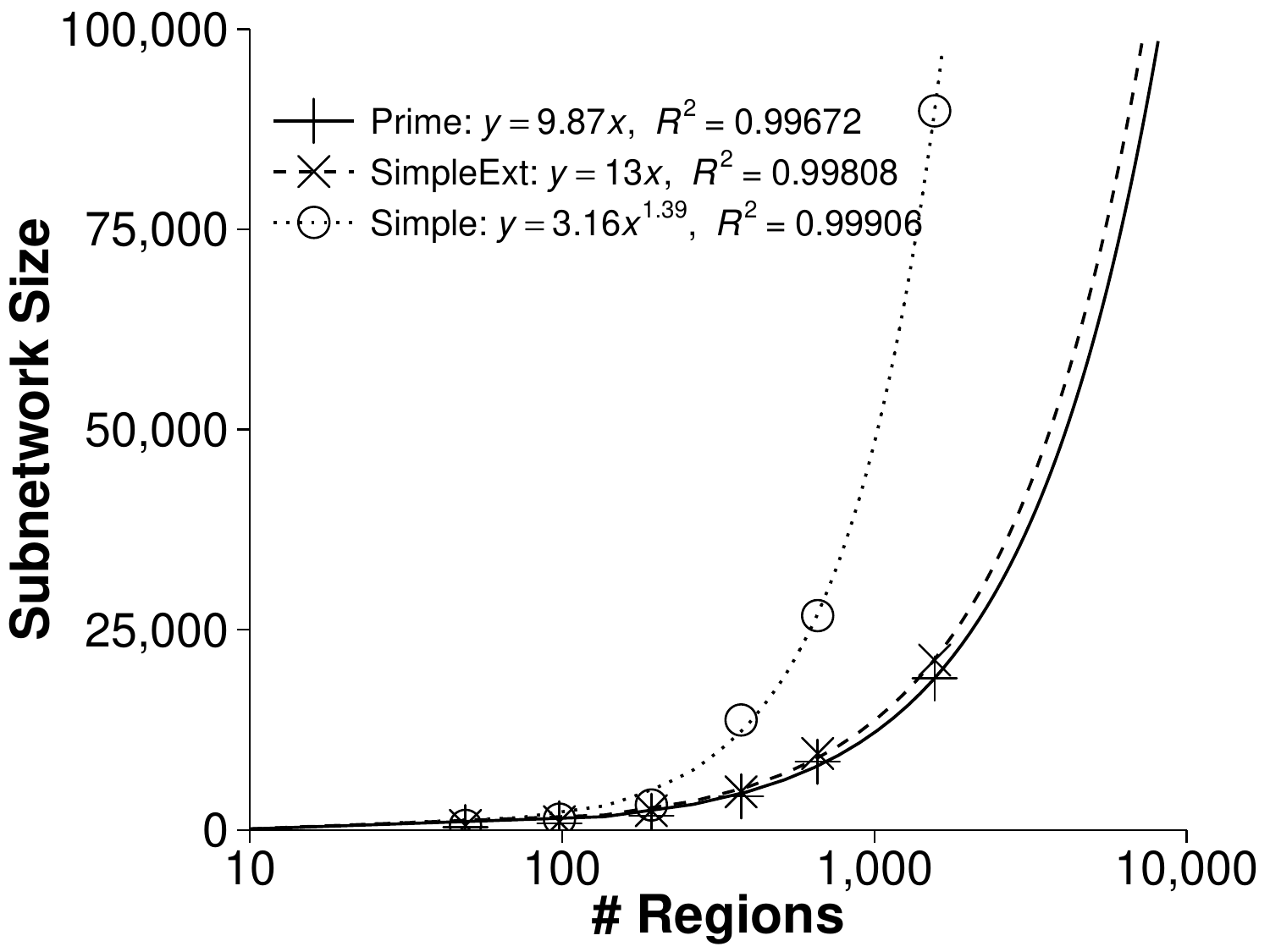}\\
\footnotesize{a.~Footprint dataset} & \footnotesize{b.~Statistical areas dataset.}\\
\end{tabular}}
\caption{{Size of subnetworks (prime, SimpleExt, and Simple) across subsets of a.~footprint and b.~statistical areas data set.}}\label{fig:RegionsVsRedundant}
\end{figure}

{
Further examination reveals that one feature that explains many of the observed differences in results is the differing proportions of \bpo\ relations in the data sets (see Table \ref{tab:constraint_proportion}). Larger proportions of \bpo\ relations are strongly related to fewer redundant relations being identified across all types of subnetwork, since \bpo\ relations typically provide limited reasoning power. Figure \ref{fig:po_redundant} demonstrates this relationship empirically for the footprint data set. However, the prime subnetwork is consistently better at identifying many more redundant relations {than} the Simple or SimpleExt algorithms when the full constraint network contains many \bpo\ relations. 
}

\begin{figure}[htb]
\centering{
\begin{tabular}{cc}
\includegraphics[width=0.5\textwidth]{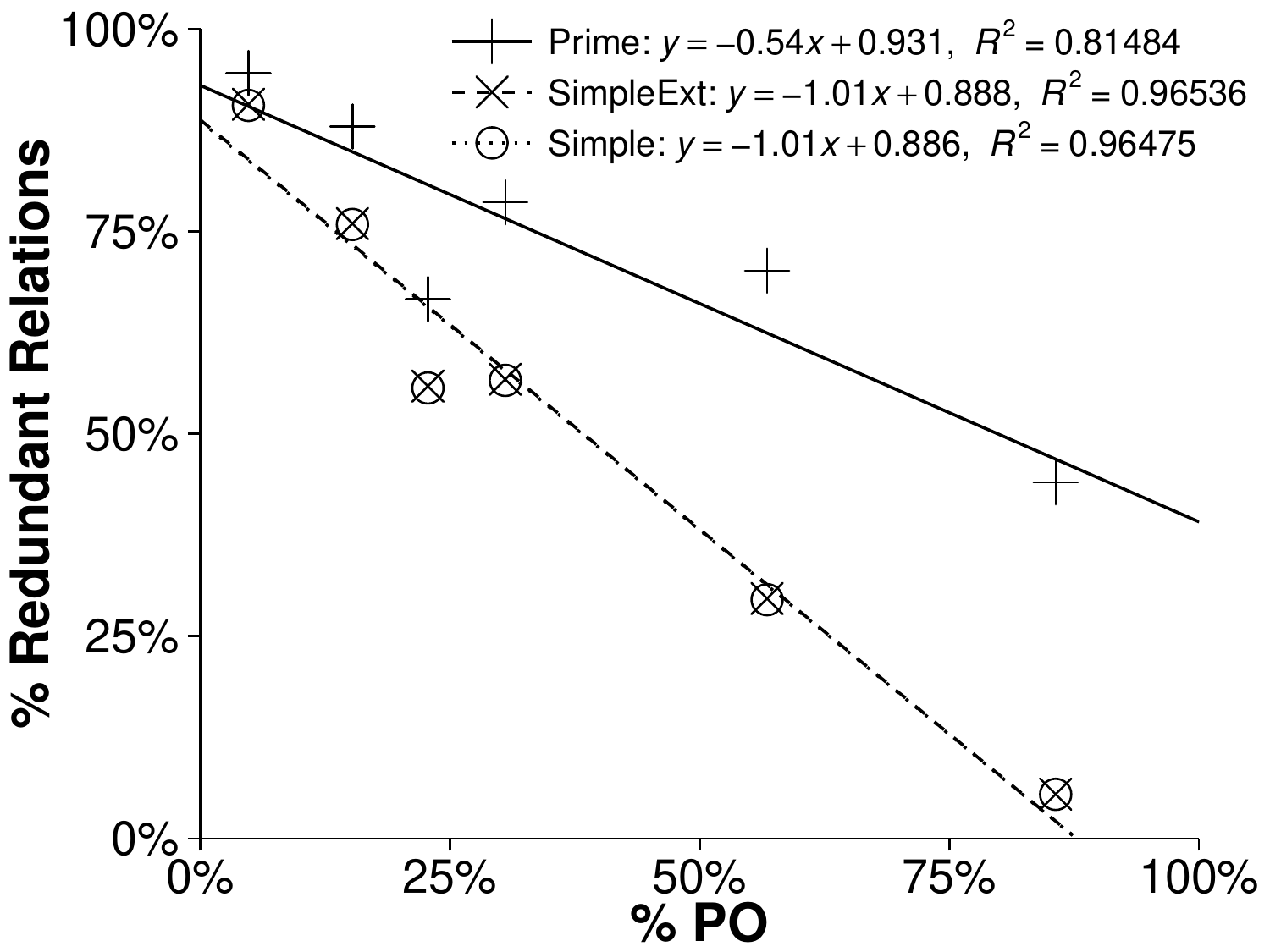}\\
\end{tabular}}
\caption{{Scatterplot of the proportion of partially overlapping relations in the full footprint constraint network, against the proportion of redundant relations identified by the prime, SimpleExt, and Simple algorithms.}}\label{fig:po_redundant}
\end{figure}

\subsection{Scalability}

{As already discussed, Algorithm 1 can compute the prime subnetwork in $O(n^3)$ {time} for any consistent network over a tractable subclass of RCC5/8, where the constraints are taken from a distributive subalgebra. Similarly, the Simple and SimpleExt algorithms must in the worst case visit all triples of regions, leading to overall $O(n^3)$ scalability. }

\begin{figure}[htb]
\centering{
\begin{tabular}{cc}
\includegraphics[width=0.47\textwidth]{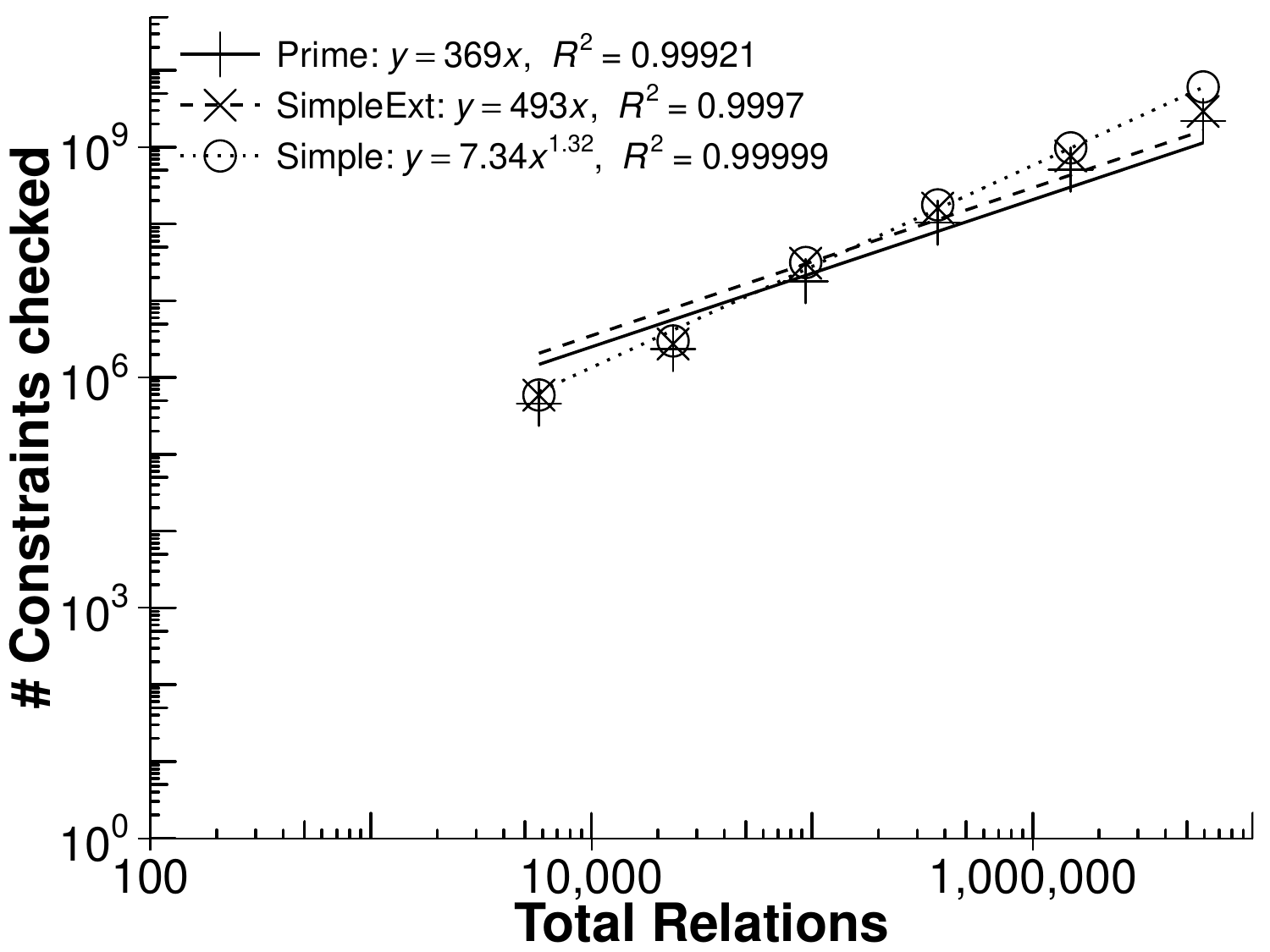}&
\includegraphics[width=0.47\textwidth]{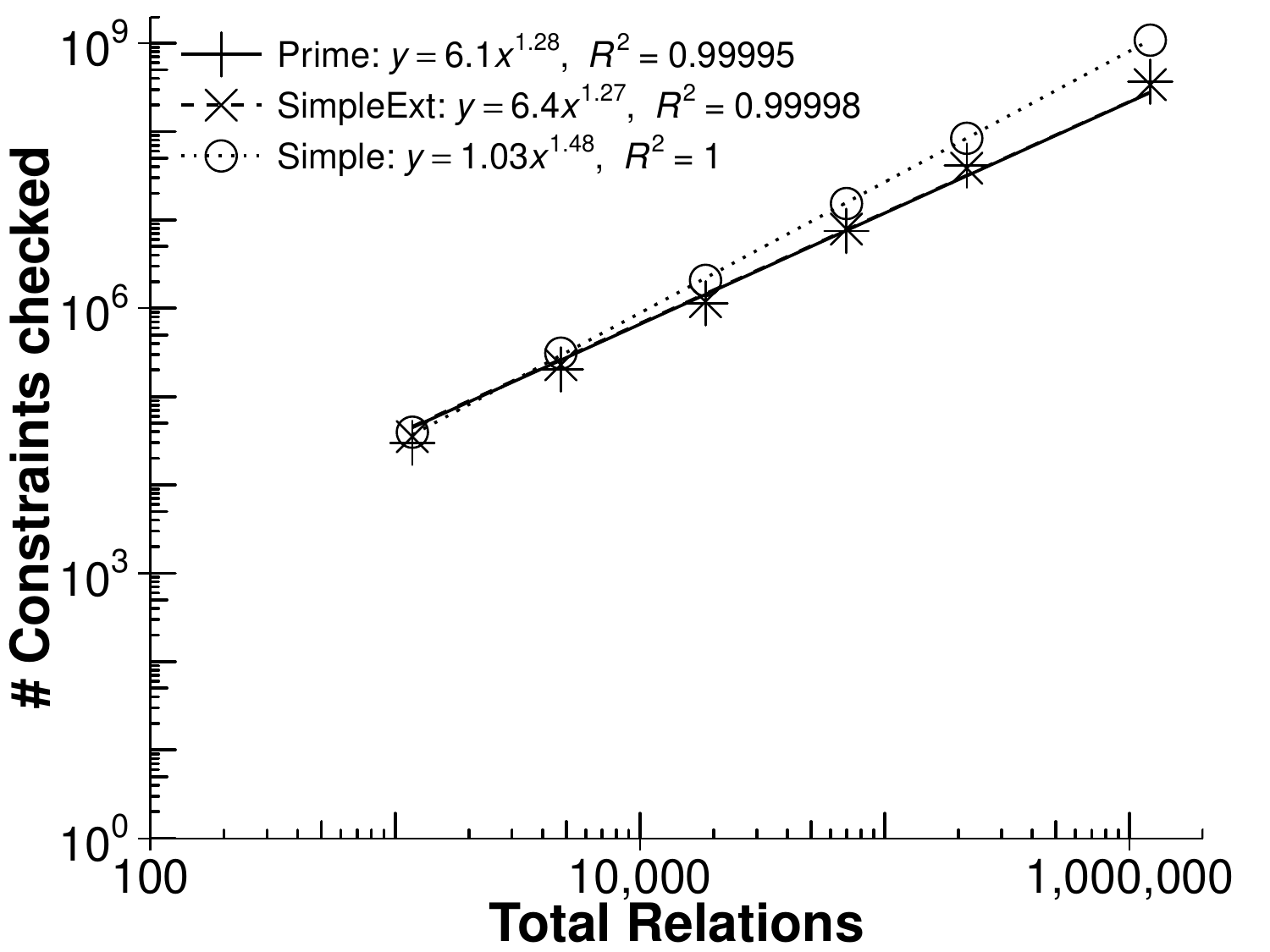}\\
\footnotesize{a.~Footprint dataset} & \footnotesize{b.~Statistical areas dataset.}\\
\end{tabular}}
\caption{{Scalability of prime, SimpleExt, and Simple algorithms, in terms of number of constraints checked across subsets of a.~footprint and b.~statistical areas data set.}}\label{fig:scalability}
\end{figure}

{On average, however, all the algorithms exhibited an average scalability in proportion to $n^2$ (where $n$ is the number of {regions}). Figure \ref{fig:scalability} shows the number of constraints checked by each algorithm, the key determining factor in computation time. All the algorithms increased linearly with the number of constraints (i.e., in proportion to $n^2$), again with the exception of the Simple algorithm {operating on} the statistical areas data set. Indeed, Algorithm 1 was on average slightly more scalable than the other two algorithms. These differences arise because on average those algorithms that are better at identifying redundant constraints are more quickly able to discard those constraints and move on to checking other constraints. } 

\subsection{{Removing Disconnected Constraints}}

{Overall, the prime subnetwork substantially reduced the number of constraints that would need to be stored to be linear in the number of regions (cf. Section \ref{sec:motivation}). Table \ref{tab:proportion} shows the proportion of constraints identified as redundant achieved by the different algorithms in the case of the full data sets, up to {98.44\%} in the case of the prime subnetwork on the highly structured statistical areas data set.\footnote{{From Table~ \ref{tab:proportion}, it can be computed that the size of the Simple subnetwork is 4.737 (1.740, resp.) times of the size of the prime subnetwork in the full statistical areas data set (the full footprint data set, resp.).}}}

\begin{table}[htb]
\centering{\begin{tabular}{r|rr}
& Footprint & Statistical areas \\\hline
Prime		&	94.58\%	&	{98.44\%}\\
SimpleExt	&	90.72\%	&	98.25\%\\
Simple		&	90.57\%	&	{92.61\%}\\
\end{tabular}}
\caption{{Proportion of constraints identified as redundant by the prime, Simple, and SimpleExt algorithms for the full footprint and statistical areas data sets.}}\label{tab:proportion}
\end{table}

{However, in some cases it might potentially {be} possible to achieve similarly high levels of storage efficiency more simply by, say, omitting the most numerous relations (typically \bdc) from the constraint network. Figure \ref{fig:DCvsPrime} shows a scatterplot of the number of constraints in the constraint network omitting \bdc\ relations, against {the number of constraints} in the prime network, both expressed as a percentage of the total number of constraints in the full constraint networks (for each of the 12 data subsets). }

\begin{figure}[htb]
\centering{
\begin{tabular}{cc}
\includegraphics[width=0.5\textwidth]{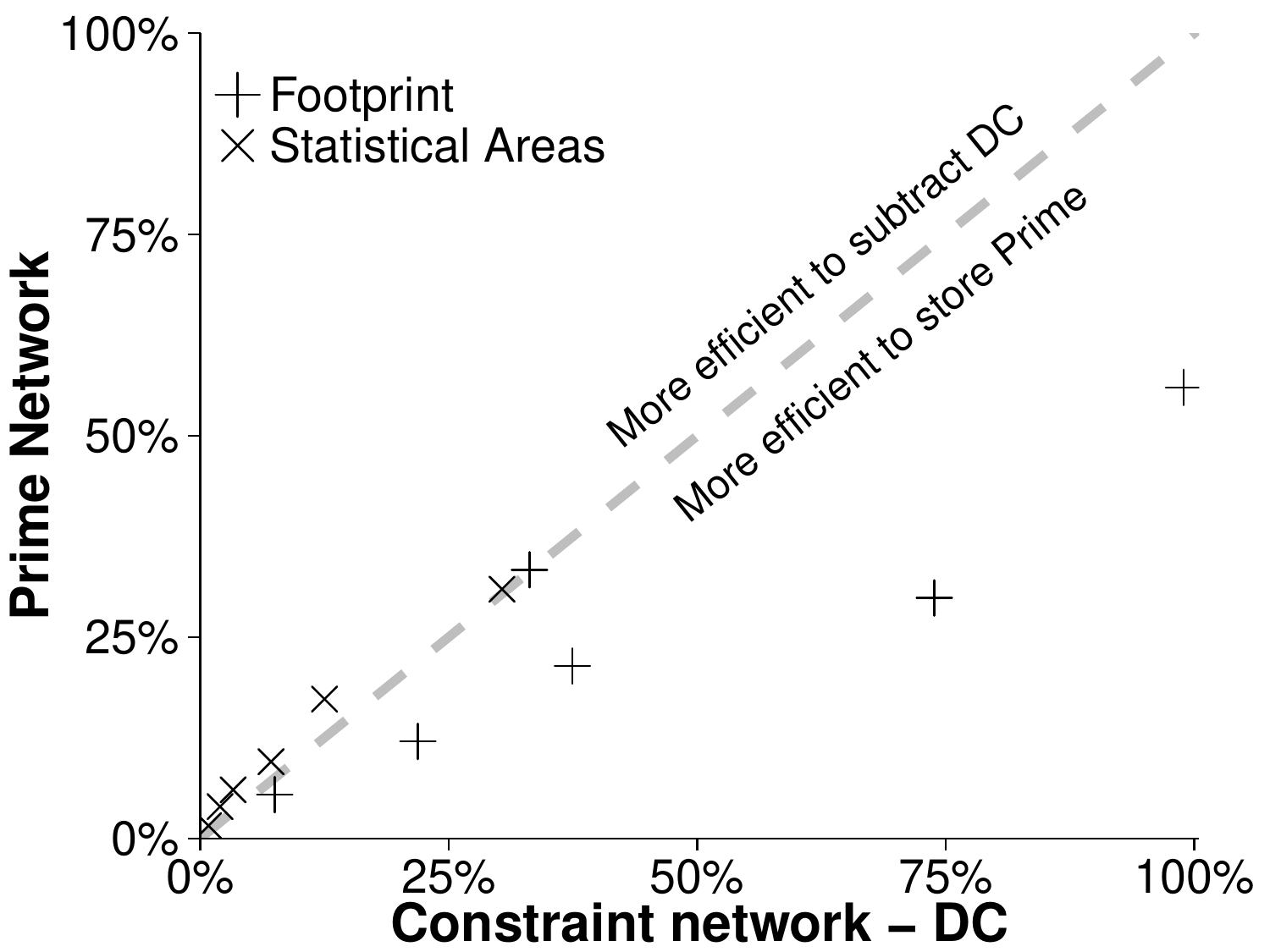}\\
\end{tabular}}
\caption{{Scatterplot of proportion of constraints in the prime network versus proportion of constraints omitting \bdc\ relations with respect to the full constraint network. }}\label{fig:DCvsPrime}
\end{figure}

{The figure shows that in some cases simply storing the constraint network without \bdc\ relations can lead to slightly more constraints omitted (those {above} the diagonal in Figure \ref{fig:DCvsPrime}). In particular, in the statistical areas data set, where the vast majority of relations are \bdc, the number of \bdc\ relations can exceed the number of redundant relations identified by the prime subnetwork. However, in most cases for the less structured footprint data set, the prime subnetwork contains substantially fewer constraints than could be achieved by simply omitting \bdc\ relations (those {below} the diagonal in Figure \ref{fig:DCvsPrime}).  Besides, while simply dropping the \bdc\ relation is competitive space-saver in some cases, it is undesirable when for example the information is incomplete and/or we cannot tell if the relation between two objects is (i) \bdc\ and dropped or (ii) missing or (iii) removed due to redundancy. }    

\subsection{{Reconstituting the Full Network}}

{As already highlighted in Section \ref{sec:motivation}, there are many potential uses for the purely qualitative prime network, without involving geometry, including facilitating the comparison and uncovering the essential structure of different constraint networks. However, one final question we address empirically is the efficiency of reconstructing the full constraint network from the prime subnetwork, when compared with doing so geometrically {if the geometric information is complete and available}. }

{The full constraint network can be reconstructed from the prime subnetwork in $O(n^3)$ time  by computing the {a-closure} of the prime subnetwork. Computing {the} constraint network directly from the geometry requires in the worst case $O(n^2)$ iterations of an $O(m^2)$ algorithm for computing the intersection between two polygons (where $m$ is the number of vertices in the polygon). In cases where $m \approx n$ this can lead to a worst case complexity of the geometric algorithm of $O(n^4)$. {We note that, in our statistical areas data set, the largest polygon contains more than 248,000 vertices, and so $m$ is indeed comparable to $n$}.}

{{However, in practice, by making use of the spatial structure of the data through algorithms (e.g., by checking for non-overlapping minimum bounding boxes for polygons before computing the polygon intersection) and spatial indexes, the geometric algorithm is expected to be on average significantly more efficient.} Figure \ref{fig:reconstitute} compares the scalability of the two approaches, {a-closure} and efficient geometric computation in an indexed spatial database. At least for the smaller data sets tested, computing the {a-closure} is significantly more efficient. For example, in the case of the smallest statistical areas data subset, computing the {a-closure} requires less than 1000th of the time of the geometric computation. However, the figure shows that using the spatial database is significantly more scalable (average-case $O(n)$ time complexity) when compared with the {a-closure} (average case approaching $O(n^3)$ complexity). }

\begin{figure}[htb]
\centering{
\begin{tabular}{cc}
\includegraphics[width=0.47\textwidth]{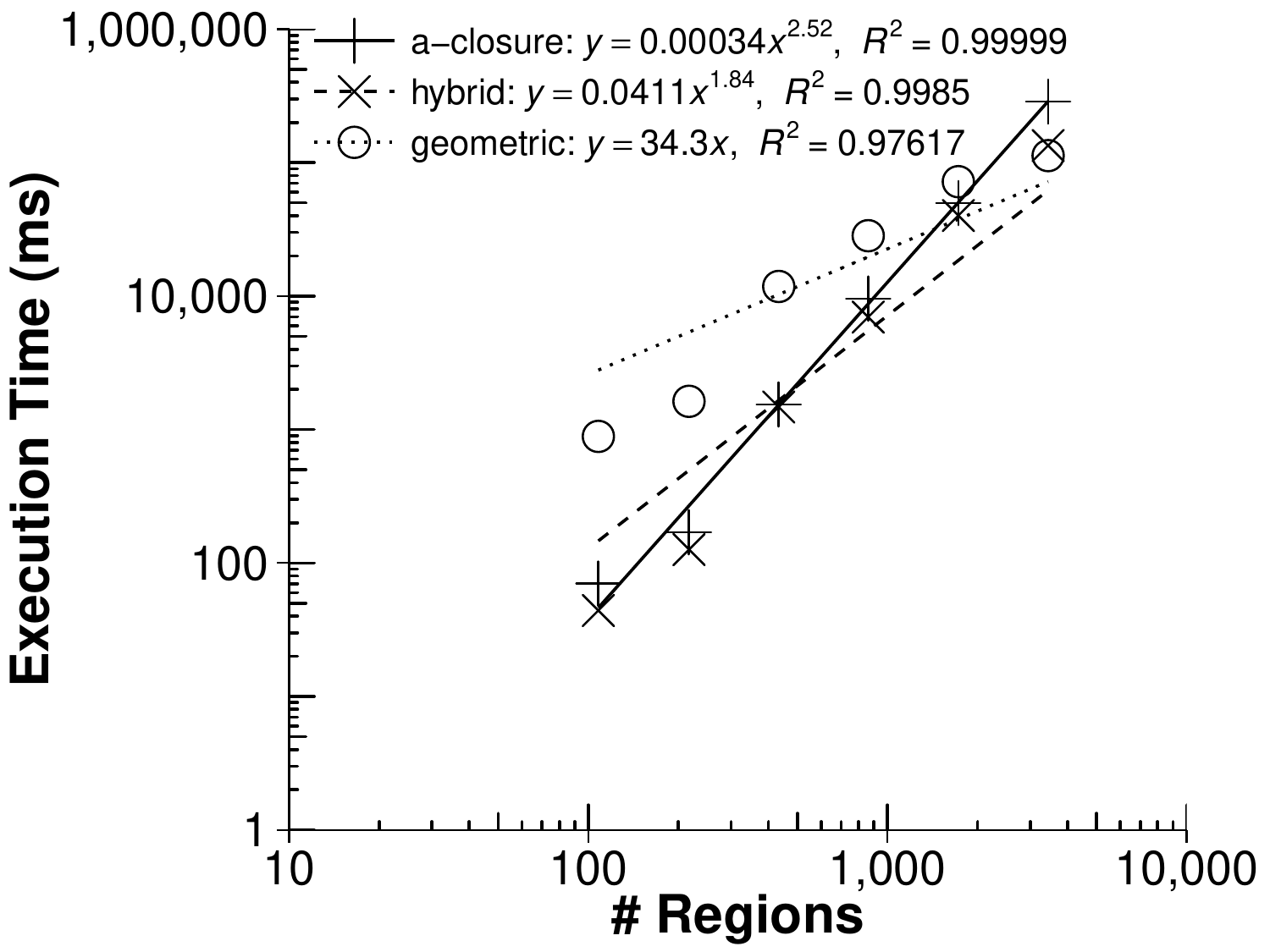}&
\includegraphics[width=0.47\textwidth]{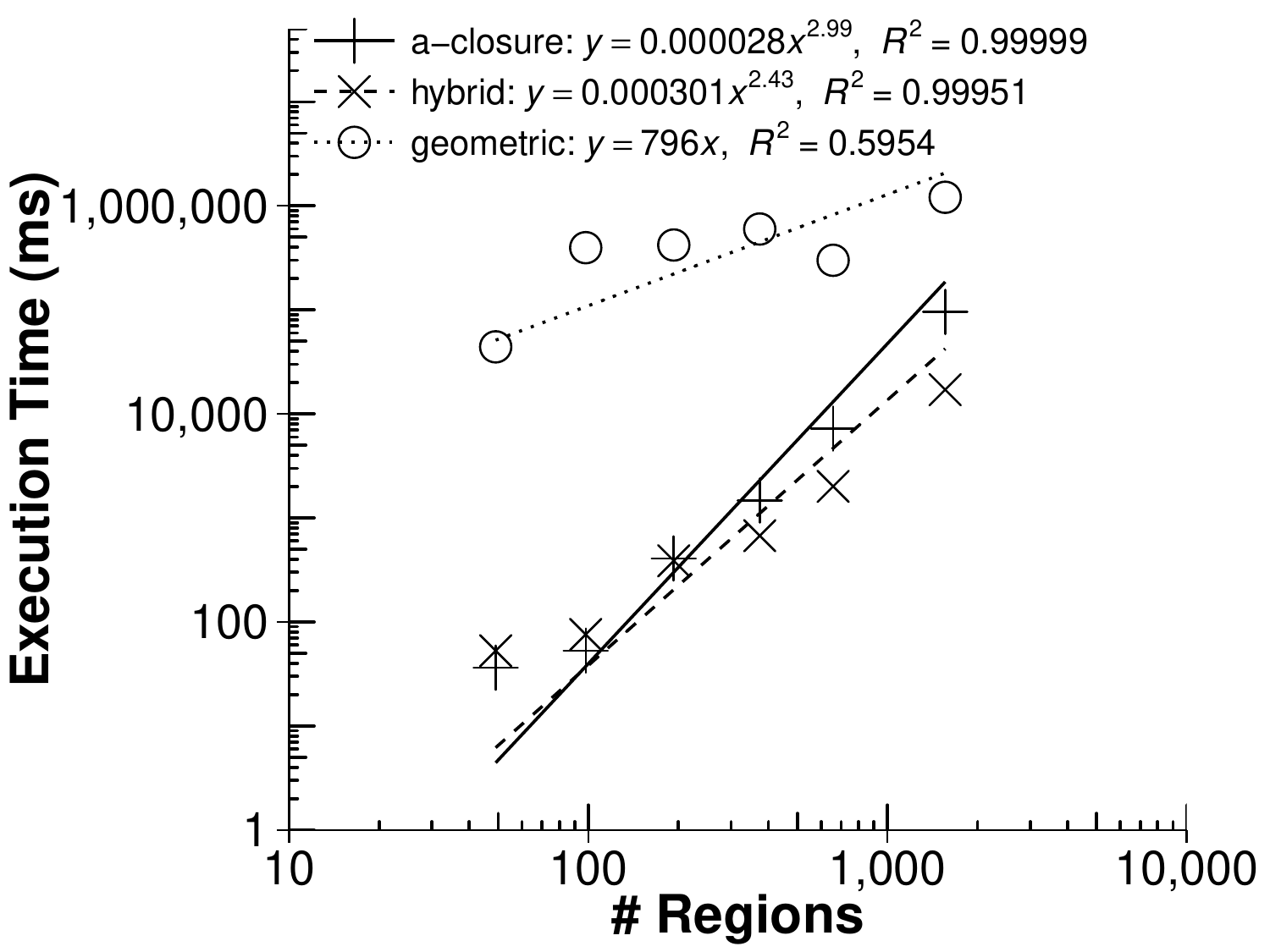}\\
\footnotesize{a.~Footprint dataset} & \footnotesize{b.~Statistical areas dataset.}\\
\end{tabular}}
\caption{{Scalability of reconstituting full constraint network using {a-closure}, efficient geometric computation in a spatial database, and our hybrid algorithm for the footprint and statistical areas data sets.} }\label{fig:reconstitute}
\end{figure}

{Figure \ref{fig:reconstitute} also shows the results of a hybrid reconstitution algorithm, that uses both the geometry and the {a-closure}. The hybrid algorithm first adds any \bdc\ relations to the prime network that can be inferred simply through comparison of the minimum bounding box of the polygon geometry (since non-intersecting minimum bounding boxes for two polygons imply a \bdc\ relations). Then the {a-closure} is computed with this partially reconstituted subnetwork. The results show a significant improvement in scalability using this approach, reducing the average case time complexity to below $O(n^2)$ in the case of the footprint dataset. Ongoing work is currently investigating further mechanisms for combining both these geometric and the qualitative aspects of regions in efficient database storage and queries. }

\subsection{{Summary}} 

{In summary, our analysis of the performance of the three subnetworks on practical geographic data sets containing thousands {of} regions demonstrates:} 

{\begin{enumerate}
\item The prime subnetwork consistently outperforms the Simple and SimpleExt algorithms in terms of the number of redundant relations identified, in particular in cases where the proportion of \bpo\ relations in the full constraint network is higher. 
\item The average case scalability for computing the prime subnetwork required $n^2$ operations, where $n$ is the number of regions. Because the prime subnetwork identified more redundant relations, it performs on average fewer constraint checks than the Simple and SimpleExt algorithms, and was in our tests always more scalable. 
\item {For less structured datasets, the} prime subnetwork can substantially improve on the number of relations identified as redundant, when compared with the naive solution of omitting \bdc\ relations from the full constraint network.
\end{enumerate}}

\section{Conclusion}

In this paper, we have systematically investigated the computational complexity of redundancy checking for RCC5/8 constraints. Although it is in general intractable, we have shown that a prime subnetwork can be found in $O(n^5)$ time for any consistent network over a tractable subclass of RCC5/8. {If the constraints are taken from a distributive subalgebra, we proved that the constraint network has a unique prime subnetwork, which can be found in cubic time. As a byproduct, we also proved that any path-consistent network over a distributive subalgebra is weakly globally consistent and minimal.}

{Our empirical analysis showed that for real geographic data sets the prime subnetwork can lead to significant increases in the number of redundant relations identified when compared with the approximations proposed by  \cite{Wallgrun12}. In practice, the algorithm was efficient, exhibiting average case $O(n^2)$ scalability. The redundant relations identified by the prime subnetwork can also significantly outnumber \bdc\ relations, especially in less structured geographic data sets that may contain a significant minority of \bpo\ relations. 
}

It is worth noting that {a large part of our results} can be applied to several other qualitative calculi (like PA, IA, CRA, and RA) immediately, but Propositions~\ref{prop:redun} and ~\ref{prop:redun+} and Theorem~\ref{thm:core=Gamma+} do use the particular algebraic properties of RCC5/8 {(see Table~\ref{tab:results} for a summary).} For example, we have an all-different and path-consistent basic IA network which is not equivalent to its core.  Future work will consider how to extend our results to {IA, RA and other calculi}.  

\begin{table}[h]
 \centering
\begin{tabular}{c|l|l|l|l}
   & PA & IA & CRA & RA \\ \hline
  Proposition~\ref{prop:ra}& + & + & + & + \\ 
  Proposition~\ref{wgcISmin}& + & + & + & + \\ 
  Lemma~\ref{lemma:distri}& + & + & + & + \\ 
  Lemma~\ref{lem:3x}& + & + & + & + \\ 
  Proposition~\ref{prop:redundant-co-npc}& +s & + & + & + \\ 
  Proposition~\ref{prop:redundant-horn-5}& + & + & + & +p \\ 
  Proposition~\ref{prop:alldiff}& + & + & + & +p \\ 
  Theorem~\ref{thm:wglobal}& + & + & + & +p\\ 
  Theorem~\ref{thm:minimal}& + & + & + & +p\\ 
  Lemma~\ref{lem:int-paths}& + & + & + & +\\ 
  Lemma~\ref{prop:+circle+}& + & + & + & +p\\ 
  Lemma~\ref{cor:intesection-redundant-if}& + & + & + & +\\ 
  Lemma~\ref{cor:intesection-redundant-onlyif}& + & + & + & +p\\ 
  Lemma~\ref{lem:pcnetwork-redun}& + & + & + & +p\\ 
  Lemma~\ref{prop:redun-onlyif}& + & + & + & +\\ 
  Proposition~\ref{prop:redun}& + & - & + & -\\ 
  Proposition~\ref{prop:redun+}& + & - & + & -\\ 
  Theorem~\ref{thm:core=Gamma+}& + & - & + & -\\ 
 \end{tabular}
\hfill
\caption{{Applicability of the results in this paper to other calculi, where + (-) indicates the corresponding result holds  (does not hold) for that calculus, +s indicates that for PA it is tractable to decide if a constraint is redundant, and +p indicates that the result holds for any tractable subclass of RA over which path-consistency implies consistency.}}\label{tab:results}
 \end{table}

{We are also developing further the practical applications of prime subnetworks. In addition to pursuing a more systematic exploration of the applications to saving storage than {the one given in Section 1.1}, current work is investigating other aspects of prime subnetworks, in particular the structure and comparison of different prime subnetworks of sets of footprints. }

\section*{Acknowledgments}

The authors are grateful to Ross Purves, University of Z\"urich, for helping to {develop the} footprint data set illustrated in Figure~\ref{fig:southampton}.  This work was partially supported by ARC (DP120103758, DP120104159) and NSFC (61228305).   

\appendix


\section{Maximal Distributive Subalgebras of RCC5/8} \label{app:dis}
{A distributive subalgebra $\mathcal{S}$ is \emph{maximal} if there is no other distributive subalgebra that properly contains $\mathcal{S}$. To compute the maximal distributive subalgebras, we first compute {$\clb_l$,} the closure of $\mathcal{B}_l$ in RCC$l$ under converse, weak composition, and intersection, and then check by a program if $\clb_l\cup Z$ satisfies distributivity for some subset $Z$ of RCC$l$.}  

{
Write $\mathcal{D}_l$ for the set of RCC$l$ relations $\alpha$ such that $\clb_l\cup\{\alpha\}$ satisfies distributivity. We then check for every pair of relations $\alpha,\beta$ in $\mathcal{D}_l$ if $\clb_l\cup\{\alpha,\beta\}$ satisfies distributivity. If this is the case, then we say  $\alpha$ has d-relation to $\beta$. Fortunately, the result shows that there are precisely two disjoint subsets $X_l$ and $Y_l$ (which form a partition of $\mathcal{D}_l$) such that each relation $\alpha$ in $X_l$ ($Y_l$, respectively) has d-relation to every other relation in $X_l$ ($Y_l$, respectively), but has no d-relation to any relation in $Y_l$ ($X_l$, respectively).  Moreover, $\clb_l\cup X_l$ and $\clb_l\cup Y_l$ are both distributive subalgebras of RCC$l$. It is clear that these are the only maximal distributive subalgebras of RCC$l$. 
}

For RCC5, the closure of basic relations $\clb_5$ contains 12 nonempty relations. These are the five basic relations, and the following 7 relations (cf. Section 2.4.)
\begin{align*}
&\{\bpo,\bpp\},\{\bpo,\bppi\},\{\bpo,\bpp,\bppi,\beq\},\\
&\{\bdr,\bpo,\bpp\},\{\bdr,\bpo,\bppi\},\{\bdr,\bpo\},\star_5.
\end{align*}

The first maximal distributive subalgebra, denoted by $\mathcal{D}^5_{14}$, contains (except relations in $\clb_5$) 
$$\{\bpp,\beq\},\{\bppi,\beq\}.$$
The second maximal distributive subalgebra, denoted by $\mathcal{D}^5_{20}$, contains in addition the following eight relations
\begin{align*}
&\{\bpo,\beq\},\{\bpo,\bpp,\beq\},\{\bpo,\bpp,\bppi\},\{\bpo,\bppi,\beq\},\\
&\{\bdr,\bpo,\bpp,\bppi\},\{\bdr,\bpo,\bppi,\beq\},\\
&\{\bdr,\bpo,\beq\},\{\bdr,\bpo,\bpp,\beq\}.
\end{align*}
It is easy to see that both $\mathcal{D}^5_{14}$ and $\mathcal{D}^5_{20}$ are contained in $\horn_5$, the maximal tractable subclass of RCC5 identified in \cite{RenzN97,Jonsson97}.

For RCC8, the closure of basic relations contains 37 nonempty relations. These are the eight basic relations and the following 29 relations 
\begin{align*}
&\{\bpo,\btpp\},\{\bpo,\btppi\},\{\bpo,\btpp,\bntpp\},\\
&\{\bpo,\btppi,\bntppi\},\{\bpo,\btpp,\btppi,\beq\},\\
&\{\bpo,\btpp,\bntpp,\btppi,\beq\},\\
&\{\bpo,\btpp,\btppi,\bntppi,\beq\},\\
&\{\bpo,\btpp,\bntpp,\btppi,\bntppi,\beq\},\\
&\{\btpp,\bntpp\},\{\btppi,\bntppi\},\\
&\{\bec,\bpo\},\{\bec,\bpo,\btpp\},\{\bec,\bpo,\btppi\},\\
&\{\bec,\bpo,\btpp,\bntpp\},\\
&\{\bec,\bpo,\btppi,\bntppi\},\\
&\{\bec,\bpo,\btpp,\btppi,\beq\},\\
&\{\bec,\bpo,\btpp,\bntpp,\btppi,\beq\},\\
&\{\bec,\bpo,\btpp,\btppi,\bntppi,\beq\},\\
&\{\bec,\bpo,\btpp,\bntpp,\btppi,\bntppi,\beq\},\\
&\{\bdc,\bec\},\{\bdc,\bec,\bpo\},\{\bdc,\bec,\bpo,\btpp\},\\
&\{\bdc,\bec,\bpo,\btppi\},\\
&\{\bdc,\bec,\bpo,\btpp,\bntpp\},\\
&\{\bdc,\bec,\bpo,\btppi,\bntppi\},\\
&\{\bdc,\bec,\bpo,\btpp,\btppi,\beq\},\\
&\{\bdc,\bec,\bpo,\btpp,\bntpp,\btppi,\beq\},\\
&\{\bdc,\bec,\bpo,\btpp,\btppi,\bntppi,\beq\},\star_8,
\end{align*}
where $\star_8$ is the universal relation consisting of all RCC8 basic relations.

The first maximal distributive subalgebra, denoted by $\mathcal{D}^8_{41}$, contains in addition the following four relations
\begin{align*}
&\{\btpp,\beq\},\{\btpp,\bntpp,\beq\},\\
&\{\btppi,\beq\},\{\btppi,\bntppi,\beq\}.
\end{align*}
This distributive subalgebra turns out to be exactly the class of convex RCC8 relations identified in \cite{ChandraP05}. The second maximal distributive subalgebra, denoted by $\mathcal{D}^8_{64}$, contains in addition the following 27 relations
\begin{align*}
& \{\bpo,\beq\},\{\bpo,\btpp,\beq\},\\
&\{\bpo,\btppi,\beq\},\{\bpo,\btpp,\btppi\},\\
&\{\bpo,\btpp,\bntpp,\beq\},\\
&\{\bpo,\btppi,\bntppi,\beq\},\\
&\{\bpo,\btpp,\btppi,\bntppi\},\\
&\{\bpo,\btpp,\bntpp,\btppi\},\\
&\{\bpo,\btpp,\bntpp,\btppi,\bntppi\},\\
&\{\bec,\bpo,\beq\},\{\bec,\bpo,\btpp,\beq\},\\
&\{\bec,\bpo,\btppi,\beq\},\\
&\{\bec,\bpo,\btppi,\bntppi,\beq\},\\
&\{\bec,\bpo,\btpp,\bntpp,\beq\},\\
&\{\bec,\bpo,\btpp,\btppi\},\\
&\{\bec,\bpo,\btpp,\btppi,\bntppi\},\\
&\{\bec,\bpo,\btpp,\bntpp,\btppi\},\\
&\{\bec,\bpo,\btpp,\bntpp,\btppi,\bntppi\},\\
&\{\bdc,\bec,\bpo,\beq\},\{\bdc,\bec,\bpo,\btpp,\beq\},\\
&\{\bdc,\bec,\bpo,\btppi,\beq\},\\
&\{\bdc,\bec,\bpo,\btpp,\btppi\},\\
&\{\bdc,\bec,\bpo,\btppi,\bntppi,\beq\},\\
&\{\bdc,\bec,\bpo,\btpp,\bntpp,\beq\},\\
&\{\bdc,\bec,\bpo,\btpp,\bntpp,\btppi\},\\
&\{\bdc,\bec,\bpo,\btpp,\btppi,\bntppi\},\\
&\{\bdc,\bec,\bpo,\btpp,\bntpp,\btppi,\bntppi\}.
\end{align*}
It is easy to check that both $\mathcal{D}^8_{41}$ and $\mathcal{D}^8_{64}$ are contained in $\hornr$, one of the three maximal subclasses of RCC8 identified in \cite{Renz99}.

\bibliographystyle{plain}

\bibliography{qsr2014}

\begin{thebibliography}{10}

\bibitem{Allen83}
James~F. Allen.
\newblock Maintaining knowledge about temporal intervals.
\newblock {\em Communications of the ACM}, 26(11):832--843, 1983.

\bibitem{AmaneddineC12}
Nouhad Amaneddine and Jean-Fran\c{c}ois Condotta.
\newblock From path-consistency to global consistency in temporal qualitative
  constraint networks.
\newblock In {\em AIMSA}, pages 152--161, 2012.

\bibitem{BalbianiCC99}
P.~Balbiani, J.-F. Condotta, and L.~Fari{\~n}as~del Cerro.
\newblock A new tractable subclass of the rectangle algebra.
\newblock In D.~Dean, editor, {\em Proceedings of the Sixteenth International
  Joint Conference on Artificial Intelligence (IJCAI-99)}, pages 442--447.
  Morgan Kaufmann, 1999.

\bibitem{BelovJLM12}
Anton Belov, Mikol{\'a}s Janota, In{\^e}s Lynce, and Jo{\~a}o Marques-Silva.
\newblock On computing minimal equivalent subformulas.
\newblock In {\em CP}, pages 158--174, 2012.

\bibitem{BodirskyW11}
Manuel Bodirsky and Stefan W{\"o}lfl.
\newblock {RCC}8 is polynomial on networks of bounded treewidth.
\newblock In {\em IJCAI}, pages 756--761, 2011.

\bibitem{Bollobas98}
B\'{e}la Bollob\'{a}s.
\newblock {\em Modern Graph Theory (Graduate Texts in Mathematics
  \textbf{184})}.
\newblock Springer, 1998.

\bibitem{ChandraP05}
Priti Chandra and Arun~K. Pujari.
\newblock Minimality and convexity properties in spatial {C}{S}{P}s.
\newblock In {\em ICTAI}, pages 589--593. IEEE Computer Society, 2005.

\bibitem{ChmeissKS08}
Assef Chmeiss, Vincent Krawczyk, and Lakhdar Sais.
\newblock Redundancy in {CSP}s.
\newblock In {\em ECAI}, pages 907--908, 2008.

\bibitem{ChoiLS07}
Chiu~Wo Choi, Jimmy Ho-Man Lee, and Peter~J. Stuckey.
\newblock Removing propagation redundant constraints in redundant modeling.
\newblock {\em ACM Transactions on Computational Logic}, 8(4), 2007.

\bibitem{CohnR08}
Anthony~G. Cohn and Jochen Renz.
\newblock Qualitative spatial reasoning.
\newblock In F.~van Harmelen, V.~Lifschitz, and B.~Porter, editors, {\em
  Handbook of Knowledge Representation}. Elsevier, 2008.

\bibitem{CohnRS12}
Anthony~G. Cohn, Jochen Renz, and Muralikrishna Sridhar.
\newblock Thinking inside the box: A comprehensive spatial representation for
  video analysis.
\newblock In {\em KR}, pages 588--592, 2012.

\bibitem{CondottaKS08}
Jean-Fran\c{c}ois Condotta, Souhila Kaci, and Nicolas Schwind.
\newblock A framework for merging qualitative constraints networks.
\newblock In David Wilson and H.~Chad Lane, editors, {\em FLAIRS Conference},
  pages 586--591. AAAI Press, 2008.

\bibitem{Davis12}
Ernest Davis.
\newblock Qualitative spatial reasoning in interpreting text and narrative.
\newblock {\em Spatial Cognition and Computation}, 13(4):264--294, 2013.

\bibitem{DavisGC99}
Ernest Davis, Nicholas~Mark Gotts, and Anthony~G. Cohn.
\newblock Constraint networks of topological relations and convexity.
\newblock {\em Constraints}, 4(3):241--280, 1999.

\bibitem{Duntsch05}
Ivo D{\"u}ntsch.
\newblock Relation algebras and their application in temporal and spatial
  reasoning.
\newblock {\em Artificial Intelligence Review}, 23(4):315--357, 2005.

\bibitem{DuntschWM01}
Ivo D{\"u}ntsch, Hui Wang, and Stephen McCloskey.
\newblock A relation-algebraic approach to the region connection calculus.
\newblock {\em Theoretic Computer Science}, 255(1-2):63--83, 2001.

\bibitem{EgenhoferM95}
Max~J. Egenhofer and David~M. Mark.
\newblock Naive geography.
\newblock In A.U. Frank and W.~Kuhn, editors, {\em COSIT-95}, pages 1--15.
  Springer, 1995.

\bibitem{EgenhoferS93}
Max~J. Egenhofer and Jayant Sharma.
\newblock Assessing the consistency of complete and incomplete topological
  information.
\newblock {\em Geographical Systems}, 1(1):47--68, 1993.

\bibitem{Falomir12}
Zoe Falomir.
\newblock Qualitative distances and qualitative description of images for
  indoor scene description and recognition in robotics.
\newblock {\em AI Communications}, 25(4):387--389, 2012.

\bibitem{GereviniS11}
Alfonso Gerevini and Alessandro Saetti.
\newblock Computing the minimal relations in point-based qualitative temporal
  reasoning through metagraph closure.
\newblock {\em Artificial Intelligence}, 175(2):556--585, 2011.

\bibitem{Ginsberg88a}
Allen Ginsberg.
\newblock Knowledge-base reduction: A new approach to checking knowledge bases
  for inconsistency and redundancy.
\newblock In {\em AAAI}, pages 585--589, 1988.

\bibitem{Gottlob93}
Georg Gottlob and Christian~G. Ferm{\"u}ller.
\newblock Removing redundancy from a clause.
\newblock {\em Artificial Intelligence}, 61(2):263 -- 289, 1993.

\bibitem{GrimmW11}
Stephan Grimm and Jens Wissmann.
\newblock Elimination of redundancy in ontologies.
\newblock In {\em ESWC (1)}, pages 260--274, 2011.

\bibitem{Guesgen89}
H.W. Guesgen.
\newblock Spatial reasoning based on {A}llen's temporal logic.
\newblock Technical report, International Computer Science Institute, 1989.

\bibitem{hollenstein10.JOSIS}
Livia Hollenstein and Ross Purves.
\newblock Exploring place through user-generated content: {U}sing {F}lickr to
  describe city cores.
\newblock {\em Journal of Spatial Information Science}, 1(1):21--48, 2010.

\bibitem{HuangLR13}
Jinbo Huang, Jason~Jingshi Li, and Jochen Renz.
\newblock Decomposition and tractability in qualitative spatial and temporal
  reasoning.
\newblock {\em Artificial Intelligence}, 195:140--164, 2013.

\bibitem{Jonsson97}
Peter Jonsson and Thomas Drakengren.
\newblock A complete classification of tractability in {RCC}-5.
\newblock {\em Journal of Artificial Intelligence Research}, 6, 1997.

\bibitem{Kon11}
Roman Kontchakov, Yavor Nenov, Ian Pratt-Hartmann, and Michael Zakharyaschev.
\newblock On the decidability of connectedness constraints in 2{D} and 3{D}
  {E}uclidean spaces.
\newblock In {\em IJCAI}, pages 957--962, 2011.

\bibitem{LiL13}
Jason~Jingshi Li and Sanjiang Li.
\newblock On finding approximate solutions of qualitative constraint networks.
\newblock In {\em ICTAI}, pages 30--37. IEEE, 2013.

\bibitem{LiLW13}
Sanjiang Li, Weiming Liu, and Shengsheng Wang.
\newblock Qualitative constraint satisfaction problems: An extended framework
  with landmarks.
\newblock {\em Artificial Intelligence}, 201:32--58, 2013.

\bibitem{LiY03a}
Sanjiang Li and Mingsheng Ying.
\newblock Region \mbox{C}onnection \mbox{C}alculus: Its models and composition
  table.
\newblock {\em Artificial Intelligence}, 145(1-2):121--146, 2003.

\bibitem{Liberatore05}
Paolo Liberatore.
\newblock Redundancy in logic {I}: {CNF} propositional formulae.
\newblock {\em Artificial Intelligence}, 163(2):203--232, 2005.

\bibitem{Ligozat98}
G{\'e}rard Ligozat.
\newblock Reasoning about cardinal directions.
\newblock {\em Journal of Visual Languages and Computing}, 9(1):23--44, 1998.

\bibitem{LigozatR04}
G{\'e}rard Ligozat and Jochen Renz.
\newblock What is a qualitative calculus? \mbox{A} general framework.
\newblock In C.~Zhang, H.~Guesgen, and W.-K. Yeap, editors, {\em PRICAI-04},
  pages 53--64. Springer, 2004.

\bibitem{LiuL11}
Weiming Liu and Sanjiang Li.
\newblock Reasoning about cardinal directions between extended objects: The
  {N}{P}-hardness result.
\newblock {\em Artificial Intelligence}, 175(18):2155--2169, 2011.

\bibitem{LiuL12}
Weiming Liu and Sanjiang Li.
\newblock Solving minimal constraint networks in qualitative spatial and
  temporal reasoning.
\newblock In {\em CP}, pages 464--479, 2012.

\bibitem{Liu+10}
Weiming Liu, Xiaotong Zhang, Sanjiang Li, and Mingsheng Ying.
\newblock Reasoning about cardinal directions between extended objects.
\newblock {\em Artificial Intelligence}, 174(12-13):951--983, 2010.

\bibitem{Montanari}
Ugo Montanari.
\newblock Networks of constraints: fundamental properties and applications to
  picture processing.
\newblock {\em Information Science}, 7:95--132, 1974.

\bibitem{Nebel95}
Bernhard Nebel.
\newblock Computational properties of qualitative spatial reasoning: First
  results.
\newblock In {\em KI-95}, pages 233--244, Berlin, Germany, 1995.
  Springer-Verlag.

\bibitem{NebelB95}
Bernhard Nebel and H.-J. B{\"u}rckert.
\newblock Reasoning about temporal relations: A maximal tractable subclass of
  \mbox{A}llen's interval algebra.
\newblock {\em Journal of the ACM}, 42(1):43--66, 1995.

\bibitem{RandellCC92}
David~A. Randell, Zhan Cui, and Anthony~G. Cohn.
\newblock A spatial logic based on regions and connection.
\newblock In {\em KR-92}, pages 165--176, 1992.

\bibitem{Renz99}
Jochen Renz.
\newblock Maximal tractable fragments of the region connection calculus: A
  complete analysis.
\newblock In D.~Dean, editor, {\em IJCAI}, pages 448--455. Morgan Kaufmann,
  1999.

\bibitem{RenzN97}
Jochen Renz and Bernhard Nebel.
\newblock On the complexity of qualitative spatial reasoning: A maximal
  tractable fragment of the region connection calculus.
\newblock In {\em IJCAI (1)}, pages 522--527. Morgan Kaufmann, 1997.

\bibitem{RenzN07}
Jochen Renz and Bernhard Nebel.
\newblock Qualitative spatial reasoning using constraint calculi.
\newblock In Marco Aiello, Ian Pratt-Hartmann, and Johan van Benthem, editors,
  {\em Handbook of Spatial Logics}, pages 161--215. Springer, 2007.

\bibitem{Schmolze99}
James~G. Schmolze and Wayne Snyder.
\newblock Detecting redundancy among production rules using term rewrite
  semantics.
\newblock {\em Knowledge-Based Systems}, 12(1-2):3--11, 1999.

\bibitem{SchockaertL12}
Steven Schockaert and Sanjiang Li.
\newblock Convex solutions of {RCC8} networks.
\newblock In {\em ECAI}, pages 726--731, 2012.

\bibitem{ShiJK10}
Hui Shi, Cui Jian, and Bernd Krieg-Br{\"u}ckner.
\newblock Qualitative spatial modelling of human route instructions to mobile
  robots.
\newblock In {\em ACHI}, pages 1--6, 2010.

\bibitem{SridharCH11}
Muralikrishna Sridhar, Anthony~G. Cohn, and David~C. Hogg.
\newblock From video to {RCC8}: Exploiting a distance based semantics to
  stabilise the interpretation of mereotopological relations.
\newblock In {\em COSIT}, pages 110--125, 2011.

\bibitem{stell2000bca}
John~G. Stell.
\newblock {Boolean connection algebras: a new approach to the Region-Connection
  Calculus}.
\newblock {\em Artificial Intelligence}, 122(1):111--136, 2000.

\bibitem{twaroch09.WEAVING}
Florian~A. Twaroch, Christopher~B. Jones, and Alia~I. Abdelmoty.
\newblock Acquisition of vernacular place names from web sources.
\newblock In Ricardo Baeza-Yates Irwin~King, editor, {\em Weaving Services and
  People on the World Wide Web}, pages 195--214, Berlin, 1999. Springer.

\bibitem{Beek89}
Peter van Beek.
\newblock Approximation algorithms for temporal reasoning.
\newblock In {\em IJCAI}, pages 1291--1296, 1989.

\bibitem{VilainK86}
Marc~B. Vilain and Henry~A. Kautz.
\newblock Constraint propagation algorithms for temporal reasoning.
\newblock In {\em AAAI}, pages 377--382, 1986.

\bibitem{Wallgrun12}
Jan~Oliver Wallgr{\"u}n.
\newblock Exploiting qualitative spatial reasoning for topological adjustment
  of spatial data.
\newblock In {\em SIGSPATIAL/GIS}, pages 229--238, 2012.

\bibitem{WallgruenD10}
Jan~Oliver Wallgr{\"{u}}n and Frank Dylla.
\newblock Spatial data integration with qualitative integrity constraints.
\newblock In Ross Purves and Robert Weibel, editors, {\em Online Proceedings of
  the 6th International Conference on Geographic Information Science (GIScience
  2010)}, 2010.

\bibitem{Wolter+08}
Diedrich Wolter, Frank Dylla, Stefan W{\"o}lfl, Jan~Oliver Wallgr{\"u}n, Lutz
  Frommberger, Bernhard Nebel, and Christian Freksa.
\newblock Sailaway: Spatial cognition in sea navigation.
\newblock {\em KI}, 22(1):28--30, 2008.

\bibitem{WolterZ00}
Frank Wolter and Michael Zakharyaschev.
\newblock Spatial reasoning in {RCC}-8 with boolean region terms.
\newblock In {\em ECAI}, pages 244--250, 2000.

\end{thebibliography}
\end{document}